\newcommand{\mca}{\mathcal}
\newcommand{\hhsp}{\-\hspace}
\newcommand*\boxSizeOfMax[1]{\makebox[\widthof{max}][c]{#1}}
\newcommand{\sthat}{\boxSizeOfMax{s.t.}}
\definecolor{wisconsin-red}{rgb}{0.6,0,0}
\definecolor{darkgreen}{rgb}{0.2,0.6,0.2}
\definecolor{maroon}{rgb}{0.5, 0.0, 0.0}
\definecolor{violet}{rgb}{0.75, 0.0, 1.0}
\definecolor{lightgray}{gray}{0.9}
\definecolor{navyblue}{rgb}{0.0, 0.0, 0.35}
\definecolor{darkmidnightblue}{rgb}{0.0, 0.2, 0.4}
\definecolor{Gray}{gray}{0.75}
\definecolor{darkgreen}{rgb}{0,0.5,0}
\newcommand{\policy}{\pi}
\newcommand{\setPolicy}{\Pi}
\newcommand{\bePt}{b}
\newcommand{\vectBePt}{\boldsymbol{\bePt}}
\newcommand{\timeHorizon}{T}
\newcommand{\genericState}{X}
\newcommand{\genericAction}{Y}
\newcommand{\setTime}{\mca{T}}
\newcommand{\setState}{\mca{S}}
\newcommand{\setAction}{\mca{A}}
\newcommand{\setObs}{\mca{O}}
\newcommand{\beSimp}{\mathcal{B}(\setState)}
\newcommand{\transitionProbs}{P}
\newcommand{\observationProbs}{Z}
\newcommand{\rewardVals}{R}
\newcommand{\rewFinal}{R}
\newcommand{\indTime}{t}
\newcommand{\indState}{i}
\newcommand{\indStateNew}{j}
\newcommand{\indStateNewNew}{\indState'}
\newcommand{\indAction}{a}
\newcommand{\indObs}{o}
\newcommand{\indGrid}{k}
\newcommand{\indGridNew}{\ell}
\newcommand{\parfRew}{w}
\newcommand{\parfCost}{c}
\newcommand{\parfObs}{z}
\newcommand{\parfTrp}{p}
\newcommand{\parfGridTr}{f}
\newcommand{\setGridIndex}{\mca{K}}
\newcommand{\setGrid}{\mca{G}}
\newcommand{\setGridEval}{\bar{\setGrid}}
\newcommand{\numGrids}{N}
\newcommand{\numGridSets}{m}
\newcommand{\indGridSet}{\iota}
\newcommand{\grPt}{g}
\newcommand{\ugrPt}{\grPt'}
\newcommand{\vectGrPt}{\boldsymbol{\grPt}}
\newcommand{\ugrPtvect}{\boldsymbol{\ugrPt}}
\newcommand{\resoVal}{\rho}
\newcommand{\parBudgetLim}{B}
\newcommand{\parDiscount}{\lambda}
\newcommand{\funcOptV}{V}
\newcommand{\funcOptC}{C}
\newcommand{\funcApxV}{\hat{\mca{\funcOptV}}}
\newcommand{\funcApxC}{\hat{\mca{\funcOptC}}}
\newcommand{\varPrimalLP}{u}
\newcommand{\varDualLP}{x}
\newcommand{\parBeDistn}{\delta}
\newcommand{\varDetPol}{\theta}
\newcommand{\setAlpha}{\Gamma}
\newcommand{\setGridAlpha}{\hat{\Gamma}}
\DeclareMathOperator{\lbV}{\funcApxV_{\text{LB}}}
\DeclareMathOperator{\ubV}{\funcApxV_{\text{UB}}}
\DeclareMathOperator{\LPV}{\funcApxV_{\text{LP}}}
\DeclareMathOperator{\MIPV}{\funcApxV_{\text{MIP}}}
\DeclareMathOperator{\SimV}{\funcApxV_{\text{Sim}}}
\DeclareMathOperator{\expCost}{C}
\DeclareMathOperator{\expVal}{V}
\newcommand{\stBelief}{\vectBePt^0}
\numberwithin{theorem}{section}
\newcommand{\xmark}{No}
\newcommand{\cmark}{Yes}
\begin{document}\sloppy
\title{Linear programming-based solution methods for constrained partially observable Markov decision processes}
\titlerunning{LP-based solution methods for constrained POMDPs}
\author{Robert K. Helmeczi \and Can Kavaklioglu \and Mucahit Cevik}

\institute{
Robert K. Helmeczi \at Toronto Metropolitan University, Toronto, Canada\\
\and
Can Kavaklioglu \at Toronto Metropolitan University, Toronto, Canada\\
\and
Mucahit Cevik \at 
Toronto Metropolitan University, Toronto, Canada\\
\email{mcevik@ryerson.ca}
}

\date{Received: date / Accepted: date}

\maketitle

\begin{abstract}
Constrained partially observable Markov decision processes (CPOMDPs) have been used to model various real-world phenomena. 
However, they are notoriously difficult to solve to optimality, and there exist only a few approximation methods for obtaining high-quality solutions.
In this study, grid-based approximations are used in combination with linear programming (LP) models to generate approximate policies for CPOMDPs.
A detailed numerical study is conducted with six CPOMDP problem instances considering both their finite and infinite horizon formulations.
The quality of approximation algorithms for solving unconstrained POMDP problems is established through a comparative analysis with exact solution methods.
Then, the performance of the LP-based CPOMDP solution approaches for varying budget levels is evaluated.
Finally, the flexibility of LP-based approaches is demonstrated by applying deterministic policy constraints, and a detailed investigation into their impact on rewards and CPU run time is provided.
For most of the finite horizon problems, deterministic policy constraints are found to have little impact on expected reward, but they introduce a significant increase to CPU run time.
For infinite horizon problems, the reverse is observed: deterministic policies tend to yield lower expected total rewards than their stochastic counterparts, but the impact of deterministic constraints on CPU run time is negligible in this case.
Overall, these results demonstrate that LP models can effectively generate approximate policies for both finite and infinite horizon problems while providing the flexibility to incorporate various additional constraints into the underlying model.

\end{abstract}
\keywords{Constrained POMDPs, linear programming, integer programming, approximation methods}

\section{Introduction}
Markov decision processes (MDPs) model sequential decisions over a finite or infinite number of discrete time intervals, known as decision epochs. 
In MDPs, the system state is fully observable, but there is uncertainty associated with the state transitions. 
Specifically, the effects of actions/decisions on the current state are characterized by transition probability values.
As the system state changes and actions are taken, some transitions accrue rewards according to a reward mechanism. 
Solving an MDP problem constitutes determining an optimal policy, which defines the actions for all states in the decision process. 
Because of the relatively simple model structure, it is possible to solve large MDP models (e.g., number of states in the order of tens of thousands) using dynamic programming methods.
For even larger problem instances, reinforcement learning strategies have been successfully employed to generate approximate policies.

Not all problems consist of fully observable state spaces. 
In many real-world problems, directly observing the current system state is not possible.
For instance, sensor noise in robotics or inconclusive diagnostic tests in health care problems contribute to ambiguities about the true state of the process. Partially observable MDPs (POMDPs) extend the MDP framework with a set of observations and observation probabilities that provide informative signals about the underlying system state.
POMDP models are used in a wide range of applications such as finding victims using UAV images~\citep{bravo2019use}, spoken dialogue systems~\citep{young2013pomdp} and the robotic manipulation of objects~\citep{pajarinen2017robotic}.

Real-world environments that are modelled by using MDPs often involve trade-offs between the expected value and the cost. 
While these two quantities can be amalgamated into a single reward function by using cost definitions as negative terms and value definitions as positive terms, a more rigorous solution is to formulate such problems as constrained POMDPs (CPOMDPs). 
CPOMDPs extend POMDPs with cost constraints that are defined separately from rewards. The resulting policies are designed to maximize the expected reward while conforming to resource limitations.
Typically, CPOMDPs extend unconstrained POMDPs with a cost function, which specifies the amount of resources required to take a particular action in a given state at a given time; and a budget term, which stipulates the total amount of resources available throughout the decision process. 
The budget term specifies an upper limit on the expected total cost incurred by the generated policy.
There are numerous examples in recent literature that use CPOMDPs to solve budget-restricted decision problems.
These examples include wideband spectrum sensing~\citep{jiang2017finding}, patient treatment design~\citep{gan2019personalized}, and robot navigation~\citep{wray2022scalable}.

In this study, grid-based approximations for CPOMDPs are used to formulate linear programming (LP) models, and various constraints are incorporated into the decision-making process.
The contributions of this study can be summarized as follows:
\begin{itemize}\setlength\itemsep{0.3em}
    \item LP models are formulated to generate approximate policies for both finite and infinite horizon CPOMDPs.
    Specifically, grid-based approximations are employed and transition probabilities between the grid points are characterized using an iterative approach.
    
    \item Six CPOMDP instances are introduced as adaptations of popular unconstrained POMDP test problems from the literature. 
    These instances provide a diverse test suite for empirical analysis and provide a means for comparing against \citet{poupart2015}'s constraint-based approximate linear programming (CALP) algorithm for CPOMDPs.
    The ensuing analysis sheds light on the effectiveness of approximate solution methods for CPOMDPs.
    
    \item A simulation mechanism is designed to test the performance of CPOMDP policies. The corresponding results provide the average collected rewards along with the level of adherence to budget limits. 
    The simulation results show that grid-based approximations can lead to a certain degree of violation of the constraints in some problem instances, which is consistent with the findings of previous studies on CPOMDPs~\citep{poupart2015}. 
    
    \item The flexibility of LP-based approaches is demonstrated by incorporating different types of constraints into the model formulation, including deterministic policy constraints and threshold-type policy constraints.
    These constraints can be highly relevant in some domains (e.g., in healthcare) and their characterization is important for the practical use of CPOMDP models.
    Accordingly, a detailed investigation into the impact of these constraints on CPOMDP rewards and run times for both finite and infinite horizons problems is provided. 
    This analysis reveals the feasibility of producing deterministic policies, and also outlines their impact on collected rewards using a direct comparison with their randomized counterparts.
\end{itemize}

The rest of the paper is organized as follows. 
Section~\ref{cpomdp:sec:lit_rev} provides a review of the literature on CPOMDPs.  
Sections~\ref{cpomdp:sec:finite} and~\ref{cpomdp:sec:infinite} provide mathematical descriptions of CPOMDPs as well as the proposed solution methods for finite and infinite horizon CPOMDP problems, respectively.
Section~\ref{cpomdp:numerical_experiments} summarizes the results of numerical experiments with the CPOMDP problem instances. 
Section~\ref{cpomdp:conclusion} concludes the paper with a summary of the empirical findings and a discussion on future work.

\section{Literature Review}
\label{cpomdp:sec:lit_rev}
This section provides an overview of MDPs, POMDPs and CPOMDPs, and reviews the recent advances on these methodologies. 
We refer the reader to \citet{puterman2014markov} for the technical foundations of MDPs, and \citet{spaan2012partially} for a thorough overview of POMDPs.

The solution to an MDP problem is a policy that specifies a decision rule corresponding to a probability distribution over the action space for all the states and decision epochs~\citep{Cassandra1995}.
Typically, Bellman optimality equations are solved to obtain such policies by using value iteration, policy iteration, or linear programming~\citep{puterman2014markov}.
\citet{alagoz2015optimally} empirically demonstrate that linear programming can be used to optimally solve many MDPs significantly faster than standard dynamic programming methods, such as value iteration and policy iteration, while also requiring less memory, and allowing for problems with a higher number of states to be solved to optimality.

In recent studies, MDPs have been used to model many real-world problems that involve excessively large state and action spaces.
Reinforcement learning (RL) has emerged as an effective approach for such problems. Model-free methods such as Q-learning and deep Q-networks, which do not require a full description of MDP model components, have been shown to achieve state-of-the-art performance as approximation methods~\citep{sutton2018reinforcement}.
Some other studies focused on developing exact solution methods for novel MDP models that are designed for complex problems.
For instance, in some domains, there exist multiple perspectives on the same phenomenon, resulting in distinct datasets for the same problem. 
These problems can be better explained by using multiple models simultaneously rather than a single model. 
Multi-model MDPs were proposed as a novel framework to solve such problems as they allow generating an optimal policy based on the weights assigned to each model~\citep{steimle2021multi}. 
Solving multi-model MDPs typically involves mixed-integer LP formulations.
Accordingly, recent studies on multi-model MDPs focus on mathematical optimization techniques such as branch-and-cut and policy-based branch-and-bound methods to improve the solvability of MDP models~\citep{steimle2021decomposition, ahluwalia2021policy}.

While standard MDP models only involve a generic reward component, it is also possible to have certain costs associated with the actions and states in the decision process.
Many real-world problems include a trade-off between available resources and expected rewards. 
Constrained MDPs can be used to incorporate budget considerations into the decision process by employing a cost function which specifies the amount of resources required to take a particular action.
There are several approaches in the literature that extend MDP models to handle costs as a separate entity. 
One approach is to develop customized linear programs that have constraints specifically designed to address any number of cost functions. This approach was employed by \citet{mclay2013dispatching} in an infinite horizon setting for the ambulance dispatch problem.
\citet{ayvaci2012effect} formulate a constrained MDP model for the finite-horizon breast cancer biopsy decision-making problem.
They used linear programming to solve their constrained MDP model at varying budget levels to capture the impact of limited resources on the resulting policies that prescribe when to recommend a patient for biopsy.

POMDP models relax the completely observable system state assumption in MDPs. 
As such, the actions and policies in POMDPs are determined based on the probability of being in a particular state. 
Hence, the values are only calculated for belief states, which are defined as probability distributions over the core state space.
POMDPs are important modeling tools for complex problems in which actions guide the system trajectory in a desired manner. 
POMDPs have applications in many domains including machine maintenance and replacement~\citep{Maillart2006}, inventory control~\citep{Treharne2002} and cancer screening~\citep{Ayer2011a, Erenay2014}.

The search for optimal POMDP policies is further complicated by the addition of observations to the model parameters, as well as the transition to continuous state space. 
\citet{sondik1971optimal} show that optimal POMDP value functions are piece-wise linear and convex, and can be written as a dot product of belief states and $\alpha$-vectors that represent the values at each state.
This representation has facilitated various exact solution methods such as Monahan's exhaustive enumeration algorithm~\citep{Monahan1982} and the incremental pruning (IP) algorithm~\citep{cassandra1998exact}.
However, these exact solution methods can typically solve only small problem instances with a few core states and actions.
Accordingly, most POMDP solution approaches rely on approximation mechanisms.
Grid-based approximations are among the most commonly used methods, and they can be used to reduce a POMDP to an MDP~\citep{sandikci2010reduction}.
\citet{Lovejoy1991} show how to use grid-based approximations to generate lower bounds and upper bounds on the optimal values obtained by POMDP policies.
In a recent study, \citet{kavaklioglu2022scalable} compare distributed, parallel, and sequential implementations of \citet{Lovejoy1991}'s lower bound and upper bound methods.

Many approximation methods for POMDPs with large state spaces focus on point-wise evaluations of the value functions over a set of grid points that correspond to specific belief states.
However, using a fixed set of grid points often fails to produce high-quality approximations, especially when the identified grid set cannot adequately describe the belief space. 
\citet{Pineau2006a}'s point-based value iteration (PBVI) algorithm proposes the exploration of the value function only at select belief points. New belief points are considered depending on the updates performed on the selected belief points, which determines the transitions between these points. 
Another popular point-based approach, proposed by \citet{silver2010monte}, relies on the Monte Carlo tree search (MCTS) algorithm. 
MCTS only considers sampled transitions rather than all possible transitions, which helps alleviate computational complexity by only considering a simulated subset of the environment. 
In practice, MCTS methods are able to quickly converge to a good solution. 
Parallel implementations allow MCTS algorithms to scale up depending on the problem size and the requested solution quality. 
While MCTS has been shown to converge to optimal solutions in theory, they fail to generate exact solutions in practice, and they do not provide any bounds on the optimal values.
There are several libraries which provide a standardized method of defining POMDP problems and utilize the previously developed solution algorithms to find optimal/approximate policies.
For instance, POMDPs.jl~\citep{egorov2017pomdps} provides access to the implementations of the IP and MCTS algorithms, among others. 

Similar to the relationship between MDP and constrained MDP problems, CPOMDPs extend POMDP models with the inclusion of cost and cost/budget constraints. 
CPOMDP solution algorithms attempt to maximize the expected reward without exceeding the available budget. 
Earlier studies for CPOMDPs focused on dynamic programming-based approaches, where separate $\alpha$-vectors are generated for cost and reward functions to keep track of the rewards and the costs in the decision process~\citep{Kim2011}.
This approach is computationally expensive as the number of $\alpha$-vectors increases exponentially. As a result, the authors also proposed using an approximate point-based method that keeps track of the reward and cost accrued by each belief point. 
These $\alpha$-vectors can be used to calculate the cost values for the generated solutions, which can then be bounded by a constraint to identify policies that adhere to the specified constraints.

Another popular approach to solving POMDP problems is to repeatedly solve linear programs that search for optimal actions at each decision epoch. 
The resulting LP models can also be augmented with additional constraints to ensure adherence to the cost requirements. 
One such solution approach is the CALP algorithm proposed by~\citet{poupart2015}. 
CALP reduces a POMDP to an MDP by defining a finite set of belief states $\setGrid$, where each $\vectBePt\in \setGrid$ acts as a state in the resulting MDP. 
The MDP is then modeled using an LP with a cost constraint and an associated budget.
An alternative approach is to use a column generation approach, where each column of the LP model represents a separate CPOMDP policy~\citep{walraven2018column}. 
In this method, the LP formulation ensures that the generated policies adhere to the model constraints. 
The CPOMDP version of the MCTS algorithm, developed by~\citet{lee2018monte}, shows promising results for large core state spaces. 
This approach searches for policies that satisfy a given cost constraint or produce no result if it fails to identify a policy that satisfies the constraint.

There are several applications of CPOMDPs in healthcare which involve enforcing problem-specific constraints.
\citet{cevik2018analysis} propose a finite-horizon CPOMDP model that maximizes a patient's quality-adjusted life years (QALYs) while limiting the number of mammographies taken in the patient's lifetime.
They use mixed-integer linear programming methods to solve their CPOMDP models.
\citet{gan2019personalized} develop CPOMDP models to investigate personalized treatment strategies for opioid use disorder.
They consider wearables designed to measure patient cravings and adapt patient treatment plans accordingly.
In their model, the state space includes several levels of patient cravings, drug relapse, detoxification, and death/withdrawal from treatment.
As the cravings cannot be quantified with certainty, this set of states is only partially observable. 
At discrete intervals, the probability of a patient occupying each state is updated based on the information gathered by the wearable. 
The costs in their model account for the expenses associated with the wearables, as well as the patient treatments.
They propose an approximation technique for solving their CPOMDP model, based on~\citet{cassandra1998exact}'s incremental pruning algorithm, which first dualizes the constraints to the value functions (hence creating an unconstrained formulation), then solves the value functions by assuming large $\epsilon$-values in their convergence check.

\citet{jiang2017finding} employ CPOMDPs for wideband spectrum sensing. 
In this problem, the spectrum is quantized into a set of subbands and at each decision epoch, a set of subbands is chosen to be accessed. 
This set undergoes a detection process in which each subband is detected as idle or busy. 
The detection process is stochastic in nature: there is some probability that a busy subband will be detected as idle and vice-versa. This problem is constrained by associating a cost with the sensing time for each subband, and the objective is to maximize the expected throughput.
\citet{wray2022scalable} formulate a robot navigation problem using CPOMDPs.
They adapt their model to a practical problem in home healthcare delivery, where the objective is to create a robot navigation policy for patient monitoring or delivering medication reminders. 
In the constrained version of this problem, certain rooms involve extra costs which discourage or prevent the robot from entering them while searching for its target.

CPOMDPs are relevant in many other domains including transportation (e.g., autonomous vehicles) and manufacturing.
For instance, \citet{celen2020integrated} study a manufacturing problem where the objective is to determine the sequence in which different product types are released into a manufacturing process so that the correct number of each product is manufactured within a given time frame.
This problem involves a set of manufacturing stations, where each station is capable of performing a subset of all possible manufacturing operations, and where the production of a particular product type may involve multiple manufacturing operations.
The problem is further complicated by the fact that each station can degrade, with the state of each station being partially observable.
The authors propose a POMDP model for this problem where the cost of repairing manufacturing stations is incorporated into the reward function.
On the other hand, a CPOMDP model can also be considered for this problem in which the objective can be defined as producing the requested number of each product, subject to a budget constraint on the number or cost of maintenance operations.
Similarly, POMDPs/CPOMDPs can also be employed for other manufacturing problems, such as assembly line balancing, where uncertainties involving random task arrivals and equipment damages need to be taken into account to ensure that manufacturing goals are met~\citep{yilmaz2020integrated, yilmaz2021tactical}.

Table~\ref{tab:lit-review} summarizes the most related studies to our work.
The existing studies are categorized based on supported decision horizons, the types of policies generated by the algorithms, whether budget adherence is guaranteed (i.e., feasible), and their main methodology.
Specifically, finite horizon problems require policies which execute actions at a predetermined number of decision epochs, after which the decision process terminates. 
Conversely, infinite horizon problems never terminate, and the problems are solved to obtain policies for the steady state. 
In practice, many problems with an initially unknown number of decision epochs may be modeled as infinite horizon problems. 
\citet{cevik2018analysis} and~\citet{gan2019personalized} provide practical use cases for finite horizon problems in healthcare.

\begin{table}[!ht]
\centering
\caption{Summary of the relevant literature on CPOMDPs.}
\label{tab:lit-review}
\resizebox{\textwidth}{!}{\begin{tabular}{lllllll}
\toprule
    & \multicolumn{2}{c}{Horizon} & \multicolumn{2}{c}{Policy-Type}& Guarantees &\\
    \cmidrule(lr){2-3} \cmidrule(lr){4-5} \cmidrule(lr){6-6}
    Research &  Finite & Infinite & Stochastic & Deterministic &  Feasible & Methodology \\
\midrule
    \citet{Kim2011} & \xmark & \cmark & \cmark & \xmark & \cmark &  PBVI \\ 
    \citet{poupart2015} & \xmark & \cmark & \cmark & \xmark & \xmark &  LP \\
    \citet{walraven2018column} & \cmark & \xmark & \cmark & \xmark & \xmark &  CG \\
    \citet{lee2018monte} & \xmark & \cmark & \cmark & \cmark & \xmark &  MCTS \\ 
    \citet{gan2019personalized} & \cmark & \xmark & \cmark & \cmark & \cmark  & IP \\ 
    \citet{wray2022scalable} & \xmark & \cmark & \cmark & \xmark & \xmark  & GA\\ 
\midrule
Our Study & \cmark & \cmark & \cmark & \cmark &  \xmark  & LP\\ 
\bottomrule
\end{tabular}}
    \begin{tablenotes}
      \footnotesize
      \item IP: Incremental Pruning, CG: Column Generation, MCTS: Monte-Carlo Tree Search, GA: Gradient Ascent, LP: Linear Programming, PBVI: Point-Based Value Iteration
    \end{tablenotes}
\end{table}

The ability to generate stochastic policies as well as deterministic ones is beneficial in the CPOMDP domain. 
This is because, unlike unconstrained POMDPs, there is no guarantee that there exists an optimal deterministic policy for a CPOMDP problem. 
Thus, when there is no restriction on the allowed policy type, stochastic policies have the potential to yield a substantially higher reward. 
However, deterministic policies, despite being suboptimal, are sometimes preferred as they facilitate implementation in practical settings. For example, in the medical domain, choosing a treatment plan for identical patients at random is generally not allowed.

Many CPOMDP algorithms iteratively work towards an optimal solution. 
Such algorithms typically show that the final converged solution will be feasible with respect to the imposed budget limit. 
In practice, however, the iterations are generally stopped early (e.g., after some time limit), and the resulting intermediate policy may not be feasible. 
Table~\ref{tab:lit-review} shows that many of the existing methods do not guarantee returning a feasible CPOMDP policy.

\section{Finite Horizon CPOMDPs}\label{cpomdp:sec:finite}
Discrete-time finite horizon POMDP models aim to maximize expected total rewards over a given decision horizon. A finite horizon unconstrained POMDP is defined as the 7-tuple $\langle\setTime, \setState, \setAction, \setObs, \transitionProbs, \observationProbs, \rewardVals\rangle$ where $\setTime = \{1,2, \hdots, \timeHorizon\}$ represents the set of decision epochs, $\setState$ represents the set of states, $\setAction$ represents the set of actions, and $\setObs$ represents the set of observations.
The last three components govern the uncertainty in a POMDP, with $\transitionProbs$, $\observationProbs$, and $\rewardVals$ corresponding to the transition probabilities, observation probabilities, and rewards in the decision process, respectively.
Furthermore, the probability of making a transition from state $\indState \in \setState$ to $\indStateNew \in \setState$ at time $\indTime \in \setTime$ based on the action $\indAction \in \setAction$ is given by $\parfTrp_{\indState \indStateNew}^{\indTime\indAction}$.
Similarly, the probability of making observation $\indObs \in \setObs$ after making a transition to state $\indStateNew \in \setState$ at time $\indTime \in \setTime$ for action $\indAction \in \setAction$ is $\parfObs_{\indStateNew \indObs}^{\indTime\indAction}$.
Lastly, the reward accrued for taking action $\indAction \in \setAction$ in state $\indState \in \setState$ at time $\indTime \in \setTime$ is $\parfRew_{\indState\indAction}^{\indTime}$.

Because the system states are not fully observable in a POMDP, a belief state $\vectBePt = [\bePt_{0}, \bePt_{1},\hdots, \bePt_{|\setState|}]$ is defined to estimate the underlying system state.
A belief state is a probability distribution over the state space, with $\bePt_{\indState}$ representing the probability of occupying state $\indState \in \setState$. 
The set of all belief states is known as the belief space, and it can be represented as $\beSimp = \big\{ \vectBePt\ |\ \sum_{\indState \in \setState} \bePt_{\indState} = 1, \ \bePt_{\indState} \geq 0, \ \indState \in \setState \big\}$.
At decision epoch $\indTime\in \setTime$, after taking action $\indAction \in \setAction$ and making observation $\indObs \in \setObs$, the belief state $\vectBePt$ (i.e., $\vectBePt_{\indTime}$) is updated to $\vectBePt'$ (i.e., $\vectBePt_{\indTime+1}$), where the $\indStateNew$th component of $\vectBePt'$ can be calculated using Bayes' rule as follows:
\begin{equation}
\bePt'_{\indStateNew}  = \frac{\sum_{\indState \in \setState} \bePt_{\indState} \ \parfObs_{\indStateNew \indObs}^{\indTime \indAction} \ \parfTrp_{\indState \indStateNew}^{\indTime \indAction}}{\sum_{\indState \in \setState}\sum_{\indStateNewNew \in \setState} \bePt_{\indState} \ \parfObs_{\indStateNew \indObs}^{\indTime \indAction} \ \parfTrp_{\indState \indStateNewNew}^{\indTime \indAction}} 
\label{eq:bayesian-update}
\end{equation}

\subsection{Unconstrained POMDPs}
The expected total reward for a given starting belief state $\stBelief$ in a discrete-time finite horizon unconstrained POMDP can be obtained as
\begin{equation}
\label{eq:unconstrainedPOMDPModel}\max_{\policy \in \setPolicy}  \ E_{\stBelief}^{\policy} \big[\sum_{\indTime=0}^{\timeHorizon-1} \parfRew_{\indTime}(\genericState_{\indTime}, \genericAction_{\indTime}) + \parfRew_{\timeHorizon}(\genericState_{\timeHorizon}) \big]
\end{equation}
where $\setPolicy$ represents the policy space, $\parfRew_{\indTime}(\genericState_{\indTime}, \genericAction_{\indTime})$ corresponds to the generic reward function for action $\genericAction_{\indTime}$ and state $\genericState_{\indTime}$ at time $\indTime$ and $w_\timeHorizon(\genericState_{\timeHorizon})$ is its counterpart for the final decision epoch $\timeHorizon$.
The optimization problem given in \eqref{eq:unconstrainedPOMDPModel} is intractable. As a result, unconstrained POMDPs are typically solved using recursive Bellman optimality equations, which are defined as follows:
\begin{subequations}\label{eq:bellman-equations}
\begin{align}
\ & \ \funcOptV_{\timeHorizon}^*(\vectBePt) = \displaystyle \sum_{\indState \in \setState} \bePt_{\indState}\rewFinal_{\indState}, \quad \vectBePt \in \beSimp\label{eq:bellman-T-optimal} \\[1.6ex]
\ & \ \funcOptV_{\indTime}^*(\vectBePt)=\max_{\indAction\in \setAction}\left\{\funcOptV_{\indTime}^{\indAction}(\vectBePt)\right\}, \quad \indTime < \timeHorizon, \ \vectBePt \in \beSimp\label{eq:bellman-t-optimal}\\[1.6ex]
\ & \ \hhsp{-0.1cm} \funcOptV_{\indTime}^{\indAction}(\vectBePt)=  \sum_{\indState \in \setState} \bePt_{\indState} \parfRew_{\indState}^{\indTime \indAction}  + \sum_{\indState \in \setState}\bePt_{\indState} \bigg( \sum_{\indObs \in \setObs} \sum_{\indStateNew \in \setState} \parfObs_{\indStateNew \indObs}^{\indTime \indAction} \ \parfTrp_{\indState \indStateNew}^{\indTime \indAction}\ \funcOptV_{\indTime+1}^*(\vectBePt') \bigg), \quad \begin{matrix}\indAction \in \setAction,\\\indTime < \timeHorizon,\\ \vectBePt \in \beSimp\end{matrix}\label{eq:bellman-iteration-step}
\end{align}
\end{subequations}
Because there are infinitely many belief states, it is not feasible to directly solve these optimality equations that represent the POMDP value functions.
Exact solution methods for unconstrained POMDPs such as Monahan's enumeration algorithm and the incremental pruning algorithm typically rely on reformulating the value functions by defining $\alpha$-vectors such that
\begin{equation}\label{eq:alpha-vectors}
\funcOptV_{\indTime}^*(\vectBePt) = \max_{\alpha_{\indTime} \in \setAlpha_{\indTime}}\{\vectBePt \cdot \alpha_{\indTime}\}
\end{equation}
where $\setAlpha_{\indTime}$ represents the set of all $\alpha$-vectors at time $\indTime$.
However, these exact solution methods can only solve unconstrained POMDPs with a few states to optimality.
Accordingly, various approximation mechanisms have been proposed to solve large unconstrained POMDP models.
The main focus of this study is grid-based approximations, which serve as the foundation for the constrained POMDP solution algorithm employed.

\subsection{Grid-based approximations for POMDPs}
The value function $\funcOptV_{\indTime}^*(\vectBePt), \ \indTime \in \setTime, \ \vectBePt \in \beSimp$ can be approximated by discretizing $\beSimp$ into a set of grid points $\setGrid = \{\vectGrPt^\indGrid \mid \indGrid\in\setGridIndex\}$ where $\setGridIndex = \{1,\hdots, |\setGrid|\}$ is index set of $\setGrid$. Using these grid points, the approximate value function, $\funcApxV_{\indTime}(\vectBePt)$, is defined as follows:
\begin{equation}\label{eq:apx-value-function}
\funcApxV_{\indTime}^{\indAction}(\vectBePt) = \sum_{\indState \in \setState} \grPt_{\indState} \parfRew_{\indState}^{\indTime \indAction} +  \sum_{\indState \in \setState} \grPt_{\indState} \sum_{\indObs \in \setObs} \sum_{\indStateNew \in \setState} \parfObs_{\indStateNew \indObs}^{\indTime \indAction} \ \parfTrp_{\indState \indStateNew}^{\indTime \indAction}\ \sum_{\indGrid \in \setGridIndex} \beta_{\indGrid} \funcApxV_{\indTime+1}(\vectGrPt^{\indGrid}), \quad \begin{matrix}\vectBePt \in \beSimp, \\ \indAction \in \setAction\end{matrix}
\end{equation}
where $\sum_{\indGrid \in \setGridIndex} \beta_{\indGrid} \funcApxV_{\indTime+1}(\vectGrPt^{\indGrid}) \approx \funcOptV_{\indTime+1}^*(\vectBePt')$. That is, the value at the updated belief state $\vectBePt'$ is approximated by a convex combination of the grid values.

By simply focusing on the belief states (i.e., grid points) in the grid set $\setGrid$, a POMDP can be reduced to an MDP~\citep{sandikci2010reduction}.
It follows that the approximate value functions can be obtained as 
\begin{align}
\funcApxV_t(\vectGrPt) & = \max_{a\in \mca{A}} \bigg \lbrace \funcApxV_{\indTime}^{\indAction}(\vectGrPt) \bigg \rbrace, & \indTime < \timeHorizon,\quad \vectGrPt \in \setGrid\label{eq:apx-updated-value-function}\\
\funcApxV_{\timeHorizon}(\vectGrPt) & = \sum_{\indState \in \setState} \grPt_{\indState}\rewFinal_{\indState}, & \vectGrPt \in \setGrid\label{eq:apx-terminal-reward}
\end{align}
\citet{Lovejoy1991b} showed that these approximate values obtained over the grid set $\setGrid$ provide an upper bound on the optimal values, i.e., $\funcOptV_{\indTime}^*(\vectGrPt) \leq \funcApxV_\indTime(\vectGrPt) \ \text{for}\ \vectGrPt \in \setGrid$.
Accordingly, this method is referred to as the upper bound method for unconstrained POMDPs, denoted by ``UB'' in this paper.

Finding convex combinations that yield better approximate values (closer to $Q_{t+1}^*(\ugrPtvect)$) is important for obtaining tighter bounds, which can be achieved by computing the $\beta$-values as a solution to the following LP model~\citep{sandikci2010reduction}:
\begin{subequations}\label{eq:backward-induction}
\begin{align}
    \min &\ \sum_{\indGrid \in \setGridIndex} \funcApxV_{\indTime+1}(\vectGrPt^{\indGrid}) \beta_{\indGrid} \\ 
    \sthat & \ \sum_{\indGrid \in \setGridIndex} \beta_{\indGrid} \grPt_{\indState}^{\indGrid}=\grPt_{\indState}', \quad \indState \in \setState \\
    & \ \sum_{\indGrid \in \setGridIndex} \beta_{\indGrid}=1\\
    & \ \beta_{\indGrid} \geq 0, \quad \indGrid \in \setGridIndex 
\end{align} 
\end{subequations}
Because this particular grid-based approximation mechanism provides an upper bound on the optimal value, the minimization objective of the LP in~\eqref{eq:backward-induction} ensures tighter bounds.
Furthermore, the constraint set for the LP in~\eqref{eq:backward-induction} guarantees that the belief state $\vectGrPt'$ can be represented using the grid points in the grid set according to the resulting $\beta$-values.
Note that these $\beta$-values can be pre-computed and stored to be used as inputs for different solution methods.
Specifically, if the agent takes action $\indAction$ in grid $\vectGrPt^\indGrid$ at time $\indTime$ and observes $\indObs$, then $\beta_{\indGrid \indGridNew}^{\indTime \indAction \indObs}$ gives the coefficient of $\vectGrPt^\indGridNew$ in the convex combination representation of $(\vectGrPt^\indGrid)'$. That is,
\begin{equation}
(\vectGrPt^{\indGrid})' = \sum_{\indGridNew \in \setGridIndex} \beta_{\indGrid \indGridNew}^{\indTime \indAction \indObs} \vectGrPt^{\indGridNew}\ 
\end{equation}

Grid-based approximations can also be used with the solution methods that rely on the $\alpha$-vector representation of value functions.
For instance, Monahan's exhaustive enumeration algorithm enumerates all of the $\alpha$-vectors at each iteration, which are then pruned to the optimal set of $\alpha$-vectors by using Eagle's reduction and a linear programming-based redundancy/dominance check.
The full enumeration of the $\alpha$-vectors can be avoided by first generating one $\alpha$-vector per grid point in the grid set, and then following the $\alpha$-vector pruning steps of the algorithm to reach an $\alpha$-vector set that leads to an approximation of the optimal value functions.
\citet{Lovejoy1991b} showed that these approximate values obtained over the grid set $\setGrid$ provide a lower bound on the optimal values, i.e., $\funcApxV_\indTime(\vectGrPt) \leq \funcOptV_{\indTime}^*(\vectGrPt)\ \text{for}\ \vectGrPt \in \setGrid$.
Accordingly, this method is referred to as the lower bound method for unconstrained POMDPs, denoted by ``LB'' in this paper.
Proofs for the UB and LB methods are provided in Appendix~\ref{sec:appendix_ubproof} and Appendix~\ref{sec:appendix_lbproof}, respectively.

\subsubsection{Constructing the grid set}\label{subsec:grid-construction}
There are various grid construction techniques employed for approximating POMDP value functions, each with its own set of advantages and disadvantages.
A fixed-resolution grid approach samples belief states at equidistant intervals in each dimension as specified by the resolution parameter $\resoVal$~\citep{sandikci2010reduction}. 
The drawback of this approach is that the size of the resulting grid set $\setGrid$ grows exponentially with the number of states and $\resoVal$. 
The cardinality of the resulting grid set $\setGrid$ for this approach can be obtained as $|\setGrid| = \binom{|\setState| + \resoVal - 1}{|\setState| - 1}$.
Alternatively, a random sampling strategy can be used to generate a grid set of a desired size~\citep{Suresh2005}.
However, this approach makes it difficult to represent different parts of the grid space with equal likelihood.
Additionally, a randomly generated grid set may increase the computational overhead associated with generating the $\beta$-values for belief states that fall outside the grid set.

In this paper, a modified version of the fixed-resolution grid approach is employed in order to obtain a grid set with $\numGrids$ grid points. 
For each problem instance, grid sets $\setGrid_1,\hdots,\setGrid_{\numGridSets}$ are generated iteratively until $|\setGrid_{\numGridSets}| > \numGrids$, where $\setGrid_{\indGridSet}$ is obtained by using $\resoVal=\indGridSet$.
Then, $\numGrids - |\setGrid_{\numGridSets-1}|$ grid points are drawn at equal intervals from the set of grid points $\bar{\setGrid} = \setGrid_{\numGridSets}\setminus \setGrid_{\numGridSets-1}$ and added to $\setGrid_{\numGridSets-1}$ to obtain the final grid set.
A more thorough discussion on grid construction can be found in Appendix~\ref{sec:appendix_grid_construction}.

\subsubsection{Linear programming model}

When a POMDP is reduced to an MDP using a grid-based approximation, the resulting model can be solved using a backward induction mechanism.
Alternatively, LP models can be used to obtain the optimal values/policies for the MDP model~\citep{puterman2014markov}.
In order to simplify the notation used in the LP model, the transition probabilities between the grid points are charactarized as follows:
\begin{align}\label{eq:gridTransitionProbs}
\parfGridTr_{\indGrid \indGridNew}^{\indTime \indAction} = \sum_{\indObs\in \setObs} \sum_{\indState \in \setState} \sum_{\indStateNew \in \setState} \beta_{\indGrid \indGridNew}^{\indTime \indAction \indObs}
\grPt_{\indState}^{\indGrid} \parfObs_{\indStateNew \indObs}^{\indTime \indAction} \ \parfTrp_{\indState \indStateNew}^{\indTime \indAction}
\end{align}
That is, $\parfGridTr_{\indGrid \indGridNew}^{\indTime \indAction}\ \equiv \Pr_{\indTime}(\vectGrPt^{\indGridNew}|\vectGrPt^{\indGrid}, \indAction)$ corresponds to the probability of transitioning to grid point $\vectGrPt^{\indGridNew} \in \setGrid$ starting from grid point $\vectGrPt^{\indGrid} \in \setGrid$ at time $\indTime$ after taking action $\indAction$. With this updated notation, the approximate value function simplifies to:
\begin{equation}\label{eq:apx-value-given-grid-trans}
    \funcApxV_{\indTime}(\vectBePt) = \max_{\indAction\in\setAction} \left\{\sum_{\indState \in \setState} \grPt_{\indState} \parfRew_{\indState}^{\indTime \indAction} + \sum_{\indGrid \in \setGridIndex} \parfGridTr_{\indGrid \indGridNew}^{\indTime \indAction}\funcApxV_{\indTime+1}(\grPt^\indGrid)\right\}
\end{equation}

Let $\varPrimalLP_{\indTime\indGrid}$ denote the decision variables that correspond to the optimal values for grid point $\vectGrPt^{\indGrid}$ at time $\indTime$. The LP model for the corresponding finite-horizon MDP model is formulated as follows:
\begin{subequations}\label{eq:lp1}
\begin{align}
\label{eq:lp1Obj}\min \ \ & \sum_{\indGrid\in \setGridIndex} \parBeDistn_{\indGrid}\ \varPrimalLP_{0\indGrid}\\[1.0ex]
\label{eq:lp1Eqn1}\sthat \ \ & \varPrimalLP_{\indTime \indGrid}\ \geq\  \sum_{\indState \in \setState} \grPt_{\indState}^{\indGrid} \parfRew_{\indState}^{\indTime \indAction}  + \sum_{\indGridNew \in \setGridIndex} \parfGridTr_{\indGrid \indGridNew}^{\indTime \indAction}\ \varPrimalLP_{t+1 \indGridNew}, && \quad \indAction \in \setAction, \ \indGrid\in \setGridIndex, \  \indTime < \timeHorizon,\\[1.0ex]
\label{eq:lp1Eqn2} \ & \varPrimalLP_{\timeHorizon \indGrid} = \sum_{\indState \in \setState}\grPt_{\indState}^{\indGrid}\rewFinal_{\indState}, && \quad  \indGrid\in \setGridIndex,\\[1.0ex]
\label{eq:lp1Eqn3} \ & \varPrimalLP_{\indTime \indGrid} \ \ \textit{free}, && \quad \indGrid\in \setGridIndex,\ \indTime \leq \timeHorizon .
\end{align}
\end{subequations}
The objective function along with the constraints in \eqref{eq:lp1Eqn1} and \eqref{eq:lp1Eqn2} ensure that the values for the grid points are the ones corresponding to the maximizing action. 
Note that the $\parBeDistn$ parameter specifies the weight assigned to each grid point, with $\sum_{\indGrid \in \setGridIndex} \parBeDistn_{\indGrid} = 1$.
The constraints in~\eqref{eq:lp1Eqn1} link the values at successive decision epochs, and the constraints in~\eqref{eq:lp1Eqn2} determine the values at the final decision epoch where there are no decisions involved.

While the LP model provided in \eqref{eq:lp1} can be used to determine the values at each grid point, it does not directly generate the corresponding optimal policy.
As also discussed in \citep{puterman2014markov}, the \textit{dual} of this LP model can be employed to obtain the optimal policies for the MDP model.
Let $\varDualLP_{\indTime \indGrid \indAction}$ denote the decision variables that correspond to the \textit{occupancy measures}, which specify the fraction of time $\vectGrPt^{\indGrid}$ is visited and action $\indAction$ is taken at time $\indTime$. Similarly, $\varDualLP_{\timeHorizon \indGrid}$ can be defined for the final epoch, which does not involve any decisions regarding the actions.
The dual LP model is formulated as follows:
\begin{subequations}\label{eq:duallp1}
\begin{align}
\label{eq:duallp1Obj}\max \ \ & \sum_{\indTime < \timeHorizon}\sum_{\indAction\in \setAction}\sum_{\indGrid \in \setGridIndex}\sum_{\indState \in \setState} \grPt_{\indState}^{\indGrid} \parfRew_{\indState}^{\indTime \indAction} \varDualLP_{\indTime \indGrid \indAction} + \sum_{\indGrid \in \setGridIndex}\sum_{\indState \in \setState}\grPt_{\indState}^{\indGrid} \rewFinal_{\indState}\varDualLP_{\timeHorizon\indGrid}\\[1.0ex]
\label{eq:duallp1Eqn1}\sthat \ \ & \sum_{\indAction \in \setAction} \varDualLP_{0 \indGrid \indAction} = \parBeDistn_{\indGrid}, && \hspace{0.01cm} \indGrid\in \setGridIndex,\\[1.0ex]
\label{eq:duallp1Eqn2}\ & \sum_{\indAction \in \setAction} \varDualLP_{\indTime \indGrid \indAction} - \sum_{\indAction \in \setAction} \sum_{\indGridNew \in \setGridIndex}  \parfGridTr_{\indGridNew \indGrid}^{\indTime-1 \indAction} \varDualLP_{\indTime-1 \indGridNew \indAction} = 0, && \hspace{0.01cm}  \indGrid\in \setGridIndex, \  0 < \indTime < \timeHorizon,\\[1.0ex]
\label{eq:duallp1Eqn3}\ & \varDualLP_{\timeHorizon \indGrid} - \sum_{\indAction \in \setAction} \sum_{\indGridNew \in \setGridIndex} \parfGridTr_{\indGridNew \indGrid}^{\timeHorizon-1 \indAction} \varDualLP_{\timeHorizon-1 \indGridNew \indAction} = 0, && \hspace{0.01cm}  \indGrid\in \setGridIndex,\\[1.0ex]
\label{eq:duallp1Eqn4}\ & \varDualLP_{\timeHorizon \indGrid} \geq 0, \qquad \varDualLP_{\indTime \indGrid \indAction} \geq 0, && \hspace{0.01cm} \indAction \in \setAction, \  \indGrid \in \setGridIndex, \  \indTime < \timeHorizon.
\end{align}
\end{subequations}
Similar to \eqref{eq:lp1}, the objective function aims to maximize the expected total reward over the decision horizon as a weighted average over the grid points.
The constraints in~\eqref{eq:duallp1Eqn1} link the weights at the first decision epoch with the occupancy measures, implying that $\sum_{\indAction \in \setAction} \sum_{\indGrid \in \setGridIndex} \varDualLP_{0 \indGrid \indAction} = 1$.
The constraints in \eqref{eq:duallp1Eqn2} and \eqref{eq:duallp1Eqn3} link the occupancy measures at the successive decision epochs, and the constraints in \eqref{eq:duallp1Eqn4} state logical conditions on variables.
Appendix~\ref{sec:appendix_numerical_ex} provides illustrative examples on LP formulations for CPOMDPs.

\subsection{Finite Horizon CPOMDP Formulation}
The discrete-time finite horizon CPOMDP model can be formulated as follows:
\begin{subequations}
\label{eq:constrainedPOMDPModel}
\begin{align}
  \max_{\policy \in \setPolicy}  \ & E_{\stBelief}^{\policy} \big[\sum_{\indTime=0}^{\timeHorizon-1} \parfRew_{\indTime}(\genericState_{\indTime}, \genericAction_{\indTime}) + \parfRew_{\timeHorizon}(\genericState_{\timeHorizon}) \big]\\
\sthat \ & E_{\stBelief}^{\policy} \big[\sum_{\indTime=0}^{\timeHorizon-1} \parfCost_{\indTime}(\genericState_{\indTime}, \genericAction_{\indTime}) + \parfCost_{\timeHorizon}(\genericState_{\timeHorizon}) \big] \leqslant \parBudgetLim
\end{align}
\end{subequations}
where $\parfCost_{\indTime}(\genericState_{\indTime}, \genericAction_{\indTime})$ corresponds to the generic cost function 
for action $\genericAction_{\indTime}$ and state $\genericState_{\indTime}$ at time $\indTime$ and $\parfCost_\timeHorizon(\genericState_{\timeHorizon})$ is its counterpart for the final decision epoch $\timeHorizon$.
That is, the expected total reward should be maximized over the decision horizon $\setTime$ starting from the belief state $\stBelief$, while ensuring that the expected total cost does not exceed the budget limit, $\parBudgetLim$.

Similar to \eqref{eq:unconstrainedPOMDPModel}, the CPOMDP model provided in \eqref{eq:constrainedPOMDPModel} is intractable. 
Accordingly, the LP-based approximations for unconstrained POMDPs can be extended to obtain an approximation mechanism for CPOMDPs.
Specifically, the dual LP model provided in \eqref{eq:duallp1} can be extended as follows:
\begin{subequations}\label{eq:duallpConstrained}
\begin{align}
\nonumber\max \ \ & \eqref{eq:duallp1Obj} \\[1.0ex]
\nonumber\sthat \ \ & \eqref{eq:duallp1Eqn1} - \eqref{eq:duallp1Eqn4} \\[1.0ex]
\label{eq:duallpConstrained-budget}\ & \sum_{\indTime < \timeHorizon} \sum_{\indGrid \in \setGridIndex} \sum_{\indAction\in \setAction} \sum_{\indState \in \setState} \grPt_{\indState}^{\indGrid} \parfCost_{\indState}^{\indTime\indAction} \varDualLP_{\indTime \indGrid \indAction} \leq \parBudgetLim
\end{align}
\end{subequations}

This LP-based formulation also provides opportunities to incorporate many other problem-specific constraints to CPOMDPs. 
For instance, deterministic policies can be generated by introducing binary decision variables $\varDetPol$ to the formulation and adding the following constraints to \eqref{eq:duallpConstrained}:
\begin{subequations}
\begin{align}
\ & \varDualLP_{\indTime \indGrid \indAction} \leq \varDetPol_{\indTime \indGrid \indAction}, && \indTime < \timeHorizon, \ \ \indGrid\in \setGridIndex, \ \ \indAction\in \setAction,\\[1.4ex]
\ & \sum_{\indAction\in \setAction} \varDetPol_{\indTime \indGrid \indAction} = 1, \quad && \indTime < \timeHorizon, \ \ \indGrid\in \setGridIndex,\\[1.4ex]
\ & \varDetPol_{\indTime \indGrid \indAction} \in \{0,1\} && \indTime < \timeHorizon, \ \ \indGrid\in \setGridIndex, \ \ \indAction\in \setAction.
\end{align}
\end{subequations}
Similarly, additional constraints can be defined to ensure that the resulting policies are of threshold-type, that is, the policies are defined based on the threshold levels over the belief states~\citep{puterman2014markov}. 
However, designing such threshold-type policies often requires domain expertise, and might not be applicable to all the CPOMDP applications.
Also note that introducing binary variables to the model leads to a mixed-integer linear programming (MIP) model, which is significantly more difficult to solve than its LP counterpart. Furthermore, the addition of deterministic policy constraints often results in a reduction in the expected total reward. 
Section~\ref{sec:results-deterministic-policy} provides an investigation into the impact of these additional constraints on both run time performance and the value of the resulting policies. 
Additionally, a representative analysis with threshold-type policies is provided in Appendix~\ref{sec:appendix_threshold}.

\section{Infinite Horizon CPOMDPs}
\label{cpomdp:sec:infinite}
Discrete-time infinite horizon POMDPs are defined similarly to finite horizon POMDPs.
The most important distinction for infinite horizon POMDPs is that $\timeHorizon = \infty$, which implies the transition probabilities, observation probabilities, and rewards are time-invariant.
Accordingly, the same notation is used for infinite horizon POMDPs, except that time index $\indTime$ is excluded from the notation and mathematical formulas.
The Bellman optimality equations for infinite horizon POMDPs are as follows:
\begin{equation}
\funcOptV^{\indAction}(\vectBePt) =  \sum_{\indState \in \setState} \bePt_{\indState} \parfRew_{\indState}^{\indAction}  + \parDiscount\sum_{\indState \in \setState}\bePt_{\indState} \bigg( \sum_{\indObs \in \setObs} \sum_{\indStateNew \in \setState} \parfObs_{\indStateNew \indObs}^{\indAction} \ \parfTrp_{\indState \indStateNew}^{\indAction}\     \funcOptV^*(\vectBePt')\bigg), \quad \indAction \in \setAction, \ \vectBePt \in \beSimp
\end{equation}
Note that, unlike in the finite horizon case, a discount factor  $\parDiscount\in[0, 1)$ must be specified to ensure that the value functions converge.

The same grid-based approximation scheme is used for the infinite horizon problem to reduce the POMDP model to an MDP model.
Accordingly, the infinite horizon counterpart of \eqref{eq:constrainedPOMDPModel} can be approximated by using the following LP model:
\begin{subequations}\label{eq:duallpConstrainedInfinite}
\begin{align}
 \max \ \ & \sum_{\indAction\in \setAction}\sum_{\indGrid \in \setGridIndex}\sum_{\indState \in \setState} \grPt_{\indState}^{\indGrid} \parfRew_{\indState}^{\indAction} \varDualLP_{\indGrid \indAction}\label{eq:infinite-lp-obj}\\[1.0ex]
\sthat \ \ & \sum_{\indAction \in \setAction}\varDualLP_{\indGrid \indAction} - \parDiscount \sum_{\indAction \in \setAction} \sum_{\indGridNew \in \setGridIndex} \parfGridTr_{\indGridNew \indGrid}^{\indAction} \varDualLP_{\indGridNew \indAction} = \parBeDistn_{\indGrid} && \hspace{0.01cm}  \indGrid\in \setGridIndex\label{eq:infinite-lp-dist-constraint}\\[1.0ex]
\ & \sum_{\indGrid \in \setGridIndex} \sum_{\indAction\in \setAction} \sum_{\indState \in \setState} \grPt_{\indState}^{\indGrid} \parfCost_{\indState}^{\indAction} \varDualLP_{\indGrid \indAction} \leq \parBudgetLim\label{eq:infinite-lp-budget-constraint}\\[1.0ex]
\ & \varDualLP_{\indGrid \indAction} \geq 0 && \hspace{0.01cm} \indAction \in \setAction, \  \indGrid \in \setGridIndex\label{eq:infinite-lp-x-constraint}
\end{align}
\end{subequations}
In \eqref{eq:duallpConstrainedInfinite}, the decision variables, $\varDualLP_{\indGrid \indAction}$, are the occupancy measures for grid point-action pairs. These variables correspond to the fraction of time $\vectGrPt^{\indGrid} \in \setGrid$ is visited and action $\indAction \in \setAction$ is taken in the long run.
Note that the transition probabilities between the grid points $\parfGridTr_{\indGridNew \indGrid}^{\indAction}$ can be obtained similar to \eqref{eq:gridTransitionProbs}.
Time-invariant beta values for this calculation can be pre-calculated by using a value iteration algorithm over the grid-based approximation for the unconstrained POMDP model.
Specifically, the beta values $\beta_{\indGrid \indGridNew}^{\indAction \indObs}$ calculated in the last iteration of the value iteration algorithm can be used to obtain $\parfGridTr_{\indGridNew \indGrid}^{\indAction}$ values.
Additional details on the calculation of grid transition probabilities can be found in Appendix~\ref{sec:appendix_trp}.

\section{Numerical Experiments}
\label{cpomdp:numerical_experiments}
This section details numerical analysis results.
First, the experimental setup and problem instances are summarized.
This is followed by an examination of the performance of grid-based approximations for unconstrained POMDPs.
Then, the performance of the proposed CPOMDP solution algorithm, iterative transition-based linear programming (ITLP), is discussed. 
This analysis includes a performance comparison of ITLP in an infinite horizon setting with~\citet{poupart2015}'s CALP algorithm. 
The section is concluded with a detailed analysis of the effects of deterministic policy constraints on both CPU run time and the collected rewards.

\subsection{Experimental setup}
\label{cpomdp:sec:experiment_models}

Experiments in this work focus on extensions of the six popular POMDP problem instances listed in Table~\ref{cpomdp:table:experiment_models}. Specifically, the formulation of each problem has been extended to include both a budget constraint and costs.
Below, details of each of the studied problems including their CPOMDP extensions are provided.

\setlength{\tabcolsep}{6pt} 
\renewcommand{\arraystretch}{1.1}
\begin{table}[!ht]
    \begin{center}
    \caption{Specifications of the POMDP problem instances
    }
    \label{cpomdp:table:experiment_models}
    \resizebox*{0.6\textwidth}{!}{
        \begin{tabular}{lrrrl}
        \toprule
        Problem Name &
        \textbf{$|\setState|$} & 
        \textbf{$|\setAction|$} & 
        \textbf{$|\setObs|$} &
        Reference \\
        \midrule 
        tiger &  2 & 3 & 2 & \citet{cassandraActingOptimallyPartially1994} \\
        mcc &  4 & 3 & 3 & \citet{CassandraExamples}\\
        paint &  4 & 4 & 2 & \citet{CassandraExamples}\\
        query &  9 & 2 & 3 & \citet{CassandraExamples}\\
        $4 \times 3$ maze &   11 & 4 & 6 & \citet{parrApproximatingOptimalPolicies}\\
        rocksample & 129 & 8 & 2 & \citet{smith2012heuristic} \\
        \bottomrule 
        \end{tabular}
    }
    \end{center}
\end{table}

\begin{itemize}\setlength\itemsep{0.5em}
    \item \textit{tiger}: 
    In this problem, the agent is placed in front of two doors. One door has a tiger behind it and incurs a large negative reward if it is opened, while opening the other door leads to a large positive reward.
    The problem encodes this information with the two states: tiger-left and tiger-right.
    Three actions are available to the agent: listen, open-left, and open-right. 
    The listen action incurs a reward of $-1$ and leads to an observation of either tiger-left or tiger-right, which can be used to reduce the uncertainty in the belief state. 
    Opening the doors with and without the tiger behind them leads to rewards of $-100$ and $+10$, respectively.
    Note that the listen action helps reduce uncertainty in the belief state by observing with relatively high accuracy if there is a tiger behind each door. 
    In practice, it is logical to listen until the belief state is clearly biased towards one of the doors, indicating the tiger's location. 
    Accordingly, in the constrained model, restrictions are introduced on the number of listen actions that the agent can take before it chooses to open one of the doors. 
    Specifically, the listen action is assigned a cost of 2, while the two door opening actions have a cost of 1. A budget is then chosen which adequately restricts the agent's decisions.
    For instance, in a finite horizon problem with $\timeHorizon=5$ (i.e., four decision epochs), the agent can take at most 1 listen action when $\parBudgetLim=5$.
    
    \item \textit{paint}: The paint problem involves a part painting task, where a factory automation system aims to ship unblemished and unflawed items while rejecting blemished and flawed ones. 
    There are four states that contain aggregate information about the environment: NFL-NBL-NPA, NFL-NBL-PA, FL-NBL-PA, FL-BL-NPA, where FL/NFL is flawed/unflawed, BL/NBL is blemished/unblemished, PA/NPA is painted/unpainted. 
    Given a part, the automation system can choose to paint, inspect, ship, or reject. 
    If the system ships an unblemished, unflawed, painted part, which corresponds to state NFL-NBL-PA, it receives a reward of $+1$; in any other state, the ship action incurs a reward of $-1$. The system also receives a reward of $+1$ if it rejects a flawed, blemished, unpainted part. Rejecting a flawed, unblemished, painted part yields no reward, and rejecting in any other state incurs a negative reward of $-1$. The paint and inspect actions yield no reward regardless of the state that they are taken in.
    Similar to the tiger model, the inspect action can be used to reduce uncertainty in the belief state. 
    Accordingly, the number of inspect actions is limited by setting the cost of inspecting to 2, while the other actions incur a cost of 1.
    
    \item \textit{mcc}: 
    The mcc problem involves an e-commerce website that aims to show suitable ads to target customers.
    Customers looking to purchase a product can be categorized into one of two states: customers who prefer group 1 items (S1) or customers who prefer group 2 items (S2). Additionally, a customer may be in the process of completing a purchase (SB) or exiting the website (SX).
    The system can take the following actions: show items from group 1 (A1), show items from group 2 (A2), or show a random mix of items (AN). 
    Showing items from the wrong group increases the risk that a customer will leave the website while showing the correct items increases the probability of a purchase. 
    Observations O1 and O2, received after each action, indicate that the user is in S1 or S2, respectively. An observation of OX indicates that the user has left the website (i.e., transitioned to state SX). 
    If the user buys a product from the website (i.e., reaches state SB), then a reward of +1 is collected.
    In this problem, showing a mix of items (i.e., action AN) can be seen as a last resort when the underlying belief state does not favor a particular core state. 
    In this regard, it can be treated as an exploration action similar to the inspect/listen actions in the previous two problems.
    Accordingly, the use of this exploration action is limited by setting the cost of AN to 20 and the costs for all other actions to 10.
    
    \item \textit{query}: 
    The query problem involves a query system with two servers using an aggregate state definition. 
    Each server state can be described as unloaded, loaded, or down. In total, there are nine states for a two-server system.
    The agent can query one of the servers (i.e., there are two actions) to learn about its state.
    The query action leads to one of three observations: no-response, slow-response, and fast-response.
    The rewards are also linked to the observations, with a reward of 0 for no-response, a reward of 3 for slow-response, and a reward of 10 for fast-response.
    Unlike the previous problems, the query POMDP formulation does not have an exploration action. For the purposes of imposing budgetary limitations, one can imagine a scenario where querying the first server would be more expensive than querying the second, either through bandwidth costs, subscription fees, or some other means. As such, the cost of querying servers one and two is 2 and 1, respectively.
    
    \item \textit{$4\times 3$ maze}: 
    The $4\times 3$ maze problem uses the grid world with 11 cells to model an agent trying to reach a designated positive cell in the grid while avoiding a specific negative cell. 
    The agent can take one of the four actions to move between the cells: north, south, east, and west. 
    The observations correspond to the number of walls that the agent observes for the current cell that it occupies.
    Moreover, the agent can detect the positive cell and the negative cell once it occupies those cells.
    When the positive cell is reached, a reward of $+1$ is collected, whereas reaching the negative cell leads to a reward of $-1$.
    Transitioning to any other cells (i.e., states) incurs a small negative reward of $-0.04$.
    In the CPOMDP formulation of the $4\times 3$ maze problem, the cost of taking an action in a single designated cell is set to 1, and the cost is 0 in all other cells. The agent must then navigate the maze while limiting the number of times it passes through this particular cell. Notice that, unlike in the previous CPOMDP formulations, the cost of taking an action in the $4\times 3$ problem is state dependent.
    
    \item \textit{rocksample}: 
    In this problem, a robot is tasked with locating rocks with significant scientific value and sampling them accordingly.
    We consider a square grid with a side length of $4$ containing $3$ rocks.
    Each rock can be in one of two states: \textit{good} or \textit{bad}. 
    The robot is rewarded when it samples a good rock, but it is penalized if it samples a bad rock. 
    To detect the quality of rocks, the robot is equipped with a sensor, the measurement quality of which deteriorates exponentially according to the Euclidean distance between the robot and the rock.
    The robot can move in any of the cardinal directions, it can sample the rock that it is standing on, and it can check each of the rocks using its sensor, for a total of 8 actions. 
    The robot can also occupy any of the squares on the grid, of which there are 16. Each of the 3 rocks is either \textit{good} or \textit{bad}, thus the total number of combined states of the rocks is $2^3=8$. 
    There is a terminal state, which is reached by moving east off of the right side of the grid, hence the total number of states is $16\times 8 + 1=129$.
    Each time the robot takes a step (except into the terminal state), it receives a small penalty of $-1$. 
    Stepping into the terminal state yields a reward of $+5$ and, the belief is set to the \textit{target belief} in the next epoch. 
    If the robot samples a good rock, it is awarded $+10$ and the state of the rock becomes bad. 
    If instead, it samples a bad rock, it is penalized by $-10$. 
    If the robot samples from a square with no rock on it, it is penalized by $-1$. Check actions do not incur a negative reward, but as they are used to reduce uncertainty in the belief, a cost of 1 is assigned to checking any of the rocks. All other actions incur a cost of $0.5$.
    
\end{itemize}

Tables~\ref{cpomdp:tab:finite-budgets} and \ref{cpomdp:tab:inf-budgets} show the allocated budgets for the six CPOMDP problem instances in both their finite horizon and infinite horizon formulations.
Table~\ref{cpomdp:tab:finite-budgets} also shows the decision horizon $\timeHorizon$ for each problem instance.
In the finite horizon case, since query, $4\times 3$, and rocksample are larger, more difficult problem instances, they are solved for a horizon length of $\timeHorizon=5$. 
All the other problems are solved for $\timeHorizon=20$.
Three budget levels are considered: small, medium and large. These budgets help us to examine how the CPOMDP algorithm responds to varying cost constraints.
Each selected budget allows for feasible policies to exist and was selected to adequately constrain the model.

\setlength{\tabcolsep}{4.5pt} 
\renewcommand{\arraystretch}{0.91}
\begin{table}[!ht]
\centering
\caption{CPOMDP budget limits for each problem instance}
\label{cpomdp:tab:CPOMDP-budgets}
    \subfloat[Finite horizon \label{cpomdp:tab:finite-budgets}]{
    \scalebox{0.905}{
    \begin{tabular}{lrrr}
    \toprule
    Problem & small & medium & large \\
    \midrule
    tiger ($\timeHorizon = 20$) & 21.00 & 25.00 & 50.00 \\
    paint ($\timeHorizon = 20$) & 19.50 & 22.00 & 50.00 \\
    mcc ($\timeHorizon = 20$) & 118.00 & 123.00 & 140.00 \\
    query  ($\timeHorizon = 5$) & 4.16 & 4.47 & 7.00\\
    $4\times 3$  ($\timeHorizon = 5$) & 0.17 & 0.25 & 0.60 \\
    rocksample ($\timeHorizon=5$) & 2.00 & 2.10 & 2.40 \\ 
    \bottomrule
    \end{tabular}
    }
    }
    \subfloat[Infinite horizon \label{cpomdp:tab:inf-budgets}] 
    {
    \scalebox{0.905}{
    \begin{tabular}{lrrr}
    \toprule
    Problem & small & medium & large \\
    \midrule
    tiger & 11.50 & 14.00 & 22.00 \\
    paint & 10.50 & 12.00 & 18.00 \\
    mcc & 63.70 & 66.00 & 75.00 \\
    query & 10.32 & 11.58 & 16.00 \\
    $4\times 3$ & 0.40 & 0.45 & 2.00 \\
    rocksample & 5.04 & 5.64 & 6.00 \\ 
    \bottomrule
    \end{tabular}
    }
    }
\end{table}

For the $4\times 3$ problem, the initial (target) belief state $\stBelief$ is taken as a uniform distribution over all of the states except for the positive and negative cells that represent terminal states, that is, in these two cells, the initial probability is zero. 
For the rocksample problem, $\stBelief$ is specified as a uniform distribution over all of the states where the starting position is taken as the cell $(1, 3)$. 
As there are three rocks, this gives 8 equally likely starting states. 
For all other problems, $\stBelief$ is a uniform distribution over all of the states. 
For example, in the paint problem, $\stBelief=[0.25, 0.25, 0.25, 0.25]$. 
A discount factor ($\parDiscount$) of 1.0 is used for finite horizon problems, and 0.9 for infinite horizon problems.
Our preliminary analysis revealed that, for most problem instances, high-quality solutions can be achieved with 200 grid points. 
As the rocksample problem is substantially larger than each of the other problem instances, a grid set with 1000 grid points was used for this problem instead. 
The subsequent experiments also explore the impact of different grid set sizes
on a representative problem instance.
All the experiments are run on a PC with an i7-8700K processor and 32GB of RAM.
Implementations are done in python, and the LP models are solved using Gurobi 9.0.

\subsection{Performance of the grid-based approximations}\label{cpomdp:subsec:unconstrained-results}

The numerical analysis begins with an investigation into the efficiency and efficacy of the UB approximation method for unconstrained POMDPs, as the proposed CPOMDP solution approach is built on this method. 
In particular, an overview of the computational performance of the LB and UB methods is provided by calculating the optimality gaps,
and the need for approximation algorithms is demonstrated by identifying three instances in the set of toy problems where exact solution algorithms are intractable.

The summary results provided in Table~\ref{cpomdp:tab:unconstrained-summary} provide a comparison of the LB, UB, and IP methods. 
These results provide the expected value, $\expVal(\stBelief)$, at each model's respective starting belief ($\stBelief$).
Additionally, the performance of the approximations is measured over the remainder of the belief space by constructing an evaluation grid set $\setGridEval$ of size 100, and calculating the LB--UB gap for each $\vectBePt\in \setGridEval{}$ as:
\begin{equation}
    \text{Gap}= 100 \times \frac{\ubV(\vectBePt) - \lbV(\vectBePt)}{\lbV(\vectBePt)}
\end{equation}
The evaluation grid set $\setGridEval$ is constructed by randomly sampling belief states, as described by~\citet{Suresh2005}.

\begin{table}[!ht]
\centering
\caption{Summary results for unconstrained POMDP solution algorithms.}
\resizebox{0.95\textwidth}{!}{
\begin{threeparttable}
\begin{tabular}{lllrrrrrrrrrr}
\toprule
&     &     & \multicolumn{3}{c}{ $\expVal(\stBelief)$ } & \multicolumn{3}{c}{CPU (sec.)} & \multicolumn{4}{c}{Gap (\%)} \\
\cmidrule(lr){4-6} \cmidrule(lr){7-9} \cmidrule(lr){10-13}
&     &     &     LB &     IP &     UB &     LB &       IP &      UB &    min &  mean & median &  max \\
\cmidrule(lr){4-6} \cmidrule(lr){7-9} \cmidrule(lr){10-13}
T & Problem & $|\setGrid{}|$ &        &        &        &        &          &         &        &        &        &\\
\midrule
\multicolumn{13}{c}{\textbf{Finite horizon}} \\ \midrule
20 & tiger & 200 &  20.39 &  20.39 &  20.39 &   2.24 &    57.49 &   18.53 &   0.01 &   0.04 &   0.01 &    0.58 \\
         & paint & 200 &   3.04 &   3.04 &   3.08 &   2.57 &    22.73 &   26.11 &   1.03 &   1.48 &   1.37 &    3.97 \\
         & mcc & 200 &   1.21 &   1.21 &   1.22 &   2.19 &     8.26 &   27.46 &   0.24 &   0.36 &   0.34 &    0.69 \\
5 & query & 50  &  18.56 &  18.56 &  23.83 &   2.60 &     0.62 &    1.28 &  21.59 &  28.17 &  27.86 &   40.11 \\
         &     & 200 &  18.56 &      - &  23.70 &   2.53 &        - &    5.75 &  21.36 &  27.36 &  26.99 &   39.42 \\
         &     & 500 &  18.56 &      - &  23.53 &   2.21 &        - &   23.81 &  21.17 &  26.47 &  26.22 &   38.29 \\
         & $4\times 3$ & 200 &   0.12 &   0.12 &   0.19 &   1.21 &     1.87 &   21.36 &  0.00 &  21.77 &  16.26 &  145.95 \\ 
         & rocksample & 1000 &   8.00 &  4.00\tnote{$\dagger$} &  12.08 &   5.14 &     14400\tnote{$\dagger$} & 105.10 & 0.00 &  43.61 & 0.00 & 134.95 \\         
         \midrule
\multicolumn{13}{c}{\textbf{Infinite horizon}} \\ \midrule
         & tiger & 200 &   8.51 &   8.51 &   8.51 &   2.14 &    97.85 &   96.97 &   0.01 &   0.01 &   0.01 &    0.06 \\
         & paint & 200 &   1.33 &   1.34 &   1.34 &   2.19 &    30.19 &  126.13 &   0.03 &   0.35 &   0.06 &    4.18 \\
         & mcc & 200 &   0.69 &   0.69 &   0.69 &   2.60 &    31.42 &   81.13 &   0.07 &   0.17 &   0.15 &    0.63 \\
         & query & 50  &  48.02 &  19.02\tnote{$\dagger$} &  48.78 &   2.77 &   14400\tnote{$\dagger$} &   10.90 &   0.92 &   1.67 &   1.63 &    2.91 \\
         &     & 200 &  48.02 &      - &  48.52 &   8.03 &        - &   68.52 &   0.56 &   1.07 &   1.03 &    2.30 \\
         &     & 500 &  48.02 &      - &  48.21 &  37.11 &        - &  336.13 &   0.20 &   0.44 &   0.42 &    1.00 \\
         & $4\times 3$ & 200 &   0.75 &   0.33\tnote{$\dagger$} &   0.84 &  15.03 &  14400\tnote{$\dagger$} &  231.18 &   3.96 &  10.25 &   9.89 &   20.21 \\
         & rocksample & 1000 &   13.21 &  3.5\tnote{$\dagger$} &  22.59 &   569.91 &     14400\tnote{$\dagger$} & 7873.83 & 13.75 &  56.17 & 41.51 & 107.46 \\ 
\bottomrule
\end{tabular}
\begin{tablenotes}
    \item[$\dagger$] value as of 4 hour timeout
    \item $\expVal(\stBelief)$ values are reported for target belief states $\stBelief$
\end{tablenotes}
\end{threeparttable}
}
\label{cpomdp:tab:unconstrained-summary}
\end{table}

The LB approximation algorithm estimates $\expVal(\stBelief)$ (i.e., the value for the starting belief state) exceptionally well for all six POMDP problem instances, while UB approximation quality deteriorates as the number of states increases, particularly for the finite horizon problems. 
The gap calculations reveal similar results, where once again the solution quality deteriorates significantly for finite horizon problems with larger state spaces. 
The query problem is taken here as a representative problem instance for investigating the impact of grid set size.
The results show only modest improvements when increasing the number of grid points for both the finite and infinite horizon problems, at the expense of a significant increase in computation time. 
The IP algorithm is unable to solve the infinite horizon $4\times 3$ and query problems within a four-hour time limit, while the approximation algorithms can solve these problems in a relatively short amount of time.
The IP algorithm also fails to find any solution for the rocksample problem despite the short horizon length of $\timeHorizon=5$ in the finite horizon case. 
In contrast, the LB approximation can find an approximate solution in just over five seconds. 
These results highlight the need for high-quality approximation algorithms, as exact solution methods tend to become intractable as the number of states increases.

\subsection{Analysis with CPOMDP methods}
The proposed CPOMDP solution algorithm, ITLP, allows specifying a probability distribution $\parBeDistn$ over $\vectGrPt \in\setGrid{}$. 
Accordingly, the corresponding expected reward and expected cost for such policies depend on both $\setGrid{}$ and $\parBeDistn$ as follows:
\begin{align} 
\expVal(\setGrid, \parBeDistn) &= \sum_{\indGrid\in\setGridIndex} \parBeDistn_{\indGrid} \expVal(\vectGrPt^{\indGrid}), \qquad 
\expCost(\setGrid, \parBeDistn) = \sum_{\indGrid\in\setGridIndex} \parBeDistn_{\indGrid} \expCost(\vectGrPt^{\indGrid})
\end{align}
The approximate values of these quantities are denoted by $\funcApxV(\setGrid{}, \parBeDistn{})$ and $\funcApxC(\setGrid{}, \parBeDistn{})$, and they are calculated using 10,000 simulations where, for each simulation, the initial belief $\stBelief$ is randomly selected using the weights of $\parBeDistn$ corresponding to $\vectGrPt\in\setGrid{}$. The objective value returned from the linear program is also reported, and is denoted by $\LPV(\setGrid, \parBeDistn)$ and, for the starting belief $\stBelief$, $\LPV(\setGrid, \stBelief)$.

As was also observed by \citet{poupart2015}, a CPOMDP policy generated by a grid-based approximation does not necessarily adhere to its budget in a general simulation environment. 
Such a policy is described as infeasible. 
In practice, one searches only for feasible policies and discards any policy that exceeds its budget. 
\citet{poupart2015} propose performing a binary search for a budget constraint which produces a feasible, near-optimal policy. 
However, policy generation and evaluation are time consuming processes, which provides considerable motivation for reducing the likelihood of generating an infeasible policy in the first place. 
In the subsequent analysis, rather than discarding infeasible policies, their $\funcApxV(\setGrid, \parBeDistn)$ are reported, and the degree to which $\funcApxC(\setGrid, \parBeDistn)$ exceeds the budget limit $\parBudgetLim$ is provided using the following formula:
\begin{equation}
    \text{\%-Over} = \begin{cases}
    100\times \dfrac{\funcApxC(\setGrid, \parBeDistn) - \parBudgetLim}{\parBudgetLim},&\text{if } \funcApxC(\setGrid, \parBeDistn) > \parBudgetLim\\
    0,&\text{otherwise}
    \end{cases}
    \label{cpomdp:eq:percent-over}
\end{equation}
By retaining infeasible policies, the effects of model complexity and grid size on budget adherence can be quantified, where Equation~\eqref{cpomdp:eq:percent-over} provides a measure for the degree to which a policy exceeds its budget.

The process of generating CPOMDP policies using ITLP involves two main steps: (1) generating the transition probabilities (i.e., $\parfGridTr_{\indGrid \indGridNew}^{\indTime \indAction}$ values), and (2) setting up and solving the final LP given these transition probabilities. 
The elapsed time for each of these steps is reported separately as ``cpu-trans'' and ``cpu-lp'', respectively.

\subsubsection{Finite horizon CPOMDPs}

Table~\ref{cpomdp:tab:finite-complete-summary} provides the approximate/simulated expected reward $\SimV(\setGrid, \parBeDistn)$ and the \%-Over for each model, with an additional column for the simulated expected reward of IP (i.e., for solving the corresponding unconstrained POMDP model). 
For problems where IP was unable to find an optimal policy within the time limit, the LB approximate policy result is used instead.
Note that IP/LB results are also dependent on $\setGrid$ and $\parBeDistn$ as the same simulation mechanism is employed for all cases.

\begin{table}[!ht]
\centering
\caption{Finite-horizon CPOMDP results ($\parBeDistn$ is taken as a uniform distribution over the corresponding $\setGrid$).}
\label{cpomdp:tab:finite-complete-summary}
\resizebox*{!}{0.995\textwidth}{
\begin{tabular}{llllrrrr}
\toprule
& & & & \multicolumn{3}{c}{Budget level} & \\
\cmidrule(lr){5-7}
Problem & $\timeHorizon$ & $|\setGrid{}|$ & & small & medium & large & IP/LB\\
\midrule
tiger & 20  &  200 & $\SimV(\setGrid, \parBeDistn)$ & -650.53 & -320.78 &  28.89 & 29.10 \\
     &       & & $\LPV(\setGrid, \parBeDistn)$ & -604.37 & -277.29 &  22.27 & - \\
     &       & & \% Over &    0.00 &    0.03 &   0.00 & - \\
     &       & &  cpu-trans &   19.63 &   19.63 &  19.63 & - \\
     &       & &  cpu-lp &    6.15 &    4.38 &   5.15 & - \\ 
\midrule
paint  & 20  &  200 & $\SimV(\setGrid, \parBeDistn)$ &    1.00 &    2.30 &   3.46 & 3.69 \\
     &       & & $\LPV(\setGrid, \parBeDistn)$ &    0.81 &    2.06 &   3.41 & - \\
     &       & & \% Over &    0.60 &    0.77 &   0.00 & - \\
     &       & &  cpu-trans &   28.32 &   28.32 &  28.32 & - \\
     &       & &  cpu-lp &   41.24 &   26.55 &  27.17 & - \\ 
\midrule
mcc  & 20  &  200 & $\SimV(\setGrid, \parBeDistn)$ &    0.71 &    0.93 &   1.19 & 1.22\\
     &       & & $\LPV(\setGrid, \parBeDistn)$ &    0.70 &    0.94 &   1.22 & - \\
     &       & & \% Over &    0.33 &    0.03 &   0.00 & - \\
     &       & &  cpu-trans &   31.82 &   31.82 &  31.82 & - \\
     &       & &  cpu-lp &    8.67 &    6.74 &   6.96 & - \\ 
\midrule
query & 5  &  50 & $\SimV(\setGrid, \parBeDistn)$ &   21.08 &   22.10 &  24.18 & 24.60\\
     &       & & $\LPV(\setGrid, \parBeDistn)$ &   20.77 &   22.25 &  24.75 & - \\
     &       & & \% Over &    0.17 &    0.34 &   0.00 & - \\
     &       & &  cpu-trans &    0.66 &    0.66 &   0.66 & - \\
     &       & &  cpu-lp &    0.46 &    0.46 &   0.46 & - \\
\cmidrule(lr){3-8}
     & & 200 & $\SimV(\setGrid, \parBeDistn)$ &   20.94 &   22.10 &  24.43 & 24.80\\
     &       & & $\LPV(\setGrid, \parBeDistn)$ &   20.88 &   22.20 &  24.64 & - \\
     &       & & \% Over &    0.00 &    0.00 &   0.00 & - \\
     &       & &  cpu-trans &    5.35 &    5.35 &   5.35 & - \\
     &       & &  cpu-lp &    2.36 &    2.35 &   2.35 & - \\
\cmidrule(lr){3-8}
     & & 500 & $\SimV(\setGrid, \parBeDistn)$  &  20.51 &   21.58 &  24.40 & 24.46\\
     &       & & $\LPV(\setGrid, \parBeDistn)$ &   20.52 &   21.89 &  24.39 & - \\
     &       & & \% Over &    0.00 &    0.00 &   0.00 & - \\
     &       & &  cpu-trans &   27.26 &   27.26 &  27.26 & - \\
     &       & &  cpu-lp &   11.96 &   11.92 &  11.90 & - \\ 
\midrule
$4\times 3$ & 5  &  200 & $\SimV(\setGrid, \parBeDistn)$ &    0.13 &    0.19 &   0.22 & 0.27\\
     &       & & $\LPV(\setGrid, \parBeDistn)$ &    0.18 &    0.25 &   0.28 & - \\
     &       & & \% Over &   17.07 &    1.98 &   0.00 & - \\
     &       & &  cpu-trans &   22.38 &   22.38 &  22.38 & - \\
     &       & &  cpu-lp &    3.17 &    3.14 &   3.14 & - \\
\midrule
rocksample & 5  &  1000 & $\SimV(\setGrid, \parBeDistn)$ &   6.53 &    6.72 &   6.93 & 9.02\\ 
     &       & & $\LPV(\setGrid, \parBeDistn)$ &    10.81 &   11.11 &   11.26 & - \\
     &       & & \% Over &   3.07 &    3.42 &   0.00 & - \\
     &       & &  cpu-trans &  303.10 &    303.10 &  303.10 & - \\
     &       & &  cpu-lp &    289.48 &    287.45 &   287.42 & - \\
\bottomrule
\end{tabular}
}
\end{table}

Without access to optimal CPOMDP policies, it is not possible to evaluate the performance of ITLP by considering each budget level individually. 
Instead, policy feasibility and the change in $\SimV(\setGrid, \parBeDistn)$ given a change in the budget limits are considered as evaluation metrics. 
For all but the $4\times 3$ and rocksample problems, the generated policies adhere to the budget or exceed it by less than 1\%. Table~\ref{cpomdp:tab:finite-complete-summary} establishes a clear positive relationship between the budget level $\parBudgetLim$ and $\SimV(\setGrid, \parBeDistn)$, with the expected total reward for a large budget policy approaching that of an optimal, unconstrained policy's reward (IP). 
Table~\ref{cpomdp:tab:finite-complete-summary} also provides the run times for each problem, which shows that for all but the paint problem, the most time consuming step of ITLP is generating the transition probabilities, with the LP taking a smaller (but not insignificant) amount of time.

While the rocksample problem is considerably larger than each of the other problem instances, the results in Table~\ref{cpomdp:tab:finite-complete-summary} demonstrate that problems of this size are well within the capabilities of the ITLP algorithm. 
Specifically, policies can be generated within a reasonable amount of time; budget adherence is similar to that of the $4\times 3$ problem, a much smaller problem instance; and simulated rewards increase as the budget increases. 
The runtime results suggest that one could also increase the number of grids used to achieve a higher quality approximation, solve problems with many more states and actions than the rocksample problem, and increase the number of decision epochs in the problem formulation while using the ITLP algorithm.

In addition to providing a comparison of ITLP for varying grid sizes, Figure~\ref{cpomdp:fig:finite-constrained-query-s2-graph} provides a more granular depiction of how budget adherence and $\SimV(\setGrid, \parBeDistn)$ depend on the budget level, $\parBudgetLim$, for a representative problem instance, query. 
Figure~\ref{cpomdp:fig:query-finite-values} shows that $\SimV(\setGrid, \parBeDistn)$ increases with $\parBudgetLim$ until some threshold, at which point further increases to the budget limit do not lead to an increase in $\SimV(\setGrid, \parBeDistn)$. 
This behaviour is expected, as increasing the budget corresponds to an increased reward until there is a sufficient budget available to produce an unconstrained policy, at which point the expected reward cannot increase. 
This is also reflected in the higher budget limits in Figure~\ref{cpomdp:fig:query-finite-costs} where, for a sufficiently high budget, \%-Over never exceeds zero.

\begin{figure}[!ht]
\centering
\subfloat[Simulated value ($\SimV(\setGrid, \parBeDistn)$).\label{cpomdp:fig:query-finite-values}]{\includegraphics[width=0.490\textwidth]{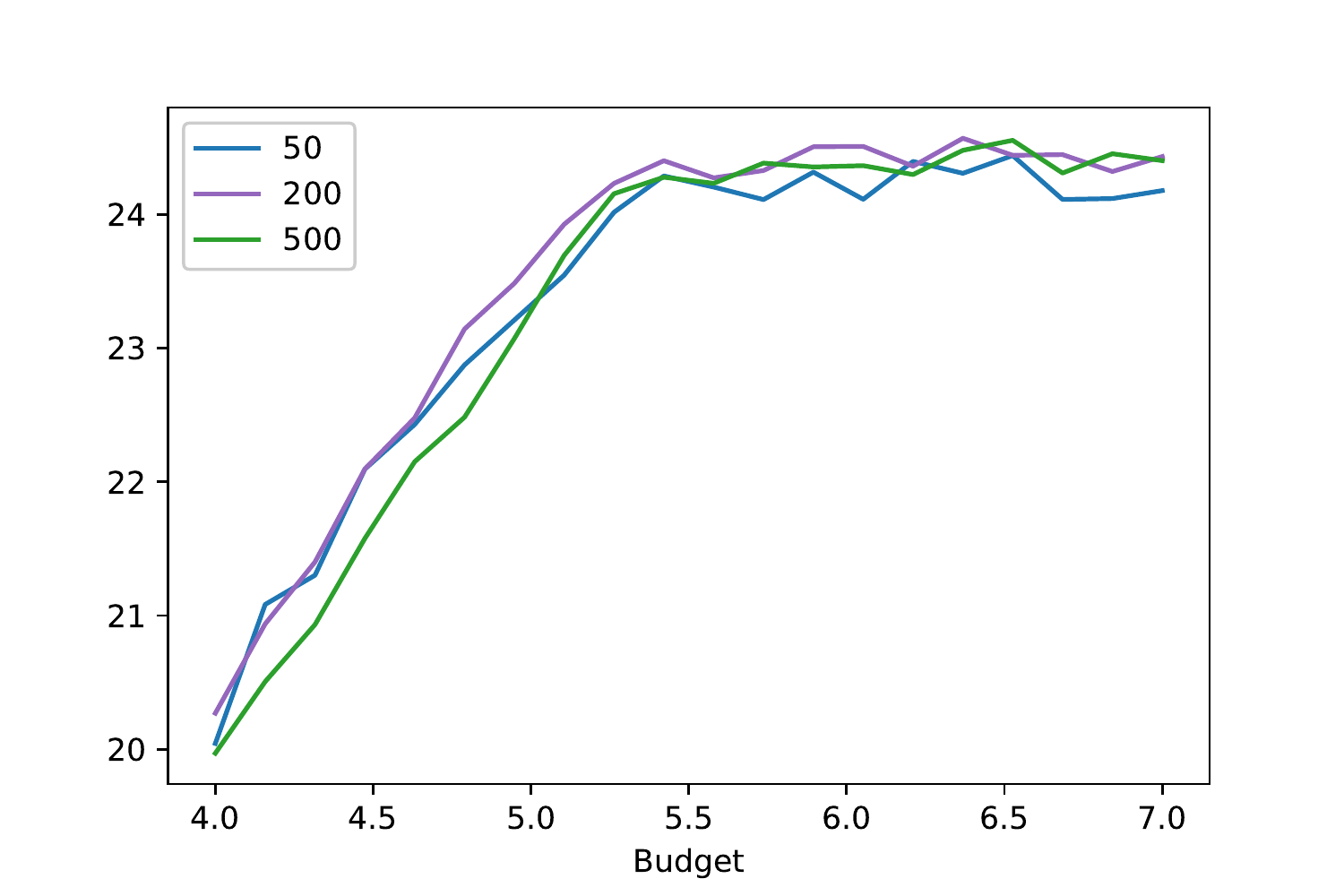}}
\hfill
\subfloat[Simulated cost ($\funcApxC(\setGrid, \parBeDistn)$).\label{cpomdp:fig:query-finite-costs}]{\includegraphics[width=0.490\textwidth]{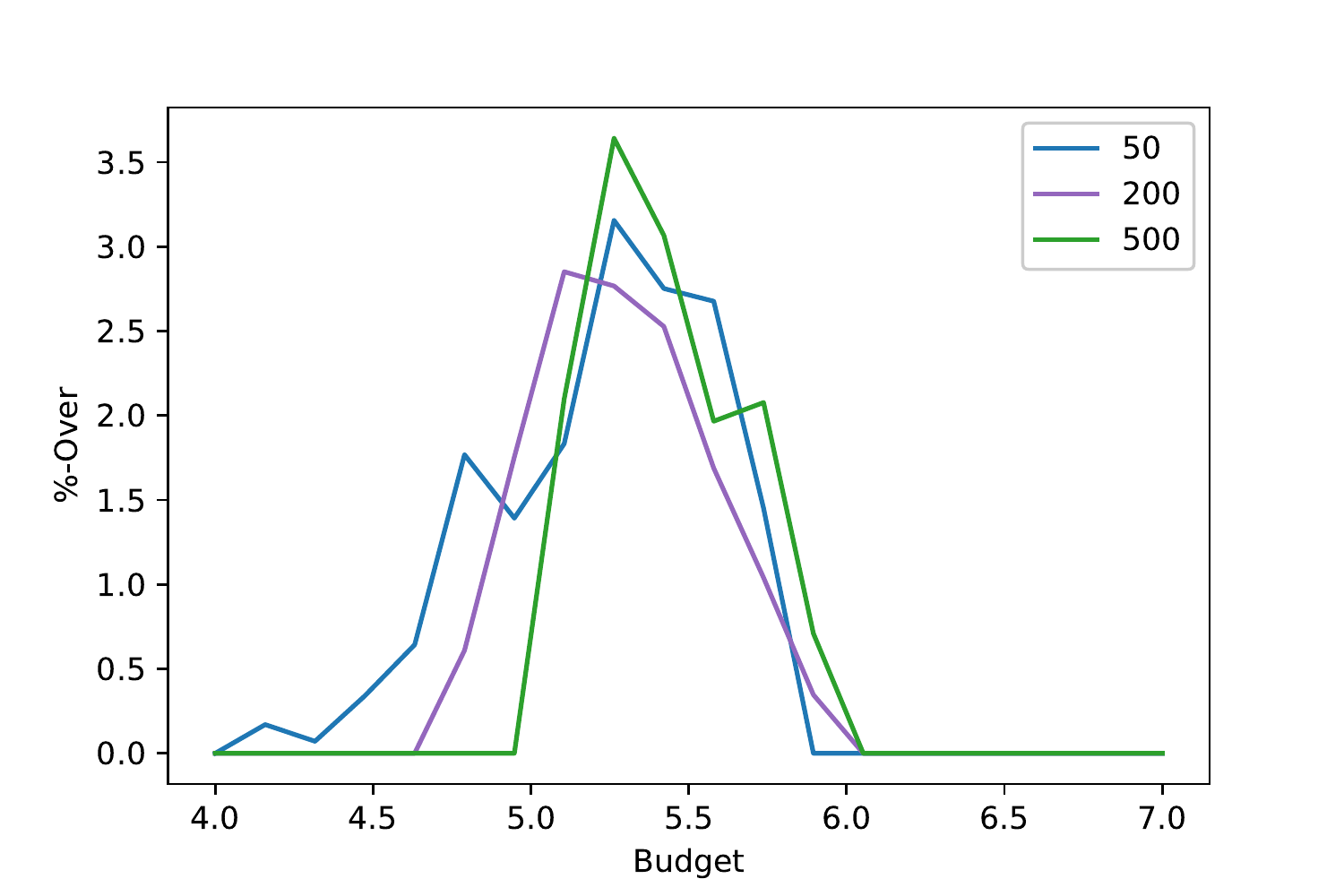}}
\caption{Detailed finite-horizon CPOMDP results for query problem for different grid set sizes as obtained by simulating the resulting policies.}
\label{cpomdp:fig:finite-constrained-query-s2-graph}
\end{figure}

Figure~\ref{cpomdp:fig:query-finite-costs} shows that using larger grid sets increases the likelihood of generating a feasible policy.
These results demonstrate that increasing the number of grid points does not consistently lead to a lower or higher expected total reward. 
In particular, in the budget range of 4.00 to 4.63, the model with 200 grid points consistently produces feasible policies that outperform their 500 grid point counterparts. 
It turns out that the policies obtained with a grid set of size 500 tend to be more conservative with the budget constraint, which allows the model with 200 grid points to produce policies that better utilize the available budget. 
For larger budgets, models with 200 and 500 grid points are shown to produce a higher reward than the model with 50 grid points. 
The results also show that the CPU run times increase considerably as the number of grid points increases (see Table~\ref{cpomdp:tab:finite-complete-summary}).

Producing a feasible policy can be a time consuming task. \citet{poupart2015} suggest performing a binary search over budgets for a feasible policy with an acceptable reward, and Figure~\ref{cpomdp:fig:query-finite-costs} suggests that increasing the grid size can also help with generating feasible policies. 
Both of these processes can add considerable computational overhead and, in the case of increasing grid size, the resulting policy may not fully utilize the available budget. 
For particularly large problems, increasing the grid size may not be feasible and performing multiple runs may be costly, which motivates choosing a more conservative budget to increase the likelihood of producing a feasible policy in one shot. 
On the other hand, Figure~\ref{cpomdp:fig:query-finite-values} demonstrates that even small changes to the budget tend to impact $\SimV(\setGrid, \parBeDistn)$. As a result, a conservative budget choice may result in a considerable decrease in reward.

\subsubsection{Infinite horizon CPOMDPs}
As in the finite horizon case, the expected reward and the expected cost are approximated using 10,000 simulations which employ a horizon of 100 to simulate an infinite horizon. 
The infinite horizon version of ITLP is compared to \citet{poupart2015}'s CALP algorithm. 
As CALP policies assume a single starting belief $\stBelief$, the corresponding ITLP belief distribution $\parBeDistn$ is given by:
\begin{equation}
\parBeDistn(\vectBePt)=\begin{cases}
1,&\text{if }\vectBePt=\stBelief\\
0,&\text{otherwise}
\end{cases}
\end{equation}
Thus, $\funcApxV(\setGrid, \parBeDistn)$ becomes $\funcApxV(\stBelief)$, and $\funcApxC(\setGrid, \parBeDistn)$ becomes $\funcApxC(\stBelief)$. Any two policies modeling the same problem and sharing the same grid size use the same $\setGrid$. Under these conditions, ITLP policies can be directly compared to CALP policies. In Table~\ref{cpomdp:tab:infinite-constrained-summary}, the policies generated using each algorithm are compared for selected small, medium, and large budgets, and $\SimV(\stBelief)$ is compared with the optimal unconstrained $\expVal(\stBelief)$. 
Because incremental pruning could not generate infinite horizon unconstrained POMDP policies for the query, $4\times 3$, and rocksample problems within the time limit (see Table~\ref{cpomdp:tab:unconstrained-summary}) the reported $\funcOptV(\stBelief)$ under IP/LB column are generated using the LB method for these three problems. 

Similar to finite-horizon case, the ensuing qualitative analysis focuses on policy feasibility and changes in $\funcApxV(\stBelief)$ with respect to $\parBudgetLim$. 
The results in Table~\ref{cpomdp:tab:infinite-constrained-summary} show no clear winner between the two algorithms: ITLP performs worse for the simpler toy problems, but performs significantly better for the $4\times 3$ problem. 
For the tiger problem, ITLP produces policies which grossly exceed the budget and show little variation in $\funcApxV(\stBelief)$ between the small and medium budget. In contrast, CALP produces three feasible policies with a significant difference in $\funcApxV(\stBelief)$ between all the budgets. 
In the paint problem, ITLP produces policies with nearly identical $\funcApxV(\stBelief)$ regardless of the budget, once again grossly exceeding the constraint. 
CALP performs considerably better, producing a much greater variation in $\funcApxV(\stBelief)$ and exceeding the budget by much less, but still produces infeasible policies for the small and medium budget cases.
\begin{table}[!t]
\centering
\caption{Infinite horizon CPOMDP results.}
\label{cpomdp:tab:infinite-constrained-summary}
\resizebox*{!}{0.995\textwidth}{
\begin{tabular}{lllrrrrrrr}
\toprule
& & & \multicolumn{3}{c}{Budget level -- ITLP} & \multicolumn{3}{c}{Budget level -- CALP} & \\
\cmidrule(lr){4-6}\cmidrule(lr){7-9}
Problem & $|\setGrid{}|$ & & small & medium & large & small & medium & large & IP/LB\\
\midrule
tiger & 200 & $\SimV(\stBelief)$ &  -38.99 &  -38.88 &   10.98 & -328.96 & -131.77 &  10.75 & 11.06 \\
    &     & $\LPV(\stBelief)$ & -291.91 & -118.46 &    8.51 & -332.02 & -135.40 &   6.13 & - \\
    &     & \% Over &   28.14 &    5.26 &    0.00 &    0.00 &    0.00 &   0.00 & - \\
    &     & cpu-trans &   58.99 &   58.99 &   58.99 &    1.21 &    1.21 &   1.21 & - \\
    &     & cpu-lp &    0.18 &    0.18 &    0.18 &    0.08 &    0.08 &   0.08 & - \\ 
\midrule
paint & 200 & $\SimV(\stBelief)$ &    1.58 &    1.61 &    1.65 &    0.05 &    0.95 &   1.62 & 1.69\\
    &     & $\LPV(\stBelief)$ &    0.37 &    0.88 &    1.34 &    0.21 &    0.83 &   1.37 & - \\
    &     & \% Over &   27.32 &   11.69 &    0.00 &    0.77 &    2.54 &   0.00 & - \\
    &     & cpu-trans &   50.26 &   50.26 &   50.26 &    1.47 &    1.47 &   1.47 & - \\
    &     & cpu-lp &    0.25 &    0.24 &    0.24 &    0.12 &    0.12 &   0.12 & - \\ 
\midrule
mcc & 200 & $\SimV(\stBelief)$ &    0.37 &    0.52 &    0.52 &    0.51 &    0.52 &   0.52 & 0.52 \\
    &     & $\LPV(\stBelief)$ &    0.56 &    0.66 &    0.69 &    0.52 &    0.52 &   0.52 & - \\
    &     & \% Over &    4.29 &    5.25 &    0.00 &   20.39 &   18.14 &  11.80 & - \\
    &     & cpu-trans &   38.64 &   38.64 &   38.64 &    1.67 &    1.67 &   1.67 & - \\
    &     & cpu-lp &    0.25 &    0.25 &    0.25 &    0.08 &    0.08 &   0.08 & - \\ 
\midrule
query & 50  & $\SimV(\stBelief)$ &   34.92 &   40.02 &   47.45 &   33.74 &   41.98 &  47.11 & 49.37 \\
    &     & $\LPV(\stBelief)$ &   36.66 &   42.97 &   48.75 &   42.43 &   45.75 &  49.11 & - \\
    &     & \% Over &    0.00 &    0.01 &    0.00 &    0.00 &    4.71 &   0.00 & - \\
    &     & cpu-trans &    7.83 &    7.83 &    7.83 &    0.31 &    0.31 &   0.31 & - \\
    &     & cpu-lp &    0.08 &    0.08 &    0.08 &    0.02 &    0.02 &   0.02 & - \\
\cmidrule(lr){2-10}
    & 200 & $\SimV(\stBelief)$ &   35.25 &   41.59 &   47.94 &   35.01 &   42.35 &  47.91 & 49.17 \\
    &     & $\LPV(\stBelief)$ &   36.55 &   42.22 &   48.51 &   41.52 &   45.21 &  48.69 & - \\
    &     & \% Over &    0.99 &    3.36 &    0.00 &    0.42 &    5.48 &   0.00 & - \\
    &     & cpu-trans &   81.71 &   81.71 &   81.71 &    1.91 &    1.91 &   1.91 & - \\
    &     & cpu-lp &    0.41 &    0.41 &    0.41 &    0.15 &    0.15 &   0.15 & - \\
\cmidrule(lr){2-10}
    & 500 & $\SimV(\stBelief)$ &   33.94 &   41.31 &   48.33 &   34.18 &   42.88 &  48.46 & 49.22 \\
    &     & $\LPV(\stBelief)$ &   36.53 &   42.24 &   48.24 &   41.46 &   44.98 &  48.80 & - \\
    &     & \% Over &    0.00 &    2.19 &    0.00 &    0.00 &    6.01 &   0.00 & - \\
    &     & cpu-trans &  421.66 &  421.66 &  421.66 &    8.56 &    8.56 &   8.56 & - \\
    &     & cpu-lp &    1.42 &    1.39 &    1.39 &    2.09 &    2.08 &   2.08 & - \\ 
\midrule 
$4\times 3$ & 200 & $\SimV(\stBelief)$ &    0.68 &    0.90 &    1.14 &   -0.14 &   -0.14 &  -0.13 & 1.16\\
    &     & $\LPV(\stBelief)$ &    0.39 &    0.48 &    0.81 &    0.24 &    0.26 &   0.38 & - \\
    &     & \% Over &    0.00 &    2.32 &    0.00 &    0.00 &    0.00 &   0.00 & - \\
    &     & cpu-trans &  155.41 &  155.41 &  155.41 &    9.22 &    9.22 &   9.22 & - \\
    &     & cpu-lp &    0.32 &    0.32 &    0.32 &    0.13 &    0.12 &   0.12 & - \\
\midrule 
rocksample & 1000 & $\SimV(\stBelief)$ &   3.09 &    3.73 &    5.90 &   3.02 &   4.14 &  5.69 & 15.32\\
    &     & $\LPV(\stBelief)$ &    20.57 &    22.10 &    22.38 &   20.57 &  22.17 &   22.44 & - \\
    &     & \% Over &    0.22 &    1.81 &    0.00 &   0.23 &    2.33 &   0.00 & - \\
    &     & cpu-trans &  4609.18 &  4609.18 &  4609.18 &   104.19 &    104.19 &  104.19 & - \\
    &     & cpu-lp &    7.01 &    7.01 &    7.01 &    3.26 &   3.03 &  3.03 & - \\
\bottomrule
\end{tabular}
}
\end{table}

CALP performs poorly for the mcc problem, producing policies which all significantly exceed the budget and all yield a nearly identical reward regardless of budget. ITLP fails to produce feasible policies for the small and medium budget cases, but shows a considerably higher change in $\funcApxV(\stBelief)$ between the small and medium budgets and, in the large budget case, yields a reward identical to the unconstrained optimal policy while coming in under budget. CALP is also able to produce a policy which yields the same reward as an optimal unconstrained policy, but it does so by exceeding the budget by nearly 12\%.

Unlike in the finite horizon case, ITLP produces policies for the $4\times 3$ problem which perform exceptionally well, only exceeding the medium budget. 
There is also a significant increase in $\funcApxV(\stBelief)$ with increasing budget. In contrast, CALP only generates excessively conservative policies, all of which are feasible but none of which comes close to $\funcApxV(\stBelief)$ generated by ITLP for the low budget case.

The CPU run time values in Table~\ref{cpomdp:tab:infinite-constrained-summary} suggest that, for infinite horizon problems, problems of similar size to the rocksample problem are approaching the size limit that the ITLP algorithm can solve within a reasonable amount of time. 
The ITLP algorithm yields a greater reward than CALP in both the small and large budget cases and adheres to the budget better than CALP in the small and medium budget cases as well. 
Additionally, the reward values increase as the budget is increased.
Each of these results suggests that the ITLP algorithm still performs well for this problem instance, but the runtime, particularly for generating the transition probabilities (i.e., cpu-trans), becomes excessively long at this scale.

Figure~\ref{cpomdp:fig:inf-horizon-sim-comparison} provides a comparison of ITLP and CALP for varying grid sizes for the query problem, which is selected as the representative problem instance. 
The most notable effect of increasing grid set size for infinite horizon ITLP is that, similar to the finite horizon results, the region of budgets over which ITLP exceeds the budget constraint generally tends to narrow as the number of grid points increases. 
In contrast, the region over which CALP policies exceed the budget constraint actually increases slightly with the number of grid points. 
The results demonstrate that for policies which exceed the budget, the policy with the greater value tends to be the one that exceeds the budget with a greater margin. 
As policies generated by ITLP tend to exceed the budget by less, they also yield moderately lower values.
\begin{figure}[htb]
    \centering
    \subfloat[$|\setGrid{}|=50$ -- simulated value \label{fig:query-inf-50-grids-value}]{\includegraphics[width=0.490\textwidth]{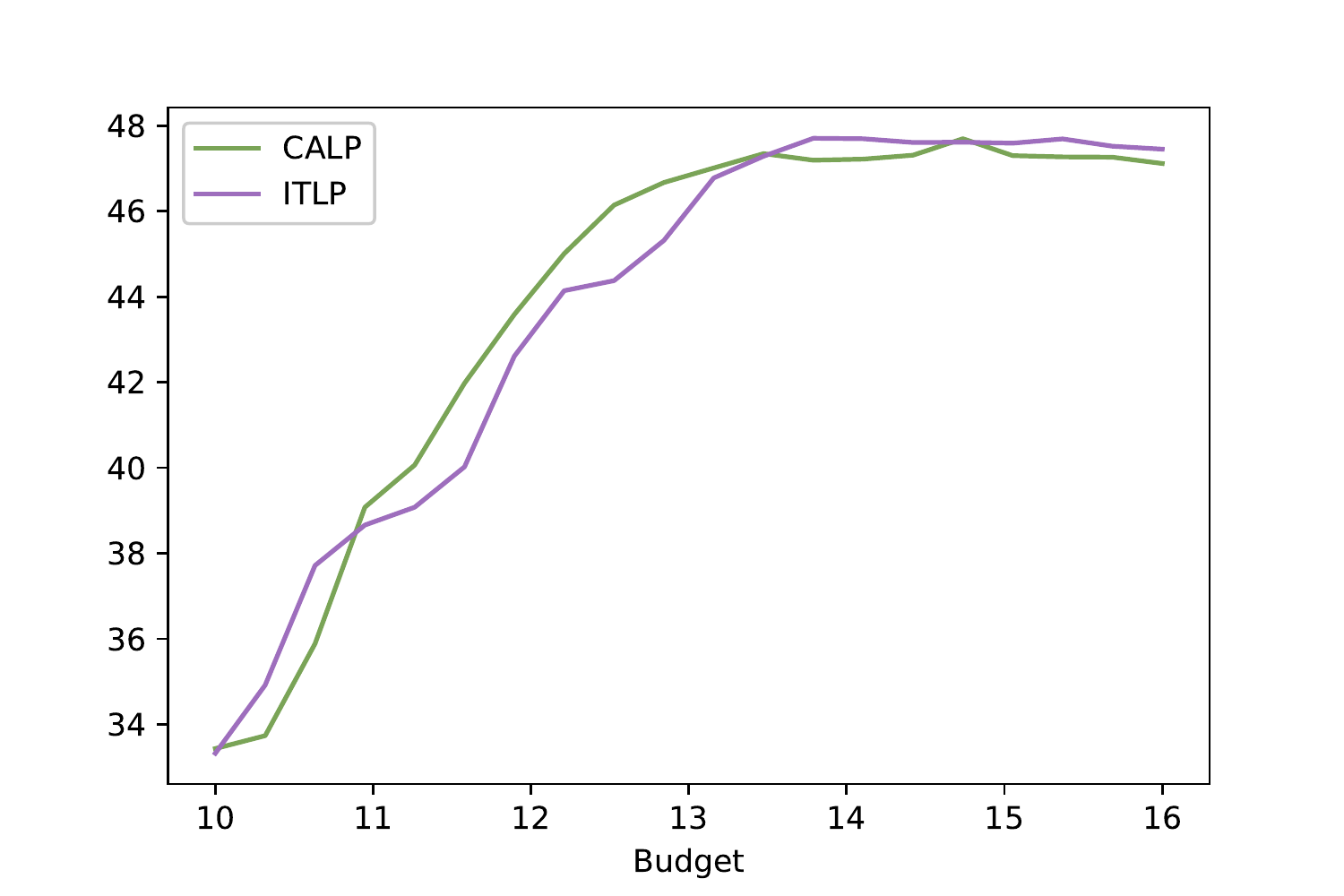}}
    \hfill
    \subfloat[$|\setGrid{}|=50$ -- budget adherence \label{fig:query-inf-50-grids-cost}]{\includegraphics[width=0.490\textwidth]{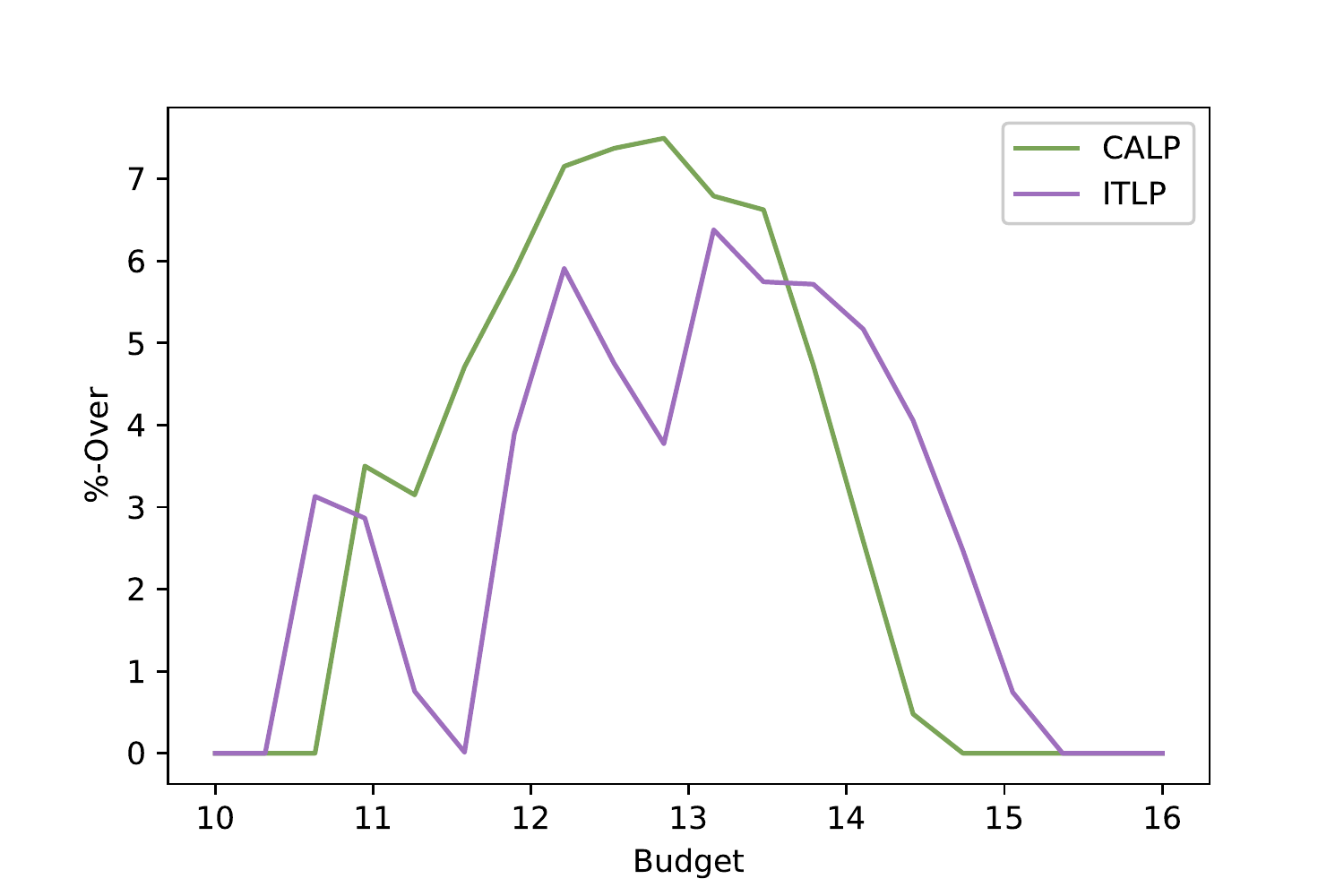}} \\
    \subfloat[$|\setGrid{}|=200$ -- simulated value \label{fig:query-inf-200-grids-value}]{\includegraphics[width=0.490\textwidth]{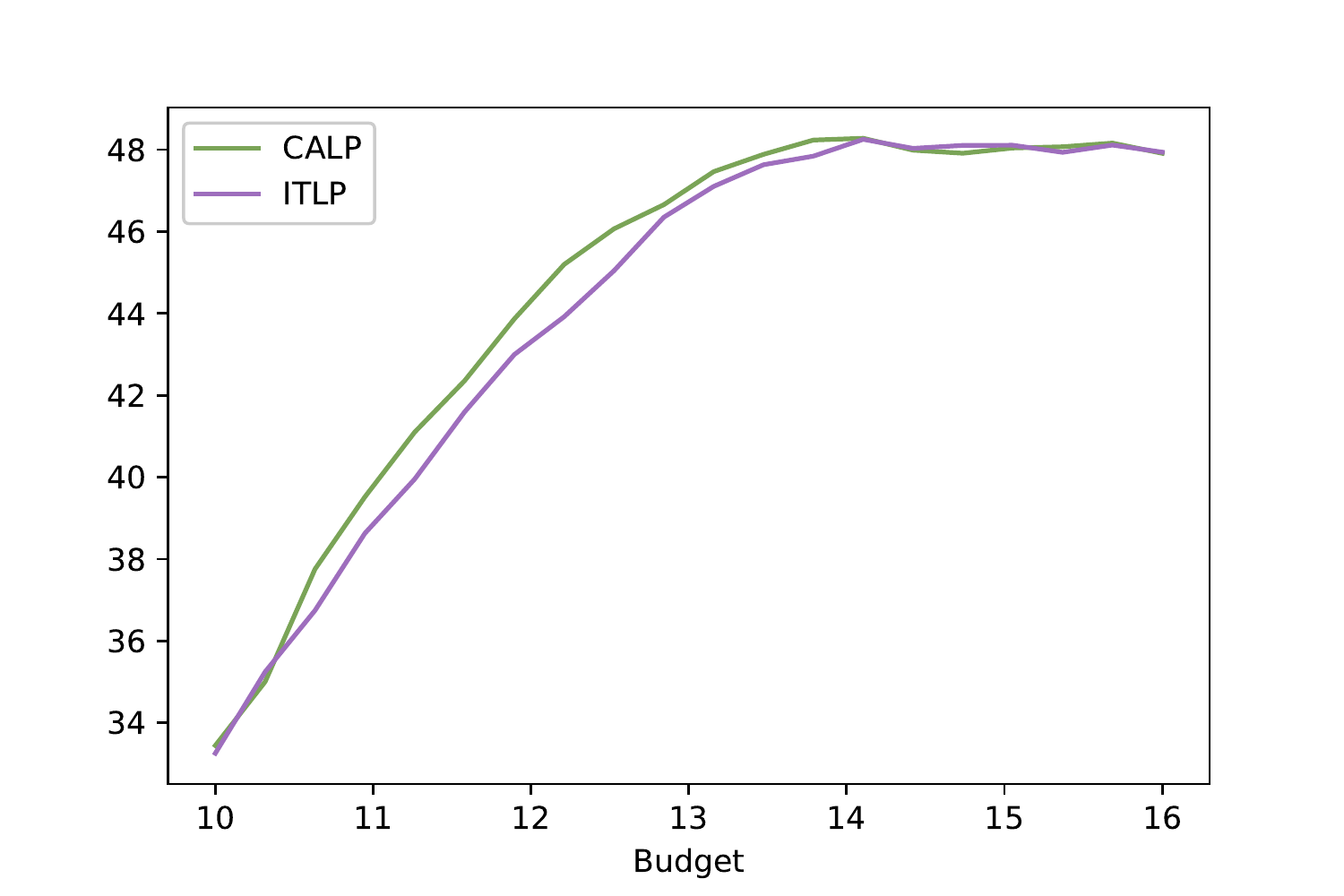}} 
    \subfloat[$|\setGrid{}|=200$ -- budget adherence \label{fig:query-inf-200-grids-cost}]{\includegraphics[width=0.490\textwidth]{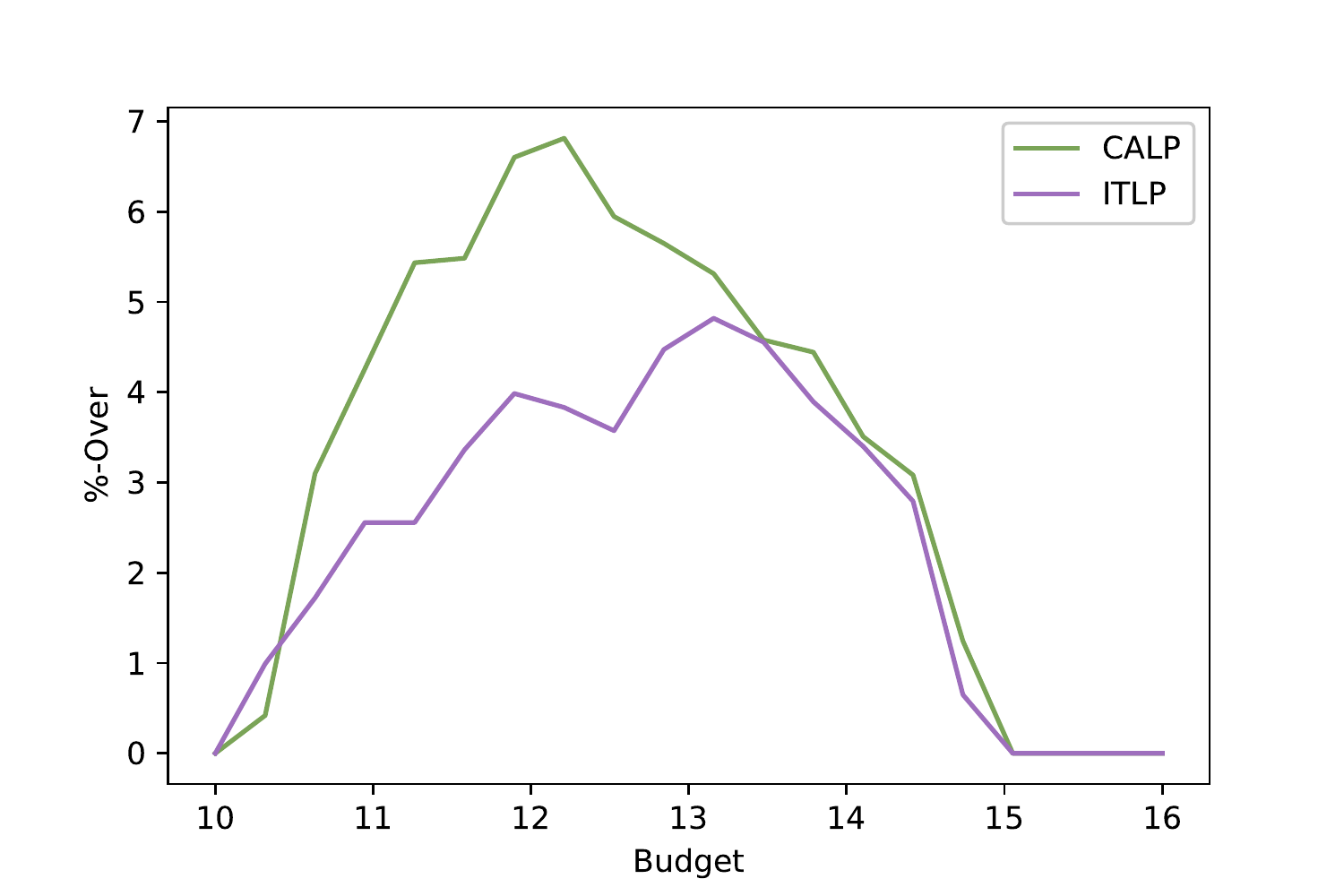}} \\
    \subfloat[$|\setGrid{}|=500$ -- simulated value\label{fig:query-inf-500-grids-value}]{\includegraphics[width=0.490\textwidth]{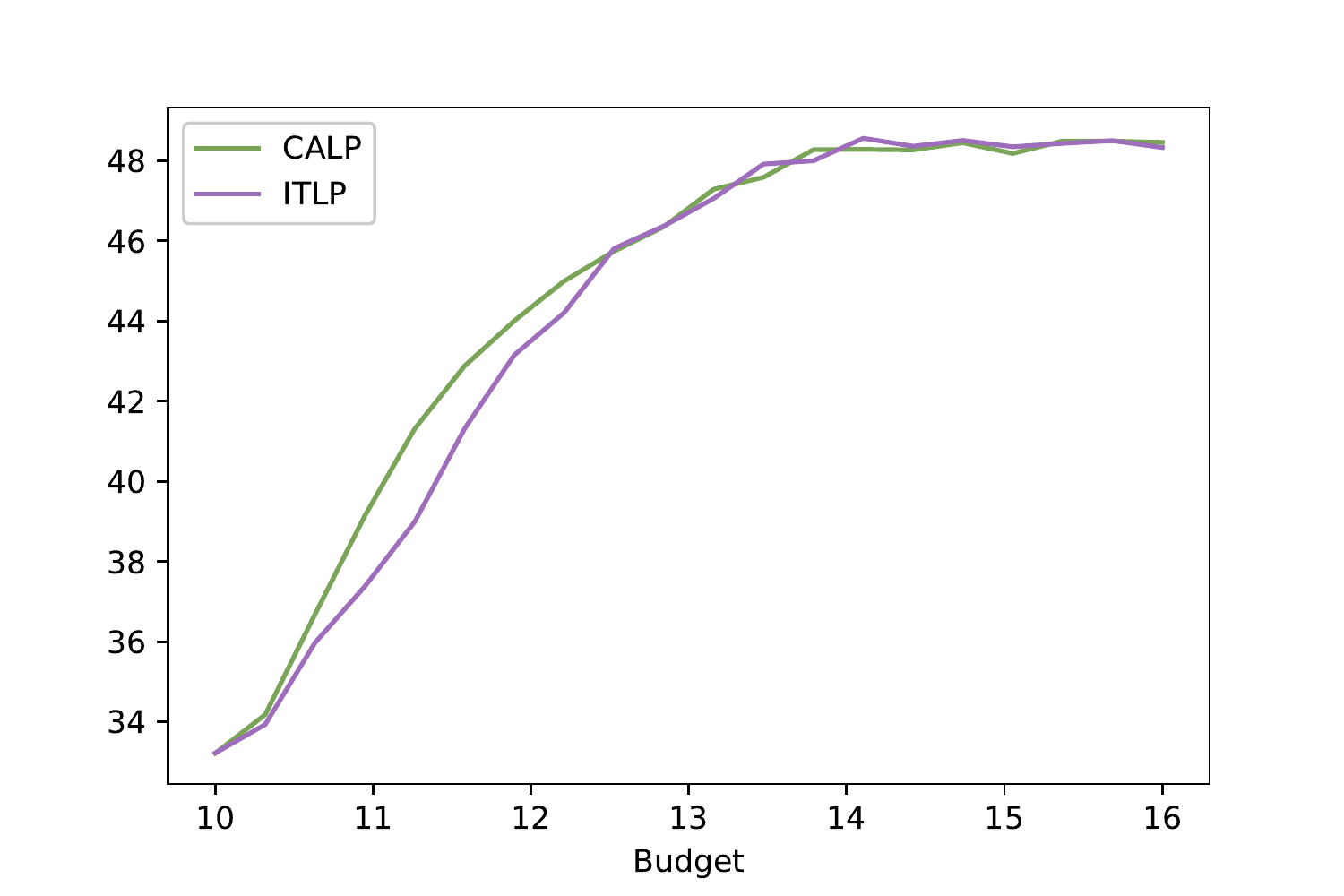}}
    \subfloat[$|\setGrid{}|=500$ -- budget adherence\label{fig:query-inf-500-grids-cost}]{\includegraphics[width=0.490\textwidth]{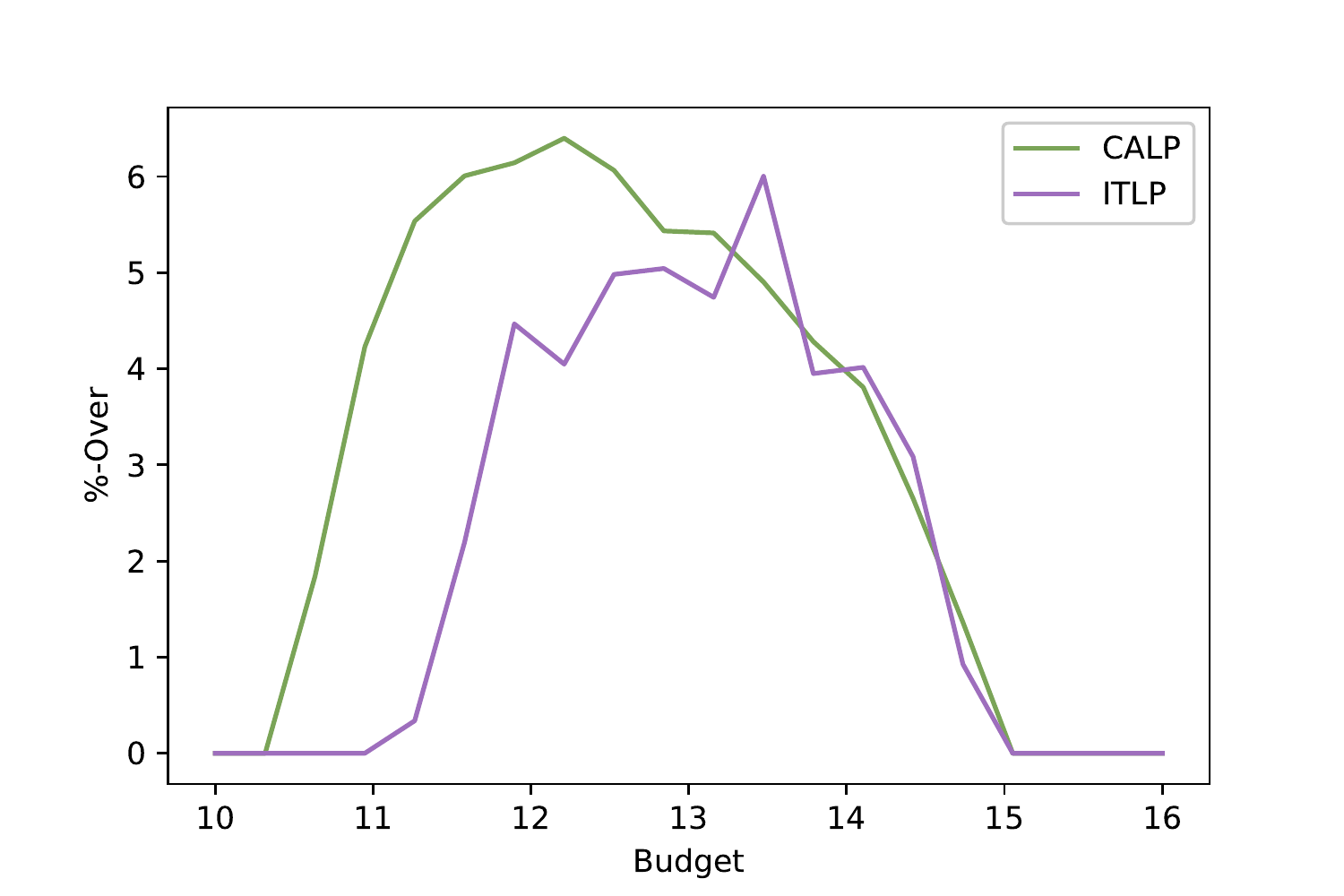}}
    \caption{Comparison of infinite horizon CPOMDP policies for different grid set sizes (budget adherence is given by Equation~\ref{cpomdp:eq:percent-over}).}
    \label{cpomdp:fig:inf-horizon-sim-comparison}
\end{figure}

The results for the query problem in Table~\ref{cpomdp:tab:infinite-constrained-summary} provide policy generation times for each of these policies for different grid set sizes. 
In the finite horizon case, the LP is considerably more difficult to solve, which is reflected in Table~\ref{cpomdp:tab:finite-complete-summary}. 
Specifically, observe that, unlike in Table~\ref{cpomdp:tab:finite-complete-summary}, cpu-lp is insignificant in comparison to cpu-trans for the infinite horizon case. 
In addition, the results show that ITLP's iterative approach to generating transition probabilities leads to considerably higher run times compared to CALP, with the cpu-lp also being slightly longer for the ITLP algorithm in most cases.

\subsection{Impact of deterministic policy constraints}\label{sec:results-deterministic-policy}

When a POMDP is transformed into a CPOMDP by adding a budget constraint, it can no longer be assumed that an optimal deterministic policy exists~\citep{puterman2014markov, Kim2011}. Despite this, there are still many applications where a suboptimal deterministic policy is preferred to an optimal randomized one, such as in the field of medical decision-making~\citep{Ayvaci2012, cevik2018analysis}. 
Accordingly, we quantify the impact of the deterministic policy constraints on the quality of the resulting policies. 
As a deterministic policy obtained with this method cannot collect a higher reward than its randomized policy counterpart, the performance of the LP and MIP approaches are compared using ``$\% \text{ Difference}$'', calculated as
\begin{equation}
    \% \text{ Difference}=100 \times \dfrac{
    \LPV(\stBelief) - \MIPV(\stBelief)
    }{
    |\LPV(\stBelief)|
    }
\end{equation}

In this experiment, 100 distinct belief states are considered for each of the three budget levels and the aggregate \% Difference values are reported in Figures~\ref{cpomdp:fig:finite-det-gap} and~\ref{cpomdp:fig:infinite-det-gap}. 
In the finite horizon case, except for the $4\times 3$ and rocksample problems, there is very little difference between the value attained by policies from LP and MIP models. 
In contrast, Figure~\ref{cpomdp:fig:infinite-det-gap} shows that a significant number of infinite horizon policies perform substantially worse when subject to deterministic policy constraints.

\begin{figure}[!ht]
    \centering
    \includegraphics[width=\textwidth]{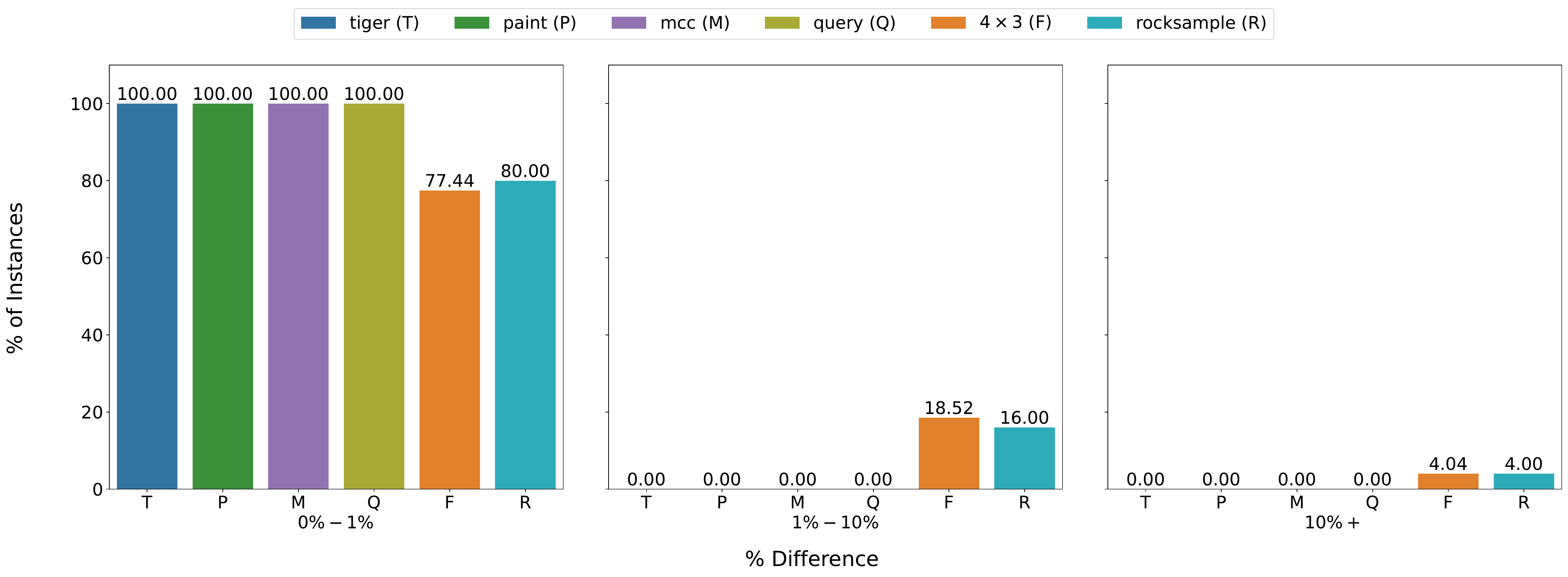}
    \caption{Finite horizon \% Difference between LP and MIP policy values.}
    \label{cpomdp:fig:finite-det-gap}
\end{figure}

\begin{figure}[!ht]
    \centering
    \includegraphics[width=\textwidth]{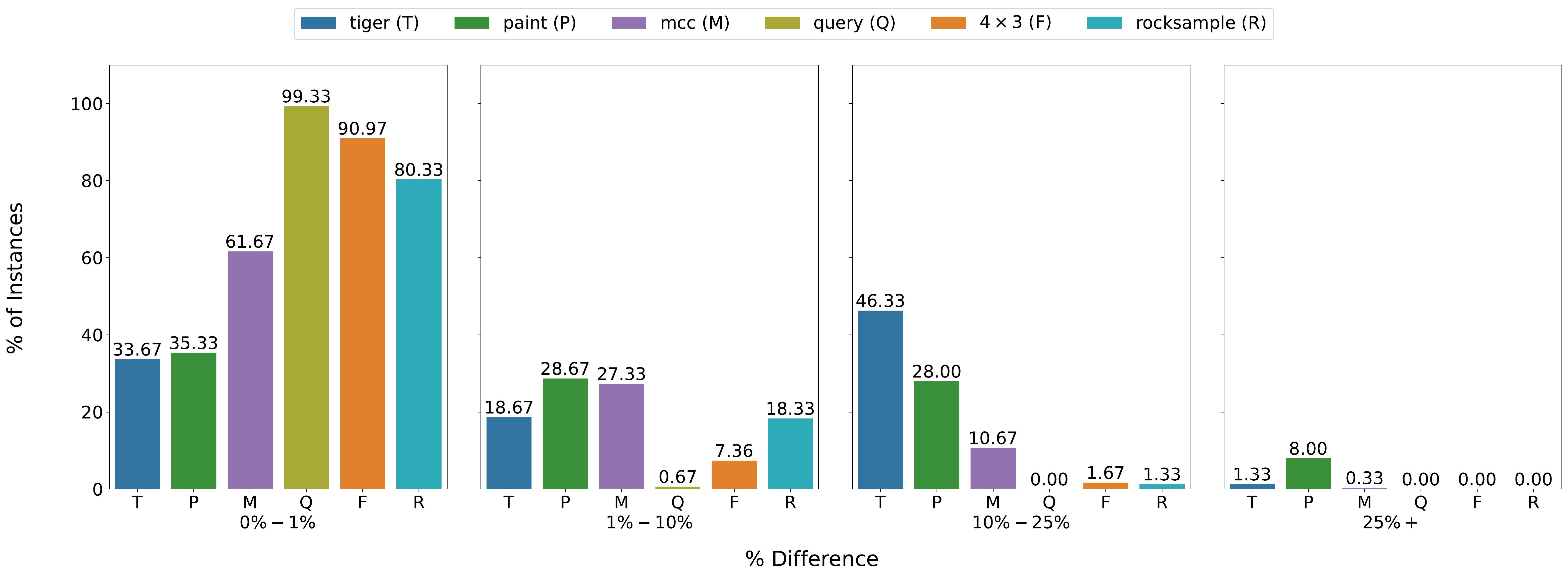}
    \caption{Infinite horizon \% Difference between LP and MIP policy values.}
    \label{cpomdp:fig:infinite-det-gap}
\end{figure}

Each of the problems with an exploratory action (tiger, paint, mcc) are able to adapt very well to the finite horizon deterministic policy constraints but, as expected, struggle in an infinite horizon setting as they cannot distribute their exploration over the various time steps in an infinite horizon setting. 
Notably, deterministic policies lead to a smaller value deterioration for the $4\times 3$ problem in an infinite horizon setting. 
This result is somewhat expected: recall that the $4\times 3$ agent must traverse a grid to reach the goal state. 
In the finite horizon case with $\timeHorizon=5$, budget limitations could prevent the agent from following a path that leads to the goal using the available number of actions. 
This can cause the agent not to collect any rewards in some cases. 
In the infinite horizon formulation, the agent can simply take a longer path, incurring a small penalty for taking a higher number of decision epochs to reach the goal, but these additional negative rewards are substantially smaller than the impact of not reaching the goal state.

The rocksample problem is most similar to the $4\times 3$ problem, as each of these problems involves an agent
traversing through a grid. 
Unsurprisingly, in Figure~\ref{cpomdp:fig:finite-det-gap} the results for each of these problems are similar, and of the problems in the test suite, it is these grid-traversal problems that are most sensitive to deterministic policy constraints in the finite-horizon setting. 
However, unlike the $4\times 3$ problem, the results for the rocksample problem do not improve much in the infinite horizon case. 
One possible explanation for this difference is that, unlike in the $4\times 3$ problem where deterministic constraints may make the agent take a longer path, in the rocksample problem the agent may be prevented from checking a rock, which ultimately means that it may not be able to sample that rock. 
As sampling is the most rewarding action in the rocksample problem, being unable to check if a rock is good or bad can have a substantial impact on the collected reward.
Despite these results, at least 80\% of starting beliefs for the rocksample problem result in a policy that yields a reward
within 1\% of its stochastic counterpart. 
This observation is promising for practical use, as it indicates that, in most instances, the impact of deterministic constraints is negligible.

Finally, an investigation into the effect of deterministic policy constraints on run time is conducted. 
Some adjustments are made to the previous setup to ensure that only statistically relevant differences in the run times are observed. 
In particular, only finite horizon problems are considered, all problems are considered with a horizon of $T=20$, and all grid sets are constructed with a size $|\setGrid{}|=500$ except for the rocksample problem, which uses $|\setGrid|=1000$. 
For all problems, the CPU times for 100 distinct budget levels in both the randomized (LP) and deterministic (MIP) cases are considered. 
The collected run times reflect only the amount of time spent by the LP solver searching for an optimal policy. 
The solver is given 1,000 seconds to search for a solution before timeout. 
It should be noted that these experiments were also conducted on each problem in an infinite horizon setting, but these results did not produce any statistically relevant run time differences. 
This result can be attributed to the fact that LP/MIP models are substantially smaller in the case of infinite horizon problems, rendering the impact of additional deterministic policy constraints on run times negligible.
However, for larger grid set sizes (e.g., in the order of tens of thousands), one might observe noticeable differences for infinite horizon problems as well.

\begin{figure}[!ht]
    \centering
    \includegraphics[width=0.75\textwidth]{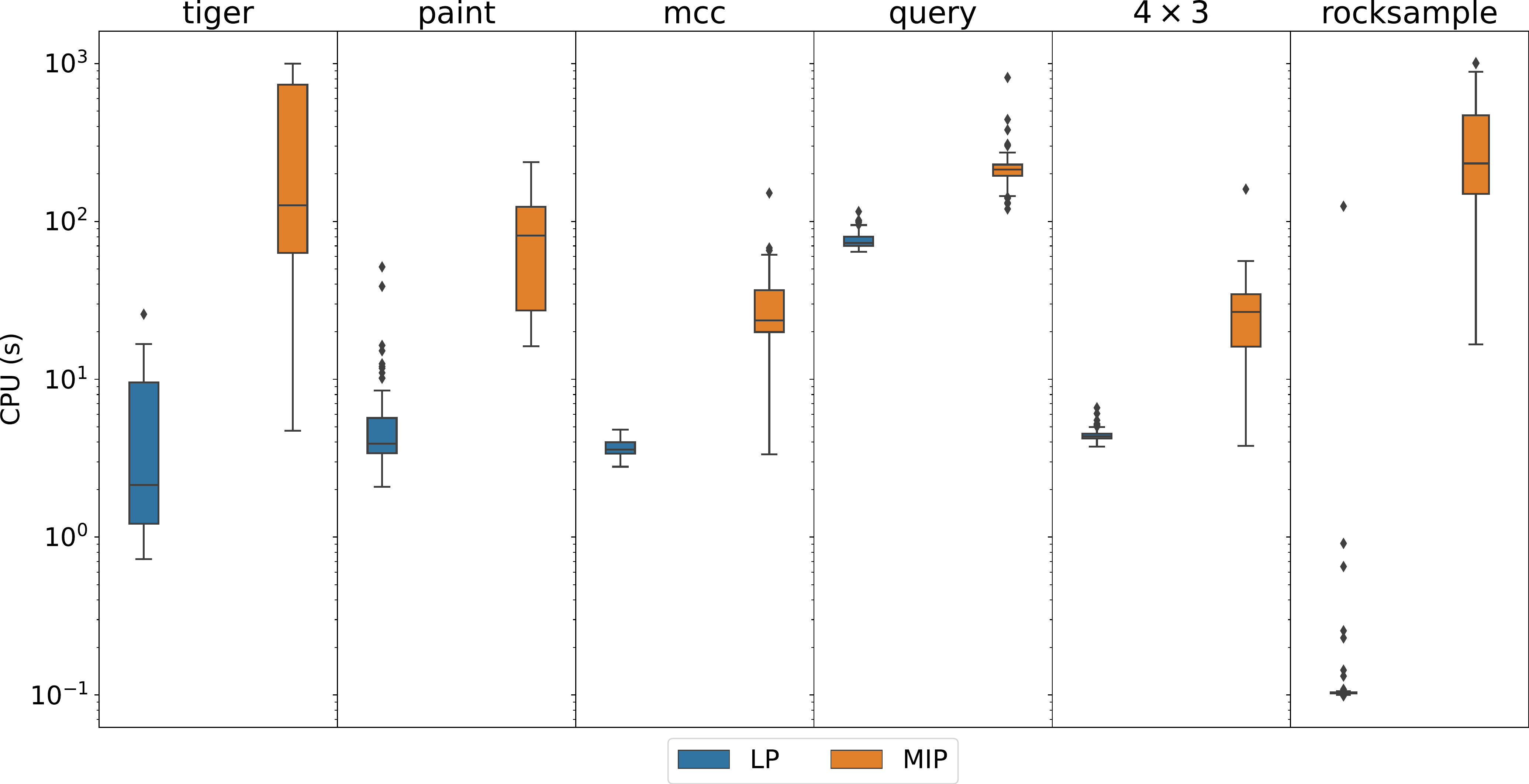}
    \caption{CPU time for LP versus MIP constrained problems over 100 budgets using a 1,000 second timeout. For each problem, $T=20$ and $|\setGrid{}|=500$ ($|\setGrid|=1000$ for the rocksample problem).}
    \label{fig:lp-vs-mip-run-times}
\end{figure}

Figure~\ref{fig:lp-vs-mip-run-times} compares the run times for the finite horizon LP and MIP problems on a log scale. 
Surprisingly, the tiger problem leads to the highest number of policies exceeding the time limit, with 23 of 100 cases timing out at 1000 seconds. 
The only other problem to experience timeout was the rocksample problem, which timed out at 1000 seconds for 13 of its 100 budgets. 
Additionally, Figure~\ref{fig:lp-vs-mip-run-times} shows that the rocksample problem, which has the largest number of states, the largest number of actions, and the number of grid points, is the quickest LP problem to optimize. 
The LP run times for the rocksample problem formulations average an order of magnitude below the tiger problem, which has the next closest LP run times. 
This result can be explained by the fact that the transition probability matrix for the rocksample problem is considerably more sparse than each of the other problems, as most actions (other than those taken in the terminal state) result in a deterministic transition. 
The fact that the LP formulation of the rocksample problem is the quickest to optimize while the MIP formulation is one of the slowest demonstrates the difficulty in predicting the runtime of MIP problems, which are theoretically NP-hard.

Considering each model individually, MIP models tend to take considerably longer than their LP counterparts, and they also contribute much more variance to run time. 
Additionally, MIP models require a timeout period to be specified to ensure that they complete execution in a reasonable amount of time. 
However, while the run times are considerably longer, they are not impractical and, in combination with the results in Figure~\ref{cpomdp:fig:finite-det-gap}, it is clear that deterministic policies, especially in the finite horizon case, are viable to produce in a practical setting, and are not always substantially inferior to stochastic policies. 
Furthermore, the results from the rocksample problem demonstrate that even for large core state spaces (e.g., for $|\setState| > 100$), generating deterministic policies can still be viable.

\section{Conclusion}
\label{cpomdp:conclusion}

This study provided an investigation into LP-based solution approaches for CPOMDPs.
Both finite horizon and infinite horizon CPOMDP problems were examined through an adaptation of six POMDP problem instances from the literature.
The validity of the proposed solution approach was first examined by quantifying the performance of the LB and UB approximation algorithms. 
Then, the proposed CPOMDP solution algorithm, ITLP, was developed as an extension of the UB algorithm.
The numerical analysis with ITLP for finite horizon problems revealed that, while some policies exceeded the budget constraint, all of them adapted to the constraint. 
This suggests that, as proposed by~\citet{poupart2015}, a post-processing step can be adopted to obtain feasible, tightly constrained policies.

For infinite horizon CPOMDPs, ITLP was compared to~\citet{poupart2015}'s CALP algorithm. 
These algorithms each work by reducing a CPOMDP to a constrained MDP through the use of a grid set. The numerical results revealed that while neither algorithm is best suited for all CPOMDP problems, each is able to outperform the other in several problem instances. 
Consistent with \citet{poupart2015}'s findings, both algorithms struggled to consistently produce policies that perfectly adhere to the budget constraint. 
To that extent, the degree to which generated policies exceeded their budget was compared. 
This allowed for a more accurate comparison of the performance of each algorithm and offered insights into the various factors which contribute to budget adherence. 
The key difference between the two algorithms lies in the generation of approximate transition probabilities. 
CALP approximates transition probabilities using a single time step, while ITLP iteratively generates transition probabilities until convergence. This iterative process causes longer run times for ITLP, but the results for both the $4\times 3$ and mcc problems showed that it allows the algorithm to generate far superior policies in some cases. ITLP further distinguishes itself from CALP by providing a mechanism for solving finite horizon problems, by supporting deterministic policy constraints, and by allowing for a probability distribution to be specified over $\setGrid$, as opposed to enforcing that a single grid $\vectGrPt\in\setGrid$ be chosen as the initial belief state.
In the finite horizon setting, the results were validated based on the construct of the ITLP algorithm.
Specifically, the ITLP results were compared against their unconstrained counterparts for each test problem.
It was observed that, as the budget restrictions were relaxed, the expected total rewards for CPOMDP policies tended toward that of their unconstrained POMDP counterparts.
Additionally, as the formulations of the ITLP algorithm for both finite and infinite horizon problems are similar, the comparable performance of ITLP with CALP in an infinite horizon context bolsters the promising observations made in the finite horizon setting.

The numerical study is concluded with an investigation into the performance of deterministic policy constraints on both the expected total rewards and run times for each CPOMDP problem. 
The results showed that, while in theory optimal CPOMDP policies are stochastic, in many contexts and especially in the finite horizon case, deterministic policies lead to a negligible deterioration in expected total rewards. Additionally, while run times were observed to be considerably longer for deterministic policy generation, typical run times are not prohibitively long when the grid set sizes or decision epochs are not excessively large. 
Together, these results demonstrate that deterministic CPOMDP policies can be viable in domains that require them, e.g., healthcare, while also demonstrating that stochastic policies, although potentially more difficult to implement in practice, are preferred when possible.

This study comes with several limitations, which can be potentially addressed in future research.
Specifically, while six different problem instances were used in the numerical study in order to identify the strengths and weaknesses in the proposed algorithm, a more extensive empirical analysis can help further validate these findings.
In this regard, future research can focus on the applications of these methods to practical problems in different domains, including disease screening problems in healthcare and machine maintenance problems in manufacturing systems.
Such specific applications can also allow for assessing the impact of incorporating different constraints, including threshold-type policy constraints, into the problem formulation.
The considered linear programming-based approaches can also be extended to solve multiobjective POMDPs/CPOMDPs, which have many practical applications~\citep{roijers2015point}.
Note that budget adherence was identified as a limitation for both ITLP and CALP through extensive simulations with the generated CPOMDP policies.
The numerical results showed that the design of the grid set, particularly its size, helps achieve increased budget adherence.
Furthermore, it was demonstrated that while the LP formulation allows for additional constraints to be incorporated, such additions come at the expense of computational overhead. 
In such cases, advanced optimization techniques such as column and row generation may help to improve solvability.
While the ITLP algorithm remained competitive for problems with a relatively low number of states, the results showed that for large problems with more than 100 states, the run time of the ITLP algorithm in the infinite horizon case can be significantly higher when compared to \citet{poupart2015}'s CALP algorithm. For such large problem instances, reinforcement learning-based approaches such as \citet{lee2018monte}'s Monte Carlo Tree Search algorithm can be considered as an alternative to the algorithms explored here.
In addition, similar to many other existing studies on CPOMDPs, this study does not establish theoretical bounds on the achievable CPOMDP expected total rewards with respect to the employed grid-based approximation.
Considering the lack of efficient exact solution methods for CPOMDPs, future research in this direction can explore approximation methods with performance guarantees.
Lastly, it was observed that the occupancy measure values were often very small, leading to numerical stability issues. This was particularly a problem in the finite horizon setting.
The investigation of such computational issues is left to future work.

\section*{Acknowledgements}
This research was enabled in part by the support provided by the Digital Research Alliance of Canada (\url{alliancecan.ca}).

\section*{Data availability statement}
All the datasets are publicly available, and can be obtained using the cited sources.

\section*{Disclosure statement}
No potential conflict of interest was reported by the authors.


\bibliographystyle{spbasic}
\bibliography{POMDPLit}

\clearpage
\appendix

\section*{Appendix}
\renewcommand{\thefigure}{A\arabic{figure}}
\setcounter{figure}{0}
\renewcommand{\thetable}{A\arabic{table}}
\setcounter{table}{0}
\section{Proof of upper bound approximation}\label{sec:appendix_ubproof}
The proof of the upper bound approximation requires two intermediate theorems:
\begin{theorem}\label{thm:piecewise-linear-and-convex}
    The optimal value function, $\funcOptV^*$, is piecewise linear and convex.
\end{theorem}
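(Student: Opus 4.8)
The plan is to prove the claim by \emph{backward induction} on the decision epoch $\indTime$, establishing the stronger statement that each $\funcOptV_{\indTime}^*$ admits a representation of the form \eqref{eq:alpha-vectors}, namely $\funcOptV_{\indTime}^*(\vectBePt)=\max_{\alpha\in\setAlpha_{\indTime}}\{\vectBePt\cdot\alpha\}$ for a \emph{finite} set $\setAlpha_{\indTime}$ of vectors in $\mathbb{R}^{|\setState|}$. Any such representation is automatically piecewise linear and convex (PWLC): each $\vectBePt\cdot\alpha$ is linear, $\beSimp$ is partitioned into finitely many regions on which a single $\alpha$ attains the maximum, and a pointwise maximum of linear functions is convex. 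For the base case $\indTime=\timeHorizon$, equation \eqref{eq:bellman-T-optimal} gives $\funcOptV_{\timeHorizon}^*(\vectBePt)=\sum_{\indState\in\setState}\bePt_{\indState}\rewFinal_{\indState}=\vectBePt\cdot\rewFinal$, which is a maximum over the singleton set $\setAlpha_{\timeHorizon}=\{\rewFinal\}$.

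For the inductive step I would assume $\funcOptV_{\indTime+1}^*(\vectBePt)=\max_{\alpha\in\setAlpha_{\indTime+1}}\{\vectBePt\cdot\alpha\}$ and analyze $\funcOptV_{\indTime}^{\indAction}$ from \eqref{eq:bellman-iteration-step}. The key step — and the main obstacle — is to handle the nonlinearity introduced by the Bayesian update \eqref{eq:bayesian-update}: although $\vectBePt\mapsto\vectBePt'$ is a ratio, its normalizing denominator is exactly the observation likelihood $\Pr_{\indTime}(\indObs\mid\vectBePt,\indAction)=\sum_{\indState\in\setState}\sum_{\indStateNew\in\setState}\bePt_{\indState}\parfObs_{\indStateNew\indObs}^{\indTime\indAction}\parfTrp_{\indState\indStateNew}^{\indTime\indAction}$, and I expect it to cancel. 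Concretely, I would first reorganize the double sum in \eqref{eq:bellman-iteration-step} by pulling the observation sum outside, using the fact that $\vectBePt'$ depends only on $\indObs$ (together with $\indAction,\indTime$) and not on the summation indices $\indState,\indStateNew$, to rewrite the future-value term as $\sum_{\indObs\in\setObs}\Pr_{\indTime}(\indObs\mid\vectBePt,\indAction)\,\funcOptV_{\indTime+1}^*(\vectBePt')$.

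Next, substituting the inductive hypothesis and moving the nonnegative scalar $\Pr_{\indTime}(\indObs\mid\vectBePt,\indAction)$ inside the maximum, the denominator of \eqref{eq:bayesian-update} cancels and each observation term collapses to $\max_{\alpha\in\setAlpha_{\indTime+1}}\{\vectBePt\cdot\gamma_{\indObs,\alpha}^{\indTime\indAction}\}$, where $\gamma_{\indObs,\alpha}^{\indTime\indAction}$ is the vector with components $\big(\gamma_{\indObs,\alpha}^{\indTime\indAction}\big)_{\indState}=\sum_{\indStateNew\in\setState}\parfObs_{\indStateNew\indObs}^{\indTime\indAction}\parfTrp_{\indState\indStateNew}^{\indTime\indAction}\alpha_{\indStateNew}$, a genuinely \emph{linear} function of $\vectBePt$. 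This cancellation is the crux; once it is verified, every remaining step is merely closure of the PWLC class under the operations appearing in the recursion.

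Finally I would assemble the pieces. The immediate reward $\sum_{\indState\in\setState}\bePt_{\indState}\parfRew_{\indState}^{\indTime\indAction}$ is linear in $\vectBePt$. The sum over observations of pointwise maxima is again a pointwise maximum of linear functions via the cross-sum identity $\sum_{\indObs}\max_{\alpha}\{\vectBePt\cdot\gamma_{\indObs,\alpha}\}=\max_{(\alpha_{\indObs})}\{\vectBePt\cdot\sum_{\indObs}\gamma_{\indObs,\alpha_{\indObs}}\}$, in which the outer maximum ranges over all assignments of one $\alpha_{\indObs}\in\setAlpha_{\indTime+1}$ to each observation; this produces a finite alpha-vector set of size $|\setAlpha_{\indTime+1}|^{|\setObs|}$ representing $\funcOptV_{\indTime}^{\indAction}$. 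Taking the outer maximum over $\indAction\in\setAction$ in \eqref{eq:bellman-t-optimal} then corresponds to taking the union of these finite sets across actions, yielding $\setAlpha_{\indTime}$ and closing the induction. Since the only delicate point is the denominator cancellation, I would present that computation carefully and keep the closure arguments brief, taking care only to confirm that each $\setAlpha_{\indTime}$ remains finite.
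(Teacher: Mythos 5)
Your argument is correct, but it is worth knowing that the paper does not actually prove this theorem: its entire ``proof'' is the one-line citation ``See \citet{Cassandra1995}.'' What you have produced blind is essentially the classical Smallwood--Sondik induction that this citation points to, so in substance you have supplied the self-contained argument the paper outsources. Your key steps all check out: the base case $\funcOptV_{\timeHorizon}^*(\vectBePt)=\vectBePt\cdot\rewFinal$ is a maximum over a singleton; regrouping \eqref{eq:bellman-iteration-step} into $\sum_{\indObs\in\setObs}\Pr_{\indTime}(\indObs\mid\vectBePt,\indAction)\,\funcOptV_{\indTime+1}^*(\vectBePt')$ is legitimate because $\vectBePt'$ depends only on $(\indObs,\indAction,\indTime)$ and not on the inner summation indices; pulling the nonnegative likelihood inside the maximum does cancel the denominator of \eqref{eq:bayesian-update}, which is the crux you correctly identified; and the cross-sum identity plus the union over actions keeps every $\setAlpha_{\indTime}$ finite, with $|\setAlpha_{\indTime}|\le|\setAction|\,|\setAlpha_{\indTime+1}|^{|\setObs|}$. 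What your route buys over the paper's is an explicit construction of the $\alpha$-vector sets appearing in \eqref{eq:alpha-vectors}, which also justifies, rather than merely asserts, the representation that Monahan-style enumeration and the LB method of Appendix~\ref{sec:appendix_lbproof} rely on.

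Two small points deserve a sentence each in a careful write-up. First, when $\Pr_{\indTime}(\indObs\mid\vectBePt,\indAction)=0$ the update \eqref{eq:bayesian-update} is undefined; you should note that in this case $\sum_{\indState\in\setState}\bePt_{\indState}\parfObs_{\indStateNew\indObs}^{\indTime\indAction}\parfTrp_{\indState\indStateNew}^{\indTime\indAction}=0$ for every $\indStateNew$, so $\vectBePt\cdot\gamma_{\indObs,\alpha}^{\indTime\indAction}=0$ for every $\alpha$ and your collapsed expression still contributes the correct value of zero, making the cancellation identity valid unconditionally. Second, your induction establishes the claim for the finite-horizon functions $\funcOptV_{\indTime}^*$, which is exactly what the paper uses (Theorem~\ref{thm:piecewise-linear-and-convex} is invoked inside the induction of Theorem~\ref{thm:ub}, applied to $\funcOptV_{\indTime}^*$), but the theorem as stated, with an unsubscripted $\funcOptV^*$, implicitly also covers the infinite-horizon discounted setting of Section~\ref{cpomdp:sec:infinite}; there the optimal value function is the uniform limit of your PWLC iterates, hence convex and continuous, but in general only approximately piecewise linear. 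Since Theorem~\ref{thm:jensens-inequality} needs only convexity, this does not affect anything downstream, but your proof as written should either be restricted to finite $\timeHorizon$ or note that convexity alone survives the limit.
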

\begin{proof}
    See~\citet{Cassandra1995}.\hfill$\qed$
\end{proof}
\begin{theorem}\label{thm:jensens-inequality}
    For any real-valued convex function $f$ and discrete random variable $X$
    \begin{equation}
        f(\mathbb{E}[X])\leq \mathbb{E}[f(X)]
    \end{equation}
    where $\mathbb{E}(\cdot)$ gives the expected value.
\end{theorem}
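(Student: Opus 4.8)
The plan is to exploit the discreteness of $X$ to reduce the statement to the finite-sum form of the convexity inequality and then to prove that form by induction on the size of the support. Writing $X$ as taking values $x_1,\dots,x_n$ with probabilities $p_1,\dots,p_n$ where $\sum_{i=1}^n p_i = 1$ and each $p_i \geq 0$, the claim $f(\mathbb{E}[X]) \leq \mathbb{E}[f(X)]$ becomes
\begin{equation}
f\!\left(\sum_{i=1}^n p_i x_i\right) \leq \sum_{i=1}^n p_i f(x_i),
\end{equation}
so it suffices to establish this inequality for every convex combination of points in the range of $X$.

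First I would dispose of the base cases. For $n=1$ the inequality is an equality, and for $n=2$ it is precisely the definition of convexity, namely $f(p_1 x_1 + p_2 x_2) \leq p_1 f(x_1) + p_2 f(x_2)$ with $p_1 + p_2 = 1$. For the inductive step, assuming the result for combinations of $n-1$ points, I would isolate one atom and renormalize the rest: setting $q = 1 - p_n$, I would write $\sum_{i=1}^n p_i x_i = p_n x_n + q \sum_{i=1}^{n-1} (p_i/q)\, x_i$, where the weights $p_i/q$ are nonnegative and sum to one. Applying two-point convexity to split off $x_n$, and then the inductive hypothesis to the inner combination over the remaining $n-1$ points, would yield the desired bound after recombining the terms.

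The main technical point, and the only real obstacle, is the degenerate case $p_n = 1$ (equivalently $q = 0$), which makes the renormalization $p_i/q$ undefined. I would handle this separately: atoms with zero probability can be deleted from the support without changing either side, so after pruning we may assume every $p_i > 0$, whence $q = 1 - p_n > 0$ whenever $n \geq 2$ and the renormalization is legitimate. An alternative, non-inductive route is the supporting-line argument: convexity of $f$ guarantees a subgradient $c$ at the point $\mu = \mathbb{E}[X]$ so that $f(x) \geq f(\mu) + c\,(x - \mu)$ for all $x$; taking expectations of both sides and using $\mathbb{E}[X - \mu] = 0$ gives the claim directly. This second approach is shorter but shifts the burden onto the existence of a supporting line, which for the piecewise-linear convex value functions of Theorem~\ref{thm:piecewise-linear-and-convex} is immediate.
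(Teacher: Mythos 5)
Your proof is correct, but it is worth noting that the paper does not actually prove this theorem at all: its ``proof'' is a one-line deferral to \citet[Thm.~3.3]{rudin1987real}. So any self-contained argument you give is by construction a different route, and yours is a good one. The induction on the support size is the standard elementary proof of the finite form of Jensen's inequality, and you correctly identify and resolve the only delicate point, the degenerate renormalization when $q = 1 - p_n = 0$, by pruning zero-probability atoms first. One caveat: a \emph{discrete} random variable may have countably infinite support, and your induction only establishes the inequality for finite convex combinations; extending it to the countable case by a limiting argument requires some care about convergence of $\mathbb{E}[f(X)]$. Your alternative supporting-line argument does not have this limitation --- the subgradient bound $f(x) \geq f(\mu) + c\,(x-\mu)$ holds pointwise on the whole support, so taking expectations handles finite and countably infinite support uniformly, provided the expectations exist and a subgradient exists at $\mu$ (automatic for a convex function on $\mathbb{R}$, or whenever $\mu$ is interior to the domain). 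For the purpose this theorem serves in the paper --- the upper-bound proof of Theorem~\ref{thm:ub}, which applies Jensen only to finite convex combinations $\sum_{k} \beta_k \grPt_k$ over a finite grid set --- your finite induction is already fully sufficient, and your subgradient variant is the cleaner choice if one wants the statement at the stated level of generality.
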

\begin{proof}
    See~\citet[Thm. 3.3]{rudin1987real}.\hfill$\qed$
\end{proof}

For completeness, the proof of the upper bound approximation (Theorem~\ref{thm:ub}) is reproduced from~\citep{sandikcci2008estimating}.
\begin{theorem}\label{thm:ub}
Let $\funcApxV(\grPt)$ give the grid-based upper bound approximation of the true optimal value, $\funcOptV(\grPt)$. Then,
\begin{equation}
    \funcApxV(\grPt)\geq \funcOptV(\grPt)\qquad \forall \grPt\in\setGrid
\end{equation}
\end{theorem}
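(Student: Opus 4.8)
The plan is to prove the claim by backward induction on the decision epoch $\indTime$, invoking the two intermediate results above. Recall that the grid-based approximation $\funcApxV$ is defined at the grid points through the recursions in~\eqref{eq:apx-value-function}--\eqref{eq:apx-terminal-reward}, where the value at an updated belief $\vectBePt'$ is replaced by the convex combination $\sum_{\indGrid\in\setGridIndex}\beta_{\indGrid}\funcApxV_{\indTime+1}(\vectGrPt^{\indGrid})$ with the $\beta$-values chosen by the LP in~\eqref{eq:backward-induction}, so that $\sum_{\indGrid\in\setGridIndex}\beta_{\indGrid}\vectGrPt^{\indGrid}=\vectBePt'$, $\sum_{\indGrid\in\setGridIndex}\beta_{\indGrid}=1$, and $\beta_{\indGrid}\geq 0$. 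First I would establish the base case at $\indTime=\timeHorizon$: the terminal rule~\eqref{eq:apx-terminal-reward} gives $\funcApxV_{\timeHorizon}(\vectGrPt)=\sum_{\indState\in\setState}\grPt_{\indState}\rewFinal_{\indState}=\funcOptV_{\timeHorizon}^*(\vectGrPt)$, so the desired inequality holds with equality.

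For the inductive step, suppose $\funcApxV_{\indTime+1}(\vectGrPt)\geq\funcOptV_{\indTime+1}^*(\vectGrPt)$ for every $\vectGrPt\in\setGrid$. The crux is to bound the true value at the (generally non-grid) updated belief $\vectBePt'$. Since $\vectBePt'=\sum_{\indGrid\in\setGridIndex}\beta_{\indGrid}\vectGrPt^{\indGrid}$ is a convex combination of grid points, and $\funcOptV_{\indTime+1}^*$ is convex by Theorem~\ref{thm:piecewise-linear-and-convex}, Jensen's inequality (Theorem~\ref{thm:jensens-inequality}) applied to the discrete distribution placing mass $\beta_{\indGrid}$ on $\vectGrPt^{\indGrid}$ yields
\begin{equation}
\funcOptV_{\indTime+1}^*(\vectBePt')=\funcOptV_{\indTime+1}^*\!\Big(\sum_{\indGrid\in\setGridIndex}\beta_{\indGrid}\vectGrPt^{\indGrid}\Big)\leq \sum_{\indGrid\in\setGridIndex}\beta_{\indGrid}\,\funcOptV_{\indTime+1}^*(\vectGrPt^{\indGrid})\leq \sum_{\indGrid\in\setGridIndex}\beta_{\indGrid}\,\funcApxV_{\indTime+1}(\vectGrPt^{\indGrid}),
\end{equation}
where the last step uses the inductive hypothesis. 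This bound holds for the specific $\beta$-values produced by~\eqref{eq:backward-induction}, since Jensen's inequality is valid for any convex combination, including the minimizing one.

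Next I would propagate this bound through the Bellman recursion. Writing the true action-value $\funcOptV_{\indTime}^{\indAction}(\vectGrPt)$ at a grid point $\vectGrPt$ alongside the corresponding approximate quantity $\funcApxV_{\indTime}^{\indAction}(\vectGrPt)$ from~\eqref{eq:apx-value-function}, the two expressions differ only in that $\funcOptV_{\indTime+1}^*(\vectBePt')$ is replaced by $\sum_{\indGrid\in\setGridIndex}\beta_{\indGrid}\funcApxV_{\indTime+1}(\vectGrPt^{\indGrid})$. Because every coefficient $\grPt_{\indState}\,\parfObs_{\indStateNew\indObs}^{\indTime\indAction}\,\parfTrp_{\indState\indStateNew}^{\indTime\indAction}$ multiplying these future values is nonnegative, the displayed inequality carries over term by term, giving $\funcOptV_{\indTime}^{\indAction}(\vectGrPt)\leq\funcApxV_{\indTime}^{\indAction}(\vectGrPt)$ for each action $\indAction\in\setAction$. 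Taking the maximum over $\indAction$ preserves the inequality, so $\funcOptV_{\indTime}^*(\vectGrPt)\leq\funcApxV_{\indTime}(\vectGrPt)$, completing the induction and hence the proof.

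The step I expect to be the main obstacle—and the conceptual heart of the argument—is the simultaneous use of convexity and the inductive hypothesis at the updated belief: one must first move from $\funcOptV_{\indTime+1}^*(\vectBePt')$ to the convex combination of \emph{true} grid values via Jensen, and only then replace the true grid values by their approximations. Care is needed to confirm that the $\beta$-values selected by the minimizing LP in~\eqref{eq:backward-induction} are a legitimate convex combination summing to one, so that Jensen's inequality applies to them; everything else reduces to the monotonicity of the $\max$ operator and the nonnegativity of the transition and observation probabilities.
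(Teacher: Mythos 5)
Your proof is correct and follows essentially the same route as the paper's: backward induction anchored at the terminal epoch, with the inductive step combining piecewise linear convexity of $\funcOptV^*$ (Theorem~\ref{thm:piecewise-linear-and-convex}) and Jensen's inequality (Theorem~\ref{thm:jensens-inequality}) applied to the convex-combination representation of the updated belief, followed by the inductive hypothesis and monotonicity of the maximum over actions. If anything, your version is slightly more careful than the paper's in separating the Jensen step from the inductive-hypothesis step and in noting that the LP-selected $\beta$-values form a legitimate convex combination, so no gap remains.
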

\begin{proof}
    \citet{sandikcci2008estimating} uses proof by induction using the value iteration algorithm. From Equations~\eqref{eq:bellman-t-optimal} and~\eqref{eq:bellman-iteration-step}, the optimal value at time $\indTime$ for the belief $\bePt$ is
    \begin{equation}
        \funcOptV_{\indTime}^{*}(\vectBePt) =  \max_{\indAction\in\setAction}\left\{\sum_{\indState \in \setState} \bePt_{\indState} \parfRew_{\indState}^{\indTime \indAction}  + \sum_{\indState \in \setState}\bePt_{\indState} \bigg( \sum_{\indObs \in \setObs} \sum_{\indStateNew \in \setState} \parfObs_{\indStateNew \indObs}^{\indTime \indAction} \ \parfTrp_{\indState \indStateNew}^{\indTime \indAction}\ \funcOptV_{\indTime+1}^*(\vectBePt') \bigg)\right\}
    \end{equation}
    For the first iteration, where $\indTime=\timeHorizon$, from Equation~\eqref{eq:bellman-T-optimal}, the initial values for the optimal and approximate value functions are~$\funcApxV_\timeHorizon(\grPt)=\funcOptV^*_\timeHorizon(\grPt)=\sum_{\indState \in \setState} \bePt_{\indState}\rewFinal_{\indState}$. Assume that for all $\indTime\in[1,\timeHorizon]$ Theorem~\ref{thm:ub} holds. The value for generic $\indTime-1$ is
    \begin{subequations}
    \begin{align}
        \funcOptV_{\indTime-1}^{*}(\vectBePt) &=  \max_{\indAction\in\setAction}\left\{\sum_{\indState \in \setState} \bePt_{\indState} \parfRew_{\indState}^{(\indTime-1) \indAction}  + \sum_{\indState \in \setState}\bePt_{\indState} \bigg( \sum_{\indObs \in \setObs} \sum_{\indStateNew \in \setState} \parfObs_{\indStateNew \indObs}^{(\indTime-1) \indAction} \ \parfTrp_{\indState \indStateNew}^{(\indTime-1) \indAction}\ \funcOptV_{\indTime}^*(\vectBePt') \bigg)\right\}
        \intertext{Recall that because $\setGrid$ contains all of the corner points, any $\bePt\in\beSimp$ can be represented as a convex combination of $\grPt\in\setGrid$ i.e., $\bePt$ can be replaced with $\sum_{\indGrid\in\setGridIndex}\beta_{\indGrid}\grPt_\indGrid$ for non-negative $\beta$ satisfying $\sum_{\indGrid\in\setGridIndex}\beta_{\indGrid}=1$. Thus}
        &=\max_{\indAction\in\setAction}\left\{\sum_{\indState \in \setState} \bePt_{\indState} \parfRew_{\indState}^{(\indTime-1) \indAction}  + \sum_{\indState \in \setState}\bePt_{\indState} \left( \sum_{\indObs \in \setObs} \sum_{\indStateNew \in \setState} \parfObs_{\indStateNew \indObs}^{(\indTime-1) \indAction} \ \parfTrp_{\indState \indStateNew}^{(\indTime-1) \indAction}\ \funcOptV_{\indTime}^*\left(\sum_{\indGrid\in\setGridIndex}\beta_{\indGrid}\grPt_\indGrid\right) \right)\right\}
        \intertext{Theorem~\ref{thm:piecewise-linear-and-convex} states that $\funcOptV^*$ satisfies the properties needed to apply Theorem~\ref{thm:jensens-inequality} as follows:}
        &\leq\max_{\indAction\in\setAction}\left\{\sum_{\indState \in \setState} \bePt_{\indState} \parfRew_{\indState}^{(\indTime-1) \indAction}  + \sum_{\indState \in \setState}\bePt_{\indState} \left( \sum_{\indObs \in \setObs} \sum_{\indStateNew \in \setState} \parfObs_{\indStateNew \indObs}^{(\indTime-1) \indAction} \ \parfTrp_{\indState \indStateNew}^{(\indTime-1) \indAction}\ \sum_{\indGrid\in\setGridIndex}\beta_{\indGrid}\funcOptV_{\indTime}^*(\grPt_\indGrid) \right)\right\}
        \intertext{Per the induction hypothesis, this is}
        &\leq\max_{\indAction\in\setAction}\left\{\sum_{\indState \in \setState} \bePt_{\indState} \parfRew_{\indState}^{(\indTime-1) \indAction}  + \sum_{\indState \in \setState}\bePt_{\indState} \left( \sum_{\indObs \in \setObs} \sum_{\indStateNew \in \setState} \parfObs_{\indStateNew \indObs}^{(\indTime-1) \indAction} \ \parfTrp_{\indState \indStateNew}^{(\indTime-1) \indAction}\ \sum_{\indGrid\in\setGridIndex}\beta_{\indGrid}\funcApxV{\indTime}^*(\grPt_\indGrid) \right)\right\}\\
        &=\funcApxV_{\indTime-1}(\grPt)
    \end{align}
    \end{subequations}
    \hfill$\qed$
\end{proof}

\clearpage
\section{Proof of lower bound approximation}\label{sec:appendix_lbproof}
The lower bound approximation outlined in~\citet{Lovejoy1991b} is given in Theorem~\ref{thm:lovejoy-lb}.

\begin{theorem}\label{thm:lovejoy-lb}
Let $\setGridAlpha$ be the set of $\alpha$-vectors corresponding to the grid $\setGrid$ and $\funcApxV(\bePt)$ be the approximate expected value generated by $\setGridAlpha$ for the belief point $\bePt$. Then
\begin{equation}\label{eq:lb-theorem}
    \funcApxV(\bePt) \leq \funcOptV^*(\bePt),\qquad \forall\bePt\in\beSimp
\end{equation}
\end{theorem}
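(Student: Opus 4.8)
The plan is to exploit the $\alpha$-vector representation of the value function and to argue that every vector in $\setGridAlpha$ is the value of a genuinely realizable conditional plan, so that its inner product with any belief can never exceed the optimal value there. Recall from Equation~\eqref{eq:alpha-vectors} that $\funcOptV^*_\indTime(\vectBePt)=\max_{\alpha\in\setAlpha_\indTime}\{\vectBePt\cdot\alpha\}$, and that the grid-based construction produces a time-indexed family $\setGridAlpha_\indTime$ by performing, for each grid point $\vectGrPt^\indGrid\in\setGrid$ and each action, a point-based backup of the previous set $\setGridAlpha_{\indTime+1}$. Since $\funcApxV_\indTime(\vectBePt)=\max_{\alpha\in\setGridAlpha_\indTime}\{\vectBePt\cdot\alpha\}$ is a maximum over a subset of realizable plan values, I would establish \eqref{eq:lb-theorem} by backward induction on $\indTime$, in close parallel with the upper-bound argument of Appendix~\ref{sec:appendix_ubproof} but with the inequality arising from subset-maximization rather than from Jensen's inequality.

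First I would handle the base case $\indTime=\timeHorizon$, where the terminal vectors are exact, so that $\funcApxV_\timeHorizon(\vectGrPt)=\funcOptV^*_\timeHorizon(\vectGrPt)=\sum_{\indState\in\setState}\grPt_\indState\rewFinal_\indState$ and \eqref{eq:lb-theorem} holds with equality. For the inductive step I would assume $\funcApxV_{\indTime+1}(\vectBePt)\leq\funcOptV^*_{\indTime+1}(\vectBePt)$ for every $\vectBePt\in\beSimp$ and consider an arbitrary vector $\alpha^\indAction\in\setGridAlpha_\indTime$ produced by backing up action $\indAction$, whose components take the form $\alpha^\indAction_\indState=\parfRew^{\indTime\indAction}_\indState+\sum_{\indObs\in\setObs}\sum_{\indStateNew\in\setState}\parfObs^{\indTime\indAction}_{\indStateNew\indObs}\parfTrp^{\indTime\indAction}_{\indState\indStateNew}\,\alpha^{\indObs}_{\indStateNew}$ for some continuation vectors $\alpha^{\indObs}\in\setGridAlpha_{\indTime+1}$.

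The crux is to show $\vectBePt\cdot\alpha^\indAction\leq\funcOptV^*_\indTime(\vectBePt)$ for every $\vectBePt$, irrespective of which continuation vectors were selected. I would rewrite the inner double sum using Bayes' rule \eqref{eq:bayesian-update}: letting $\vectBePt'$ denote the belief reached from $\vectBePt$ after action $\indAction$ and observation $\indObs$, the quantity $\sum_{\indState\in\setState}\bePt_\indState\sum_{\indStateNew\in\setState}\parfObs^{\indTime\indAction}_{\indStateNew\indObs}\parfTrp^{\indTime\indAction}_{\indState\indStateNew}\alpha^{\indObs}_{\indStateNew}$ factors as $\Pr_\indTime(\indObs\mid\vectBePt,\indAction)\,(\vectBePt'\cdot\alpha^\indObs)$. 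Because $\alpha^\indObs\in\setGridAlpha_{\indTime+1}$, maximality in the definition of $\funcApxV_{\indTime+1}$ gives $\vectBePt'\cdot\alpha^\indObs\leq\funcApxV_{\indTime+1}(\vectBePt')$, and the induction hypothesis upgrades this to $\leq\funcOptV^*_{\indTime+1}(\vectBePt')$. Summing over observations reconstructs exactly the action value $\funcOptV_\indTime^\indAction(\vectBePt)$ of \eqref{eq:bellman-iteration-step}, whence $\vectBePt\cdot\alpha^\indAction\leq\funcOptV_\indTime^\indAction(\vectBePt)\leq\max_{\indAction\in\setAction}\funcOptV_\indTime^\indAction(\vectBePt)=\funcOptV^*_\indTime(\vectBePt)$. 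Taking the maximum over $\alpha\in\setGridAlpha_\indTime$ on the left then yields $\funcApxV_\indTime(\vectBePt)\leq\funcOptV^*_\indTime(\vectBePt)$, closing the induction.

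I expect the main obstacle to be the Bayes'-rule bookkeeping in the inductive step: verifying that the normalization constant $\Pr_\indTime(\indObs\mid\vectBePt,\indAction)$ cancels correctly so that the backed-up expression collapses precisely to $\funcOptV_\indTime^\indAction(\vectBePt)$, and confirming that the bound holds for \emph{any} choice of continuation vectors rather than only those optimal at the grid point where the backup was carried out. The essential conceptual point, which the induction makes rigorous, is that each grid $\alpha$-vector encodes a fixed yet fully realizable conditional plan, so its expected reward is dominated pointwise by the optimal value function across the entire belief simplex $\beSimp$.
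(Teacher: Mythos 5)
Your proof is correct, but it takes a genuinely different---and more substantive---route than the paper's. The paper's proof is a two-line subset-maximization argument: since $\setGridAlpha\subseteq\setAlpha$ holds ``by definition'' (the grid vectors are drawn from Monahan's exhaustively enumerated set), it concludes $\funcApxV(\vectBePt)=\max_{\alpha\in\setGridAlpha}\{\vectBePt\cdot\alpha\}\leq\max_{\alpha\in\setAlpha}\{\vectBePt\cdot\alpha\}=\funcOptV^*(\vectBePt)$, with the time index dropped throughout. Your backward induction proves precisely the fact the paper takes for granted: that every vector produced by a point-based backup at a grid point---for \emph{any} choice of continuation vectors from $\setGridAlpha_{\indTime+1}$, not just those optimal at the backed-up grid point---satisfies $\vectBePt\cdot\alpha\leq\funcOptV^*_{\indTime}(\vectBePt)$ for all $\vectBePt\in\beSimp$, i.e., that each grid $\alpha$-vector is the value of a realizable conditional plan. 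Your Bayes'-rule bookkeeping is sound: the normalizer $\Pr_{\indTime}(\indObs\mid\vectBePt,\indAction)$ cancels exactly as you anticipate, so the backed-up expression collapses to $\funcOptV^{\indAction}_{\indTime}(\vectBePt)$ of Equation~\eqref{eq:bellman-iteration-step}, and maximizing over actions and then over $\alpha\in\setGridAlpha_{\indTime}$ closes the induction. As for what each approach buys: the paper's argument is shorter and horizon-agnostic, but it is only as strong as the unproven membership claim, and its final step (bounding $\funcApxV(\vectBePt)$ by a maximum that contains $\funcApxV(\vectBePt)$ itself) is nearly tautological; your induction is longer but self-contained, makes the dependence on the specific point-based backup construction explicit, mirrors the structure of the upper-bound proof in Appendix~\ref{sec:appendix_ubproof} with subset-maximization playing the role that Jensen's inequality plays there, and is robust to the pruning steps, since removing vectors from $\setGridAlpha_{\indTime+1}$ only shrinks the set being maximized over and leaves your inequalities intact.
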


\begin{proof}
    After dropping the time index in Equation~\eqref{eq:alpha-vectors}, the inequality in Equation~\eqref{eq:lb-theorem} becomes
    \begin{subequations}
    \begin{align}\label{eq:lb-proof-eq-1}
    \funcApxV(\bePt) &\leq \max_{\alpha \in \setAlpha}\{\vectBePt \cdot \alpha\}\\
    \intertext{As $\setAlpha$ is defined as the set of all $\alpha$-vectors, by definition $\setGridAlpha\subseteq\setAlpha$. Thus, Equation~\eqref{eq:lb-proof-eq-1} can be rewritten as:}
    \label{eq:lb-proof-eq-2}
    \funcApxV(\bePt) &\leq \max\left\{\max_{\alpha \in \setGridAlpha}\{\vectBePt \cdot \alpha\}, \max_{\alpha \in \setAlpha \setminus \setGridAlpha}\{\vectBePt \cdot \alpha\}\right\} \\
    \intertext{Substituting in Equation~\eqref{eq:alpha-vectors}}
    \label{eq:lb-proof-eq-3}
    \funcApxV(\bePt) &\leq \max\left\{\funcApxV(\bePt), \max_{\alpha \in \setAlpha \setminus \setGridAlpha}\{\vectBePt \cdot \alpha\}\right\}
    \end{align}
    \end{subequations}
    It follows that right hand side of Equation~\eqref{eq:lb-proof-eq-3} is at least $\funcApxV(\bePt)$.
    \hfill $\qed$
\end{proof}

\clearpage
\section{Grid construction}
\label{sec:appendix_grid_construction}
\newcommand{\setAllowedBeliefComponents}{\mathcal{H}}

The approximation techniques employed in this study require the generation of a set of grid points, which provide a finite set of discrete belief states to approximate the continuous, infinite belief simplex. 
Specifically, these grid sets are generated using a slightly modified version of the fixed-resolution grid approach.

\subsection{Fixed-resolution grid construction approach}

Using the fixed-resolution grid approach, the resulting grid set contains beliefs sampled at equidistant intervals in each dimension of the state space according to a resolution parameter $\resoVal$. Specifically, for any grid point $\vectGrPt$, the $\indState$th component $\grPt_\indState$ can be any integer multiple of $\resoVal^{-1}$, subject to the constraint that all components must be non-negative and cannot exceed 1. Thus, each component of $\vectGrPt$ must belong to the set
\begin{equation}
    \setAllowedBeliefComponents(\resoVal)=\left\{0, \frac{1}{\resoVal}, \frac{2}{\resoVal}, \ldots, \frac{\resoVal-2}{\resoVal}, \frac{\resoVal-1}{\resoVal}, 1\right\}
\end{equation}
As $\vectGrPt$ represents a belief state, the sum of its components must equal 1. Therefore, for a problem with $|\setState|$ states and a resolution value of $\resoVal$, the approximate grid set $\setGrid$ can be generated by computing the Cartesian product of $\setAllowedBeliefComponents(\resoVal)$ with itself $|\setState|$ times and filtering off all grids whose components do not sum to 1. That is
\begin{align}
    \setGrid&=\left\{\grPt\mid \grPt\in \tilde{\setGrid},~\sum_i \grPt_i =1 \right\}\\
    \intertext{where}
    \tilde{\setGrid} &= \underbrace{\setAllowedBeliefComponents(\resoVal)\times \setAllowedBeliefComponents(\resoVal) \times \ldots \times \setAllowedBeliefComponents(\resoVal)}_{|\setState| \text{ times}}
\end{align}

\paragraph{Fixed-resolution grid set example}
Consider a problem with two states and a resolution value $\resoVal=2$. Following the fixed-resolution grid construction approach, first note that each component of $\vectGrPt$ must belong to the set
\begin{subequations}
\begin{equation}
    \setAllowedBeliefComponents(2)=\left\{0, \frac{1}{2}, 1\right\}
\end{equation}
Then, $\tilde{\setGrid}$ can then be computed as
\begin{align}
    \tilde{\setGrid}&=\left\{0, \tfrac{1}{2}, 1\right\}\times \{0, \tfrac{1}{2}, 1\}\\
    &=\left\{\left[ 0, 0 \right], [ 0, \tfrac{1}{2} ], \left[ 0, 1 \right], [ \tfrac{1}{2}, 0 ], [ \tfrac{1}{2}, \tfrac{1}{2} ], [ \tfrac{1}{2}, 1 ], [ 1, 0 ], [ 1, \tfrac{1}{2} ], [ 1, 1 ]\right\}
\end{align}
Only the grids in $\tilde{\setGrid}$ whose components sum to 1 are kept, resulting in
\begin{equation}
    \setGrid=\left\{[0, 1], [\tfrac{1}{2}, \tfrac{1}{2}], [1, 0]\right\}
\end{equation}
\end{subequations}

\subsection{Modified grid construction approach}

Following the fixed-resolution grid approach, the grid set generated using a resolution value of $\resoVal$ is denoted $\setGrid_\resoVal$, where
\begin{equation}
    \label{eq:size-of-grid-set}
    |\setGrid_\resoVal| = \binom{|\setState| + \resoVal - 1}{|\setState| - 1}
\end{equation}
By strictly constructing grid sets according to this approach is not feasible for problems with more than a few core states because, for many grid-based solution algorithms, the grid sets become prohibitively large for increasing $\resoVal$ values.
For example, in the $4\times 3$ problem, which has 11 states, the grid set sizes are $|\setGrid_{\resoVal=1}|=11$, $|\setGrid_{\resoVal=2}|=66$, $|\setGrid_{\resoVal=3}|=286$, $|\setGrid_{\resoVal=4}|=1001$, and $|\setGrid_{\resoVal=5}|=3003$ for resolution values ranging from one to five.
That is, the size of $\setGrid$ cannot be finely controlled. 
To address this, one can combine multiple grid sets generated using the fixed-resolution grid approach, taking only as many grids as they desire. 
For a desired grid set size of $\numGrids$, one can iteratively construct fixed-resolution grid sets until the first one whose size exceeds $\numGrids$ is encountered. 
Denoting this set as $\setGrid_\iota$, by definition, $|\setGrid_{\iota - 1}|\leq \numGrids$. 
Then, grid points can be sampled from $\setGrid_\iota\setminus\setGrid_{\iota - 1}$ and added to $\setGrid_{\iota -1}$ until $\setGrid_{\iota -1}$ has $\numGrids$ grid points. 
As there are many grid points to choose from, it might be important not to select these points arbitrarily, as this may result in a higher grid density in some dimensions. 
Accordingly, grids are drawn at equidistant intervals from the sorted set difference $\setGrid_{\iota}\setminus\setGrid_{\iota - 1}$. 
The overall procedure is summarized in Algorithm~\ref{alg:modified-grid-construction}.
\begin{algorithm}[!ht]
\caption{Modified grid-construction approach.}
\label{alg:modified-grid-construction}
\begin{algorithmic}[1] 
\Procedure{\texttt{Get\_Grid\_Set}}{$\numGrids,~|\setState|$}
    \State{$\iota\gets 1$}\label{alg-line-start:get-resolution}
    \Repeat
        \State{$\iota\gets \iota + 1$}
        \State{Compute $|\setGrid_\iota|$ according to Equation~\ref{eq:size-of-grid-set}.}
    \Until{$|\setGrid_\iota| > N$} \label{alg-line-end:get-resolution}
    \State{$\setGrid\gets \setGrid_{\iota - 1}$} \label{alg-line:assign-small-grid}
    \State{$\setGrid^*\gets \operatorname{sorted}(\setGrid_{\iota} \setminus \setGrid_{\iota -  1})$} \label{alg-line:assign-difference}
    \State{$\eta\gets \lfloor N~/~|\setGrid^*|\rfloor$.} \Comment{Step size, where $\lfloor \ldots \rfloor$ is the floor function.} \label{alg-line:step-size}
    \State{$\indGrid \gets 1$} \Comment{Here, the first element of $\setGrid^*$ is at index $\indGrid = 1$.}
    \Repeat \label{alg-line-start:add-element}
        \State{Add the $\indGrid$th element of $\setGrid^*$ to $\setGrid$.}
        \State{$\indGrid\gets \indGrid + \eta$}
    \Until{$|\setGrid| = N$} \label{alg-line-end:add-element}
    \State{\Return{$\setGrid$}}
\EndProcedure
\end{algorithmic}
\end{algorithm}

\paragraph{Modified grid-construction example}
Consider a problem with 3 states, where the desired number of grids, $\numGrids$, is 5. 
Following Algorithm~\ref{alg:modified-grid-construction}, the final value of $\iota$ is found to be 2, as $\setGrid_2$ is the first grid set whose size (6) exceeds $\numGrids$. 
Subsequently,
\begin{subequations}
\begin{align}
    \setGrid = \setGrid_{\iota=1} &= [[0, 0, 1], [0, 1, 0], [1, 0, 0]] \\
    \setGrid^*&=[[0, \tfrac{1}{2}, \tfrac{1}{2}], [\tfrac{1}{2}, 0, \tfrac{1}{2}], [\tfrac{1}{2}, \tfrac{1}{2}, 0]]
\end{align}
The step size $\eta$ is obtained as
\begin{align}
    \eta&=\lfloor N~/~|\setGrid^*| \rfloor\\
    &=\lfloor 5~/~3\rfloor \\
    &=\lfloor 1.\overline{6} \rfloor \\
    &= 1
\end{align}
Lastly, the 1st and 2nd elements of $\setGrid^*$ are added to $\setGrid$. Thus, the
final grid set becomes
\begin{equation}
\setGrid=[[0, 0, 1], [0, \tfrac{1}{2}, \tfrac{1}{2}], [0, 1, 0], [\tfrac{1}{2}, 0, \tfrac{1}{2}], [1, 0, 0]]
\end{equation}
\end{subequations}

\clearpage
\section{Calculating the grid transition probabilities}\label{sec:appendix_trp}
Figure~\ref{fig:grid_trp_calculation} summarizes the process of calculating the grid transition probabilities.
Note that these transition probabilities are essential inputs to the linear programming formulations for CPOMDPs discussed in this paper.
The steps in this process can be summarized as follows:
\begin{enumerate}[label={\textbf{(\arabic*)}}]\setlength\itemsep{0.3em}
    \item For each grid point, action, observation combination, the updated belief state is computed.
    \item The interpolation weights for $\grPt'$ are computed.
    \item The transition probability between grid points $\grPt\in \setGrid$ are computed.
    \item The approximate value function for time $\indTime+1$ is computed.
\end{enumerate}

\begin{figure}[!ht]
\centering
\begin{tikzpicture}[
    draw,
    solid,
    rounded corners=1mm,
    fill=blue!5,thick,minimum width=0em
]
    \clip (-3, -8) rectangle (3, .5);
    \node[draw, solid, rectangle, fill=none] (input-cross-product) {$\setGrid\times\setAction\times\setObs$};
    \node [below of=input-cross-product](input-tuple) {$(\grPt, \indAction, \indObs)$};
    \draw[-, solid, thick, shorten >= -3pt, shorten <= 2pt] (input-cross-product) edge (input-tuple);
    \node[draw, solid, rectangle, fill=red!15] [below of=input-tuple](cpomdp-env) {\begin{varwidth}{15em}\singlespacing\centering CPOMDP\\Environment\end{varwidth}};
    \draw[->, solid, thick, shorten >= 2pt, shorten <= -3pt] (input-tuple) edge node[left] {} (cpomdp-env);
    \node[draw, solid, rectangle, fill=none] [left=0cm and 0.5 cm of cpomdp-env](grid-set) {$\setGrid$};
    \node[draw, solid, rectangle, fill=none] [right=0cm and 0.5cm of cpomdp-env](v-hat) {$\funcApxV$};
    \node [below of=cpomdp-env](g-prime) {$\ugrPt$};
    \draw[-, solid, thick, shorten >= -3pt, shorten <= 2pt] (cpomdp-env) edge (g-prime);
    \node[draw, solid, rectangle, fill=blue!15] [below of=g-prime](get-beta) {Interpolate-weights};
    \draw[->, solid, thick, shorten >= 2pt, shorten <= 2pt] (grid-set) edge (get-beta);
    \draw[->, solid, thick, shorten >= 2pt, shorten <= 2pt] (v-hat) edge (get-beta);
    \draw[->, solid, thick, shorten >= 2pt, shorten <= -3pt] (g-prime) edge node[left] {} (get-beta);
    \node[draw, solid, rectangle, fill=none] [below=.5cm and 0cm of get-beta](beta) {$\beta_{\indGrid \indGridNew}^{\indTime \indAction \indObs}$};
    \draw[->, solid, thick, shorten >= 2pt, shorten <= 2pt] (get-beta) edge (beta);
    \node[draw, solid, rectangle, fill=none] [below=.5cm and 0cm of beta](f-values) {$\parfGridTr_{\indGrid \indGridNew}^{\indTime \indAction}$};
            \draw[->, solid, thick, shorten >= 2pt, shorten <= 2pt] (beta) edge (f-values);
            \draw[->, solid, thick, shorten >= 2pt, shorten <= 2pt] (f-values) to[out=-80, in=70, looseness=3] (v-hat);
\end{tikzpicture}
\caption{A flowchart for the grid transition probability calculation process.}
\label{fig:grid_trp_calculation}
\end{figure}
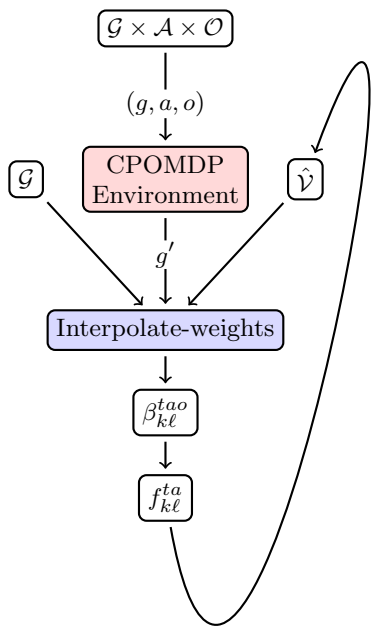

Algorithms \ref{alg:finite-horizon-trans-probs} and \ref{alg:infinite-horizon-trans-probs} show the calculation of the grid transition probabilities in detail for finite horizon and infinite horizon cases, respectively.
Before solving the corresponding LP models for the CPOMDPs, these algorithms are first run to obtain the $\parfGridTr$ values. 

\begin{algorithm}[!ht]
\caption{Grid transition probabilities for finite horizon CPOMDPs.}
\label{alg:finite-horizon-trans-probs}
\begin{algorithmic}[1] 
\Procedure{\texttt{Get\_Finite\_Horizon\_Transition\_Probabilities}}{$\setGrid,~\timeHorizon$}
    \State{Assign terminal reward $\funcApxV_\timeHorizon(\grPt)$ according to Equation~\eqref{eq:apx-terminal-reward} for all $\grPt\in\setGrid$.}\label{alg-line:terminal-reward}
    \For{$\indTime \in \{\timeHorizon-1, \hdots, 1\}$}\label{alg-line:for-time}
        \For{$(\grPt^\indGrid, \indAction, \indObs) \in (\setGrid\times\setAction\times \setObs)$}
            \State{Update belief state $\grPt^{\indGrid\prime}\leftarrow \grPt^\indGrid$ according to Equation~\eqref{eq:bayesian-update}.}
            \State{Calculate interpolation weights $\beta^k_{\indAction\indObs}$ according to the LP in~\eqref{eq:backward-induction}.}
        \EndFor
        \For{$(\indAction, \indGrid, \indGridNew)\in(\setAction\times\setGridIndex\times\setGridIndex)$}
            \State{Calculate grid transition probability $\parfGridTr_{\indGrid \indGridNew}^{\indTime \indAction}$ according to Equation~\eqref{eq:gridTransitionProbs}.}
        \EndFor
        \For{$\grPt\in\setGrid$}
            \State{Compute $\funcApxV_{\indTime}(\grPt)$ according to Equation~\eqref{eq:apx-value-given-grid-trans}.}
        \EndFor
    \EndFor
    \State{\Return{$\parfGridTr$, the grid transition probabilities.}}
\EndProcedure
\end{algorithmic}
\end{algorithm}

\begin{algorithm}[!ht]
\caption{Grid transition probabilities for infinite horizon CPOMDPs.}
\label{alg:infinite-horizon-trans-probs}
\begin{algorithmic}[1] 
\Procedure{\texttt{Get\_Infinite\_Horizon\_Transition\_Probabilities}}{$\setGrid,~\varepsilon$}
    \State{Initialize $\funcApxV_{\indTime+1}(\grPt)$ to zero for all $\grPt\in\setGrid$.}\label{alg-line:inf-terminal-reward}
    \Repeat
        \For{$(\grPt^\indGrid, \indAction, \indObs) \in (\setGrid\times\setAction\times \setObs)$}
            \State{Update belief state $\grPt^{\indGrid\prime}\leftarrow \grPt^\indGrid$ according to Equation~\eqref{eq:bayesian-update}.}
            \State{Calculate interpolation weights $\beta^k_{\indAction\indObs}$ according to the LP in~\eqref{eq:backward-induction}.}
        \EndFor
        \For{$(\indAction, \indGrid, \indGridNew)\in(\setAction\times\setGridIndex\times\setGridIndex)$}
            \State{Calculate grid transition probability $\parfGridTr_{\indGrid \indGridNew}^{\indAction}$ according to Equation~\eqref{eq:gridTransitionProbs}.}
        \EndFor
        \For{$\grPt\in\setGrid$}
            \State{Compute $\funcApxV_\indTime(\grPt)$ according to Equation~\eqref{eq:apx-value-given-grid-trans}.}
        \EndFor
        \State{$\texttt{converged}\gets$ \texttt{true}}
        \For{$\grPt\in\setGrid$}
            \If{$|\funcApxV_{\indTime+1}(\grPt) - \funcApxV_\indTime(\grPt)|>\varepsilon$}
                \State{$\texttt{converged}\gets$ \texttt{false}}
            \EndIf
        \EndFor
        \For{$\grPt\in\setGrid$}
            \State{$\funcApxV_{\indTime+1}(\grPt)\gets\funcApxV_\indTime(\grPt)$}
        \EndFor
    \Until{\texttt{converged}}
    \State{\Return{$\parfGridTr$, the grid transition probabilities.}}
\EndProcedure
\end{algorithmic}
\end{algorithm}

\clearpage
\section{Numerical examples}\label{sec:appendix_numerical_ex}

\subsection{Tiger problem parameters}
The following examples illustrate the application of ITLP to both the finite horizon and infinite horizon formulations of the tiger problem. 
Specifically, this problem is defined as follows~\citep{Cassandra1995}:
\begin{itemize}\setlength\itemsep{0.253em}
    \item \textbf{States:} $\{\text{Tiger left}~(s_0),~\text{Tiger right}~(s_1)\}$
    \item \textbf{Actions:} $\{\text{Listen}~(a_0),~\text{Open left door}~(a_1),~\text{Open right door}~(a_2)\}$
    \item \textbf{Observations:} $\{\text{Tiger left}~(o_0),~\text{Tiger right}~(o_1)\}$
    \item \textbf{Transition Probabilities:} Recall that the probability of transitioning from state $\indState$ to state $\indStateNew$ at time $t$ after taking action $a$ is $\parfTrp_{\indState \indStateNew}^{\indTime\indAction}$. In the tiger problem, transition probabilities are stationary: they are constant with respect to time. As a result, transition probabilities can simply be written as $\parfTrp_{\indState \indStateNew}^{\indAction}$. 
    The transition probabilities are provided in Table~\ref{tab:tiger_data_trp}.
    \begin{table}[!ht]
    \centering
    \caption{Tiger problem transition probabilities}\label{tab:tiger_data_trp}
    \resizebox{0.75\textwidth}{!}{
    \begin{tabular}{ccc}
        \begin{tabular}{crr}
        \toprule
            $\parfTrp_{\indState\indStateNew}^0$ & $\indStateNew=0$ & $\indStateNew=1$  \\
        \midrule
            $\indState = 0$ & 0.5 & 0.5 \\
            $\indState=1$ & 0.5 & 0.5 \\
        \bottomrule
        \end{tabular} &
        \begin{tabular}{crr}
        \toprule
            $\parfTrp_{\indState\indStateNew}^1$ & $j=0$ & $j=1$  \\
        \midrule
            $\indState = 0$ & 1 & 0 \\
            $\indState=1$ & 0 & 1 \\
        \bottomrule
        \end{tabular} &
        \begin{tabular}{crr}
        \toprule
            $\parfTrp_{\indState\indStateNew}^2$ & $j=0$ & $j=1$  \\
        \midrule
            $\indState = 0$ & 0.5 & 0.5 \\
            $\indState=1$ & 0.5 & 0.5 \\
        \bottomrule
        \end{tabular}
    \end{tabular}
    }
    \end{table}
    
    \item \textbf{Observation Probabilities:} As with transition probabilities, the observation probabilities are stationary for the tiger problem. Therefore, the probability of making observation $\indObs$ at time $\indTime$ after taking action $\indAction$ and arriving in state $\indStateNew$, $\parfObs_{\indStateNew \indObs}^{\indTime\indAction}$, can be written without the time index as $\parfObs_{\indStateNew \indObs}^{\indAction}$. 
    These observation probabilities are given in Table~\ref{tab:tiger_data_obs}.
    \begin{table}[!ht]
    \centering
    \caption{Tiger problem observation probabilities}\label{tab:tiger_data_obs}
    \resizebox{0.75\textwidth}{!}{
    \begin{tabular}{ccc}
        \begin{tabular}{crr}
        \toprule
            $\parfObs_{\indState\indObs}^0$ & $\indObs=0$ & $\indObs=1$  \\
        \midrule
            $\indState = 0$ & 0.85 & 0.15 \\
            $\indState=1$ & 0.15 & 0.85  \\
        \bottomrule
        \end{tabular} &
        \begin{tabular}{crr}
        \toprule
            $\parfObs_{\indState\indObs}^1$ & $\indObs=0$ & $\indObs=1$  \\
        \midrule
            $\indState= 0$ & 0.5 & 0.5 \\
            $\indState=1$ & 0.5 & 0.5 \\
        \bottomrule
        \end{tabular} &
        \begin{tabular}{crr}
        \toprule
            $\parfObs_{\indState\indObs}^2$ & $\indObs=0$ & $\indObs=1$  \\
        \midrule
            $\indState= 0$ & 0.5 & 0.5 \\
            $\indState=1$ & 0.5 & 0.5 \\
        \bottomrule
        \end{tabular}
    \end{tabular}
    }
    \end{table}
    
    \item \textbf{Immediate Rewards:} The reward for taking action $\indAction$ in state $\indState$ at time $\indTime$ is $\parfRew_{\indState\indAction}^{\indTime}$. The rewards are stationary for the tiger problem, so they can be simply denoted as $\parfRew_{\indState\indAction}$, and are given in Table~\ref{tab:tiger_data_rew}.
    \begin{table}[!ht]
        \centering
        \caption{Tiger problem immediate rewards.}\label{tab:tiger_data_rew}
        \resizebox{0.25\textwidth}{!}{
        \begin{tabular}{crr}
        \toprule
            $\parfRew_{\indState\indAction}$ & $\indState = 0$ & $\indState = 1$ \\
            \midrule
            $\indAction = 0$ & $-100$ & 10 \\
            $\indAction = 1$ & $-1$ & $-1$ \\
            $\indAction = 2$ & 10 & $-100$ \\
            \bottomrule
        \end{tabular}
        }
    \end{table}
    \item \textbf{Costs:} In this paper, the tiger POMDP is extended beyond~\citet{Cassandra1995}'s definition to include a cost on each action. Specifically, the cost of each door opening action ($a_1$, $a_2$) is taken as 1, and the cost of listening ($a_0$) is taken as 2.
\end{itemize}

For the purposes of the ensuing examples, the belief simplex for the tiger problem $\beSimp$ is approximated with the grid set $\setGrid=\{[0, 1], [0.5, 0.5], [1, 0]\}$. The terminal reward for exiting the decision process in one of these grids is set to the reward earned by taking the unconstrained optimal action in that grid. As the optimal action in $[1, 0]$ is to open the door on the right, the terminal reward is 10. Similarly, the reward for ending in $[0, 1]$ is 10. The optimal action when the belief is $[0.5, 0.5]$ is to listen, so the reward is $-1$. For simplicity, the initial belief distribution parameter is set as $\parBeDistn=[0, 1, 0]$, meaning that the starting belief state is $[0.5, 0.5]$.

\subsection{Finite horizon CPOMDP for the tiger problem}

Consider the tiger problem as described above with a horizon of 3 (i.e., 2 decision epochs), a budget of 3, and a discount factor of $\parDiscount=1$.
For a given belief state $\bePt$, the expected reward for taking action $\indAction$ is simply $\bePt\cdot\parfRew_{\indState\indAction}$. For example, the expected reward for opening the door on the left ($a_1$) when $\bePt=[0.5, 0.5]$ is 
\begin{align}
&\phantom{=}\bePt\cdot\parfRew_{\indState1}\\
&=\begin{pmatrix}0.5 & 0.5\end{pmatrix}\cdot\begin{pmatrix}-100 \\ 10\end{pmatrix} \nonumber \\
&=45\nonumber 
\end{align}
It directly follows that the coefficient of $\varDualLP_{\indTime \indGrid \indAction}$ in the objective function for the LP is this expected reward (i.e., the coefficient of $\varDualLP_{\indTime \indGrid \indAction}$ is $\bePt_\indGrid \cdot \parfRew_{\indState\indAction}$). In the tiger problem formulated above, this value is time independent. Following Equation~\eqref{eq:gridTransitionProbs}, it turns out that for the selected grid set, the obtained grid transition probabilities ($\parfGridTr_{\indGrid \indGridNew}^{\indTime \indAction}$) are time independent as well, so the time index can be dropped. These probabilities are given in Table~\ref{tab:finite-tiger-ex-grid-trans-probs}.
\begin{table}[!ht]
\newlength{\ftabwidth}
\setlength{\ftabwidth}{0.30\textwidth} 
\caption{Grid transition probabilities ($\parfGridTr_{\indGrid \indGridNew}^{\indAction}$)}
\label{tab:finite-tiger-ex-grid-trans-probs}
\begin{subtable}[h]{\ftabwidth}
    \caption*{$\indAction=0$}
    \resizebox{\ftabwidth}{!}{\begin{tabular}{lrrr}
        \toprule
        {} &  $\indGridNew=0$ &  $\indGridNew=1$ &  $\indGridNew=2$ \\
        \midrule
        $\indGrid=0$ &             1.00 &              0.0 &             0.00 \\
        $\indGrid=1$ &             0.35 &              0.3 &             0.35 \\
        $\indGrid=2$ &             0.00 &              0.0 &             1.00 \\
        \bottomrule
    \end{tabular}}
\end{subtable}
\hfill
\begin{subtable}[h]{\ftabwidth}
    \caption*{$\indAction=1$}
    \resizebox{\ftabwidth}{!}{\begin{tabular}{lrrr}
        \toprule
        {} &  $\indGridNew=0$ &  $\indGridNew=1$ &  $\indGridNew=2$ \\
        \midrule
        $\indGrid=0$ &              0.0 &              1.0 &              0.0 \\
        $\indGrid=1$ &              0.0 &              1.0 &              0.0 \\
        $\indGrid=2$ &              0.0 &              1.0 &              0.0 \\
        \bottomrule
    \end{tabular}}
\end{subtable}
\hfill
\begin{subtable}[h]{\ftabwidth}
    \caption*{$\indAction=2$}
    \resizebox{\ftabwidth}{!}{\begin{tabular}{lrrr}
        \toprule
        {} &  $\indGridNew=0$ &  $\indGridNew=1$ &  $\indGridNew=2$ \\
        \midrule
        $\indGrid=0$ &              0.0 &              1.0 &              0.0 \\
        $\indGrid=1$ &              0.0 &              1.0 &              0.0 \\
        $\indGrid=2$ &              0.0 &              1.0 &              0.0 \\
        \bottomrule
    \end{tabular}}
\end{subtable}
\end{table}

In this problem, the objective function given by Equation~\eqref{eq:duallp1Obj} is
\begin{subequations}
\begin{equation}\label{eq:finite-tiger-objective}
  \begin{split}- x_{0,0,0} &+ 10 x_{0,0,1} - 100 x_{0,0,2} - x_{0,1,0}
   - 45 x_{0,1,1} - 45 x_{0,1,2} - x_{0,2,0} \\ &- 100 x_{0,2,1}
   + 10 x_{0,2,2} - x_{1,0,0} + 10 x_{1,0,1} - 100 x_{1,0,2}
   - x_{1,1,0}\\ &- 45 x_{1,1,1} - 45 x_{1,1,2} - x_{1,2,0}
   - 100 x_{1,2,1} + 10 x_{1,2,2} \\ &+ 10 x_{N0} - x_{N1} + 10 x_{N2}
   \end{split}
\end{equation}
the constraints from Equation~\eqref{eq:duallp1Eqn1} are
\begin{align}\label{eq:finite-tiger-dist}
     x_{0,0,0} + x_{0,0,1} + x_{0,0,2} &= 0\\
     x_{0,1,0} + x_{0,1,1} + x_{0,1,2} &= 1\\
     x_{0,2,0} + x_{0,2,1} + x_{0,2,2} &= 0
\end{align}
the constraints from Equation~\eqref{eq:duallp1Eqn2} are
\begin{gather}
    - x_{0,0,0} - 0.35 x_{0,1,0} + x_{1,0,0} + x_{1,0,1}
   + x_{1,0,2} = 0\\
\resizebox{0.9\textwidth}{!}{$- x_{0,0,1} - x_{0,0,2} - 0.3 x_{0,1,0} - x_{0,1,1}
   - x_{0,1,2} - x_{0,2,1} - x_{0,2,2} + x_{1,1,0} 
   + x_{1,1,1} + x_{1,1,2} = 0$}\\
 - 0.35 x_{0,1,0} - x_{0,2,0} + x_{1,2,0} + x_{1,2,1}
   + x_{1,2,2} = 0
\end{gather}
the constraints from Equation~\eqref{eq:duallp1Eqn3} are
\begin{gather}
         - x_{1,0,0} - 0.35 x_{1,1,0} + x_{N0} = 0 \\
 - x_{1,0,1} - x_{1,0,2} - 0.3 x_{1,1,0} - x_{1,1,1}
   - x_{1,1,2} - x_{1,2,1} - x_{1,2,2} + x_{N1} = 0\\
 - 0.35 x_{1,1,0} - x_{1,2,0} + x_{N2} = 0
\end{gather}
and the budget constraint in Equation~\eqref{eq:duallp1Eqn4} is
\begin{equation}\label{eq:finite-tiger-budget}
    \begin{split}
    2 x_{0,0,0} &+ x_{0,0,1} + x_{0,0,2} + 2 x_{0,1,0}
   + x_{0,1,1} + x_{0,1,2} + 2 x_{0,2,0} \\ & + x_{0,2,1}
   + x_{0,2,2}  + 2 x_{1,0,0} + x_{1,0,1} + x_{1,0,2}
   + 2 x_{1,1,0} \\&+ x_{1,1,1} + x_{1,1,2} + 2 x_{1,2,0}
   + x_{1,2,1} + x_{1,2,2} <= 3
\end{split}
\end{equation}
\end{subequations}
giving the final LP for this finite horizon problem as:
\begin{subequations}
\begin{align}
\max &\quad\eqref{eq:finite-tiger-objective} \\
\sthat&\quad \eqref{eq:finite-tiger-dist}-\eqref{eq:finite-tiger-budget}\\
&\quad\eqref{eq:duallp1Eqn4}
\end{align}
\end{subequations}
The optimal $\varDualLP_{\indTime \indGrid \indAction}$ values obtained by solving this model are given in Table~\ref{tab:finite-tiger-ex-optimal-xtka}.
\begin{table}[ht]
    \centering
    \caption{Finite horizon optimal $\varDualLP_{\indTime \indGrid \indAction}$}
    \label{tab:finite-tiger-ex-optimal-xtka}
    \resizebox{0.75\textwidth}{!}{
    \begin{tabular}{cc}
        \begin{tabular}{crrr}
        \toprule
            $\varDualLP_{0 \indGrid \indAction}$ & $\indAction=0$ & $\indAction=1$ & $\indAction=2$ \\
        \midrule
            $\indGrid=0$ & 0 & 0 & 0 \\
            $\indGrid=1$ & 1 & 0 & 0 \\
            $\indGrid=2$ & 0 & 0 & 0 \\
        \bottomrule
        \end{tabular}
        &
        \begin{tabular}{crrr}
        \toprule
            $\varDualLP_{1 \indGrid \indAction}$ & $\indAction=0$ & $\indAction=1$ & $\indAction=2$ \\
        \midrule
            $\indGrid=0$ & 0 & 0.35 & 0 \\
            $\indGrid=1$ & 0 & 0.30 & 0 \\
            $\indGrid=2$ & 0 & 0 & 0.35 \\
        \bottomrule
        \end{tabular}
    \end{tabular}
    }
\end{table}

\subsection{Infinite horizon CPOMDP for the tiger problem}

We also provide the sample approximate formulation for the infinite horizon version of the tiger problem. 
Here, a discount factor $\parDiscount=0.9$ is chosen. 
Given the problem definition, Table~\ref{tab:finite-tiger-ex-grid-trans-probs} also gives the $\parfGridTr_{\indGrid \indGridNew}^{\indAction}$ values for this infinite horizon formulation. 
This is due to the very small grid size used here for this problem formulation. 
The objective from Equation~\eqref{eq:infinite-lp-obj} is then
\begin{subequations}
\begin{gather}
    - x_{0, 0} + 10 x_{0, 1} - 100 x_{0, 2} - x_{1, 0} - 45 x_{1, 1} - 45 x_{1, 2} - x_{2, 0} - 100 x_{2, 1} + 10 x_{2, 2}\label{eq:inf-tiger-obj}\\
    \intertext{the constraints in Equation~\eqref{eq:infinite-lp-dist-constraint} are}
         0.1 x_{0, 0} + x_{0, 1} + x_{0, 2} - 0.315 x_{1, 0} = 0\label{eq:inf-tiger-dist}\\
         - 0.9 x_{0, 1} - 0.9 x_{0, 2} + 0.73 x_{1, 0} + 0.1 x_{1, 1}  + 0.1 x_{1, 2} - 0.9 x_{2, 1} - 0.9 x_{2, 2} = 1\\
         - 0.315 x_{1, 0} + 0.1 x_{2, 0} + x_{2, 1} + x_{2, 2} = 0\\
    \intertext{and the budget constraint from Equation~\eqref{eq:infinite-lp-budget-constraint} is} 
    2 x_{0, 0} + x_{0, 1} + x_{0, 2} + 2 x_{1, 0} + x_{1, 1}
   + x_{1, 2} + 2 x_{2, 0} + x_{2, 1} + x_{2, 2} <= 11.5\label{eq:inf-tiger-budget}
\end{gather}
\end{subequations}
leading to a final LP for this infinite horizon CPOMDP example as 
\begin{align}
\max &\quad\eqref{eq:inf-tiger-obj} \\
\sthat&\quad \eqref{eq:inf-tiger-dist}-\eqref{eq:inf-tiger-budget}\\
&\quad\eqref{eq:infinite-lp-x-constraint}
\end{align}



\clearpage
\section{Analysis with threshold-type policies}\label{sec:appendix_threshold}
This section provides an application of threshold-type policy constraints for the tiger problem to provide a use case for specific constraints that can be incorporated into the ITLP algorithm.
As illustrated in Figure~\ref{fig:tiger-threshold}, threshold-type policy constraints help establish decision thresholds over the belief states.
The threshold-type policy constraints are imposed based on the stochastic dominance relation between the belief states.
That is, $\bePt^{\ell}$ stochastically dominates $\bePt^{k}$, denoted as $\bePt^{\ell} \succ_s \bePt^{k}$, if $\sum_{i=j} \bePt_i^{\ell} \geq \sum_{i=j} \bePt_i^{k}$ for all $j \in \{0,1,\hdots, |\setState|-1\}$.

\begin{figure}[!ht]
\centering
\tikzset{pblock/.style = {rectangle split, rectangle split horizontal,
                      rectangle split parts=3,
                      rectangle split part fill={red!20,yellow!20,blue!20}, draw, thick, align=center}}
\resizebox{0.75\textwidth}{!}{
\begin{tikzpicture}
  \node[name=s,pblock] (x) {\nodepart[text width=2cm]{one} Open Left
                \nodepart[text width=4cm]{two}Listen
                \nodepart[text width=3cm]{three}Open Right};
\draw[shift=(x.north west)] plot coordinates{(0,0)}
       node[above] {\scriptsize\texttt{$\bePt=[0, 1]$}};
\draw[shift=(x.one split north)] plot coordinates{(0,0)}
       node[above] {\scriptsize\texttt{$\bePt=[\rho_1, 1-\rho_1]$}};
\draw[shift=(x.two split north)] plot coordinates{(0,0)}
       node[above] {\scriptsize\texttt{$\bePt=[\rho_2, 1-\rho_2]$}};
\draw[shift=(x.north east)] plot coordinates{(0,0)}
       node[above] {\scriptsize\texttt{$\bePt=[1, 0]$}};
\end{tikzpicture}
}
\caption{Tiger problem threshold-type policy illustration}
\label{fig:tiger-threshold}
\end{figure}
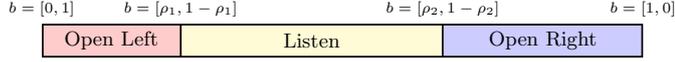

The threshold-type policy constraints for the action $\indAction_1 = \texttt{Open Left}$ can be formulated as
\begin{align}
\label{eq:threshold_tiger}
\theta_{tk1} \leq \theta_{t\ell1}, \quad \forall \grPt^{\ell} \succ_s \grPt^{k}, \forall t
\end{align}
where $\theta_{tka} \in \{0,1\}$ takes value 1 if action $a$ is selected at decision epoch $t$, and take value 0 otherwise (recall that these are the same binary variables that are included to obtain deterministic policies).
For instance, if $g^0 = [0,1]$ and $g^1 = [0.1, 0.9]$, then $\grPt^{0} \succ_s \grPt^{1}$.
It follows that if the optimal action for $g^1$ is $\indAction_1$, then the optimal action for $g^0$ should also be $\indAction_1$.
This is because, if the POMDP policy prescribes taking action $\indAction_1 = \texttt{Open Left}$ for a belief state $g^1$ that indicate a lower likelihood of tiger occupying state $s=\texttt{Tiger Right}$ (i.e., $g_1^1 = 0.9$), then this policy should prescribe the same action to a belief state $g^0$ that is expected to be more suitable for taking action $\indAction_1 = \texttt{Open Left}$ (i.e., because $g_0^1 = 1.0$).
Note that the threshold-type policies can be similarly derived for $\indAction_2 = \texttt{Open Right}$.
Figure~\ref{fig:tiger-threshold-type-policies} provides sample policies obtained by incorporating the threshold-type policy constraints into a finite horizon CPOMDP model with three different budget levels.
As expected, the obtained policies are all of threshold type. 

\begin{figure}[!ht]
    \centering
    \subfloat[Budget: 21.]{\includegraphics[width=0.33\linewidth]{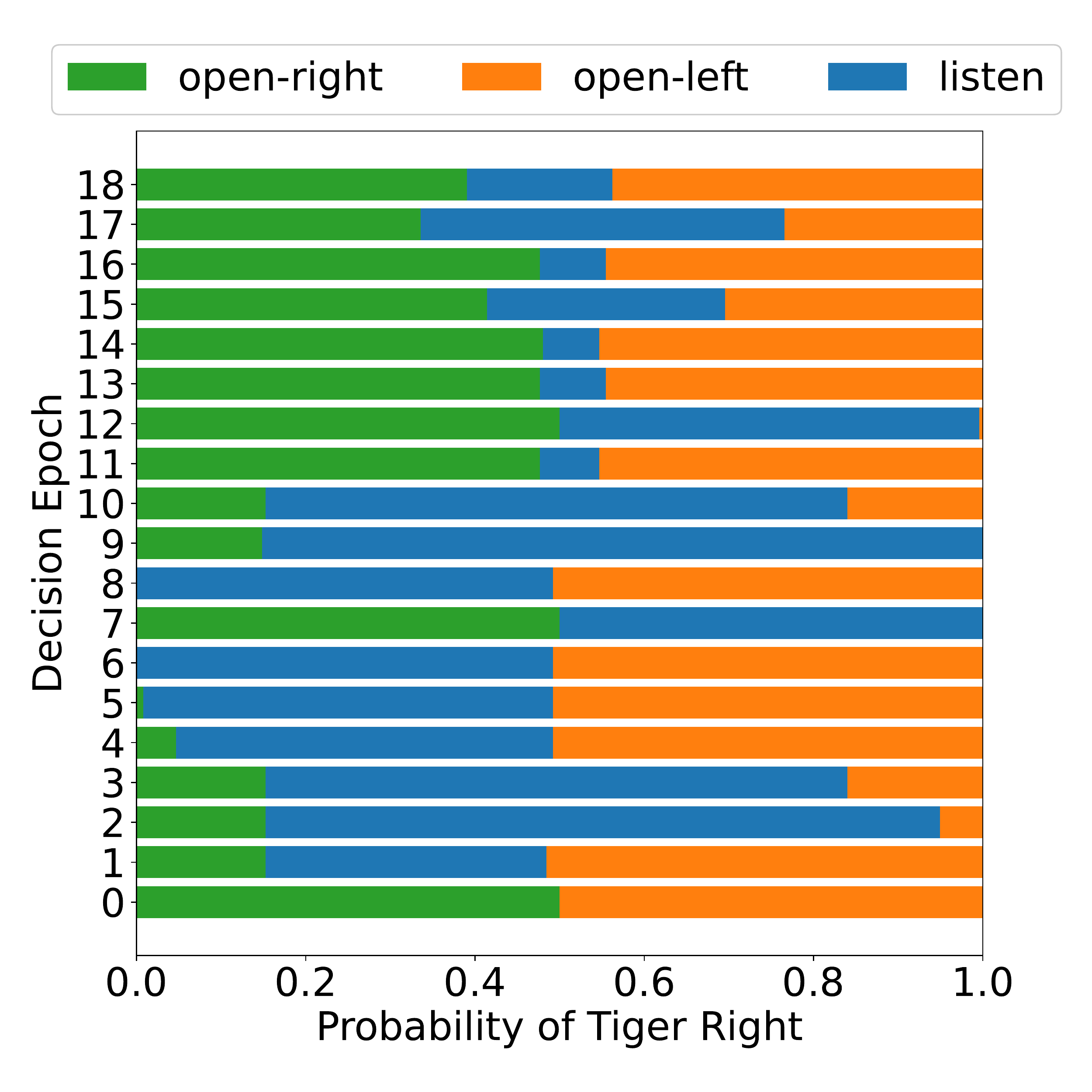}}
    \subfloat[Budget: 25.]{\includegraphics[width=0.33\linewidth]{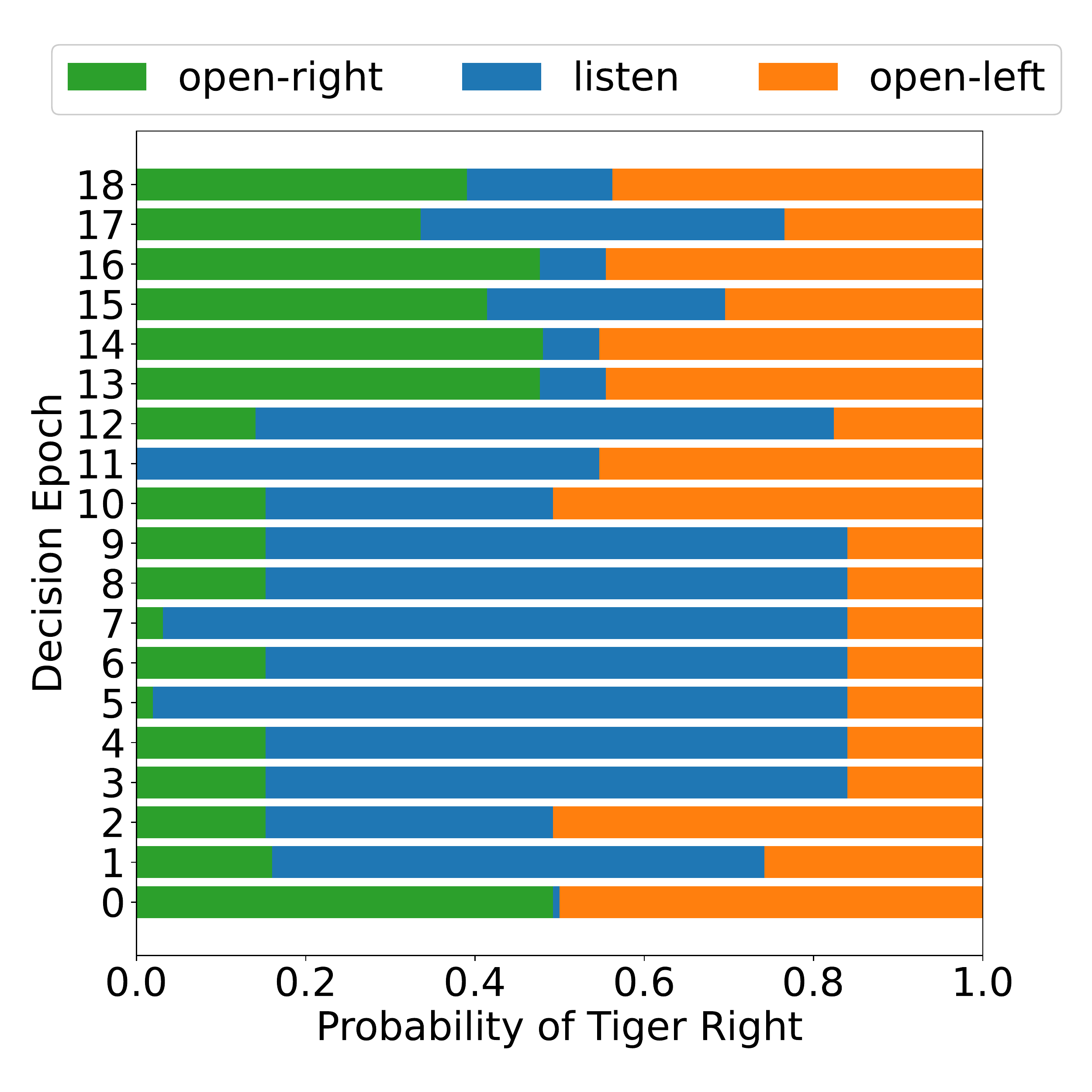}}
    \subfloat[Budget: 50.]{\includegraphics[width=0.33\linewidth]{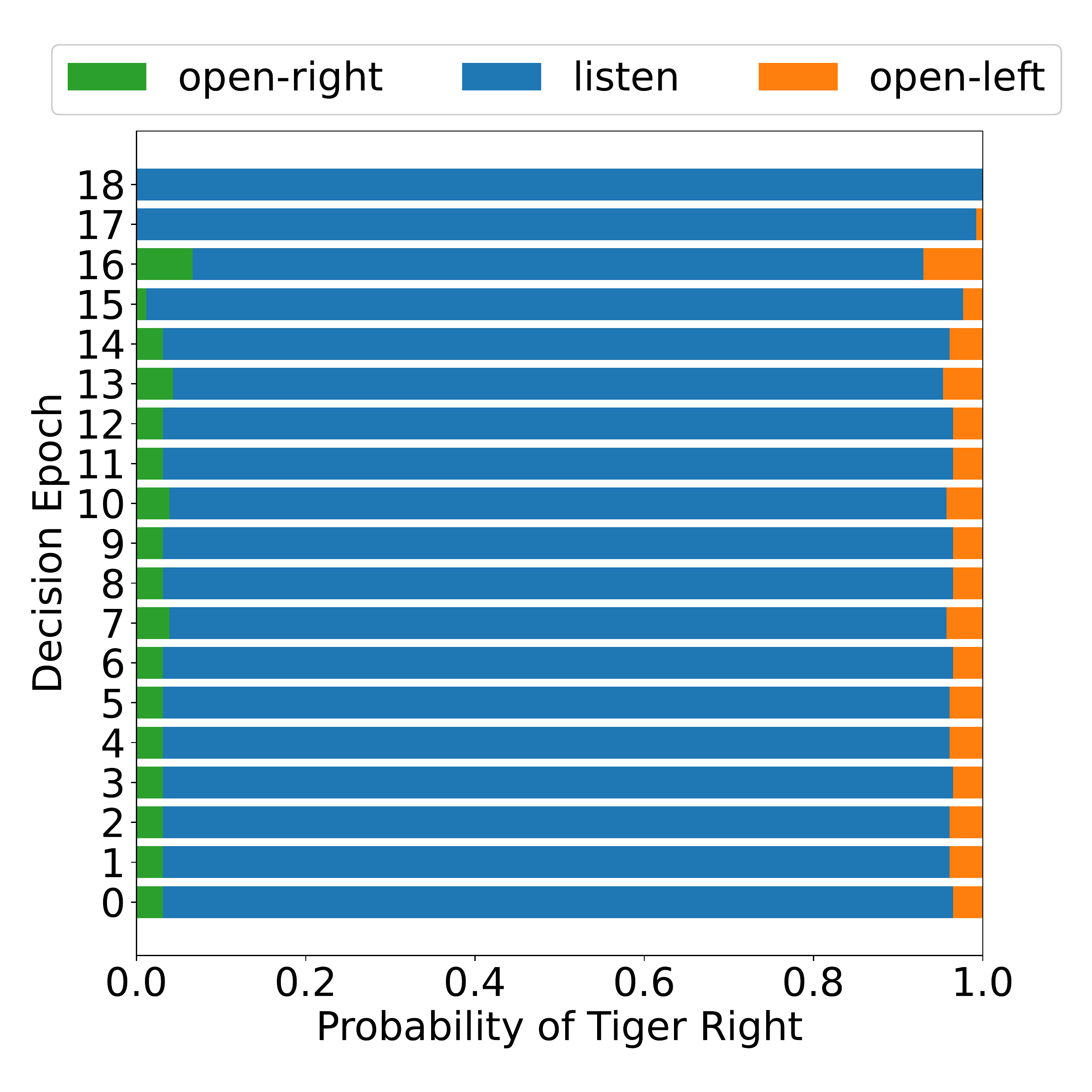}}
    \caption{Tiger problem threshold-type policy samples with $|\setGrid|=250$.}
    \label{fig:tiger-threshold-type-policies}
\end{figure}

Depending on the problem specifications, such threshold-type policy constraints can be formulated and added to the CPOMDP model. However, as their characterization rely on the stochastic ordering of the belief states, not all problems might be suitable for imposing threshold-type policy constraints.
For instance, in the paint problem, it is expected to \texttt{ship} items that are unflawed, unblemished and painted (i.e., $s=\texttt{NFL-NBL-PA}$), however, the ordering of the other three states (i.e., $s\in \{ \texttt{NFL-NBL-NPA}, \texttt{FL-NBL-PA}, \texttt{FL-BL-NPA} \}$) cannot be easily established for this action, making the characterization of threshold-type policies non-trivial for this problem.
Similar observations can be made for the other three actions (i.e., \texttt{paint}, \texttt{inspect}, and \texttt{reject}) as well.

\end{document}